\theoremstyle:=definition,remark,plain\do{%
        \expandafter\g@addto@macro\csname th@\theoremstyle\endcsname{%
            \addtolength\thm@preskip\parskip
            }%
        }
\newtheorem{proposition}{Proposition}
\newtheorem{assumption}{Assumption}[section]
\newtheorem{definition}{Definition}[section]
\newtheorem{theorem}{Theorem}
\theoremstyle{remark}
\newcommand{\bbR}{\mathbb{R}}
\newcommand{\bbE}{\mathbb{E}}
\newcommand{\calK}{\mathcal{K}}
\newcommand{\calO}{\mathcal{O}}
\newcommand{\calH}{\mathcal{H}}
\newcommand{\calP}{\mathcal{P}}
\newcommand{\calX}{\mathcal{X}}
\newcommand{\calZ}{\mathcal{Z}}
\newcommand{\calW}{\mathcal{W}}
\newcommand{\calR}{\mathcal{R}}
\newcommand{\one}{\mathds{1}}
\newcommand{\bW}{\mathbf{W}}
\newcommand{\bz}{\mathbf{z}}
\title{Joint Online Learning and Decision-making via Dual Mirror Descent}
\author{Alfonso Lobos$^1$, Paul Grigas$^1$, Zheng Wen$^2$}
\date{%
    $^1$University of California, Berkeley\\%
    $^2$Google DeepMind, Mountain view, California\\[2ex]%
    \today
}
\begin{document}
\maketitle

\begin{abstract}



We consider an online revenue maximization problem over a finite time horizon subject to lower and upper bounds on cost. At each period, an agent receives a context vector sampled i.i.d. from an unknown distribution and needs to make a decision adaptively. The revenue and cost functions depend on the context vector as well as some fixed but possibly unknown parameter vector to be learned. We propose a novel offline benchmark and a new algorithm that mixes an online dual mirror descent scheme with a generic parameter learning process.  When the parameter vector is known, we demonstrate an $O(\sqrt{T})$ regret result as well an $O(\sqrt{T})$ bound on the possible constraint violations. When the parameter is not known and must be learned, we demonstrate that the regret and constraint violations are the sums of the previous $O(\sqrt{T})$ terms plus terms that directly depend on the convergence of the learning process.
\end{abstract}

\section{Introduction \label{Sec:Intro}}

We consider an online revenue maximization problem over a finite time horizon, subject to multiple lower and upper bound cost constraints. At each time period, an agent receives a context vector and needs to make a real-time decision. After making a decision, the agent earns some revenue and also incurs multiple costs, which may alternatively be interpreted as the consumption of multiple resources. Unlike the typical situation in online optimization and learning (see, e.g., \citep{hazan2019introduction}), the agent has estimates of the revenue and cost functions available before making a decision. These estimates are updated sequentially via an exogenous learning process. Thus, there are three major challenges in this online learning and decision-making environment:  \textit{(i)} balancing the trade-off between revenue earned today and ensuring that we do not incur too many costs too early, \textit{(ii)} ensuring that enough costs are incurred to meet the lower bound constraints over the full time horizon, and \textit{(iii)} understanding the effects of the parameter learning process.

Examples of this online learning and decision-making setup occur in revenue management, online advertising, and online recommendation.
In revenue management, pricing and allocation decisions for goods and services with a limited supply need to be made in real-time as customer arrivals occur \citep{bertsimas2003revenue,boyd2003revenue}. This setup is also prevalent in online advertising, for example, in the case of a budget-constrained advertiser who bids in real-time auctions in order to acquire valuable impressions.  Importantly, each arrival typically has associated a feature vector to it, for example, the cookie history of a user to which an ad can be shown. How that feature may relate to useful quantities, e.g., the probability of a user clicking an ad, may need to be learned. Finally, our setting considers lower bounds on cost since in many industries minimum production or marketing goals are desired.

\subsection{Contributions} 
Our contributions may be summarized as follows:
\begin{enumerate}
    \item We propose a novel family of algorithms to tackle a joint online learning and decision making problem. Our setting considers both lower and upper bound constraints on cost functions and does not require strong assumptions over the revenue and cost functions used, such as convexity. Our work can be understood as an extension of an online optimization problem in which we may also need to learn a generic parameter. Furthermore, our work can be considered as in a 1-lookup ahead setting as the agent can observe the current context vector before taking a decision.
    \item We propose a novel benchmark to compare the regret of our algorithm. Our benchmark is considerably stricter in comparison to the expected best optimal  solution in hindsight. Our benchmark is specially well suited to handle settings with ``infeasible sequence of context vector arrivals" for which it is impossible to satisfy the cost constraints. We construct a dual problem which upper bounds the benchmark and we demonstrate how to efficiently obtain stochastic subgradients for it. 
    \item In the case when no ``generic parameter learning'' is needed, we prove that the regret of our algorithm is upper bounded by $\mathcal{O}(\sqrt{T})$ under a Slater condition. Given the generic setup of our problem, this is a contribution on the field of online optimization. In the general case, our regret decomposes between terms upper bounded by $\mathcal{O}(\sqrt{T})$ and terms coming from the convergence of the generic parameter learning.
    \item We prove that the solution given by our algorithm may violate any given lower bound constraint by at most $O(\sqrt{T})$ in the online optimization case, while upper bounds are always satisfied by construction. Therefore, our methodology is asymptotically feasible in the online optimization case \citep{liakopoulos2019cautious}.
    \item We demonstrate that our algorithm is effective and robust as compared to a heuristic approach in a bidding and allocation problem with no generic parameter learning in online advertising. Additionally, we study the effects of different generic parameter learning strategies in a linear contextual bandits problem with bounds on the number of actions taken.
\end{enumerate}

\subsection{Related Work}

The problem of online revenue maximization under feasibility constraints has been mostly studied under the lens of online convex optimization \citep{hazan2019introduction}. While first studied on resource allocation problems under linear constraints \citep{mehta2007adwords,devanur2011near}, arbitrary convex revenue and cost functions are used today. Of major importance is the nature of the data arrivals. Typically, data has been assumed to be received in an adversarial \citep{devanur2011near,chen2017online} or an i.i.d. manner \citep{wei2020online,balseiro2020dual}, with the data being sampled from an unknown distribution in the latter case. Subgradient methods based on primal-dual schemes have gained attraction \citep{devanur2011near, jenatton2016adaptive, chen2017online, yuan2018online} as they avoid taking expensive projection iterations by penalizing the constraints through duality (either Lagrangian or Fenchel). Consequently, it is important to study both regret and the worst possible constraint violation level.

In the adversarial setting, regret is typically measured against the best-static decision in hindsight and algorithms achieving $O(\sqrt{T})$ regret, which is optimal in the adversarial setting, and different level of constraint violations levels have been achieved \citep{mahdavi2012trading, jenatton2016adaptive,chen2017online,yuan2018online}. On the i.i.d. setting and under linear constraints, \citet{balseiro2020dual} obtains an $O(\sqrt{T})$ regret bound and no constraint violation by algorithm construction (since they consider linear constraints with no lower bounds). Since they consider a 1-lookup ahead setting with i.i.d.\ arrivals, \citet{balseiro2020dual} use the best dynamic solution in hindsight as a benchmark, which is a considerably stricter benchmark than the commonly used best static solution. Our joint online learning and optimization model and algorithmic strategy builds upon the online optimization model and dual Mirror Descent approach for resource allocation presented by \citet{balseiro2020dual}. Note that our first contribution, the incorporation of arbitrary revenue and cost functions, was simultaneously obtained by the same set of authors on \citet{balseiro2020best}.

A stream of literature studying a similar problem to ours is ``Bandits with Knapsacks'' (BwK) and extensions of it. In BwK, an agent operates over $T$ periods of time. At each period, the agent chooses an action, also known as an arm, from a \textit{finite} set of possible action and observes a reward and a cost vector. As us, the agent would like to satisfy global cost constraints. BwK is studied both in an adversarial and i.i.d. settings, but here we only emphasize on the latter (see \citet{immorlica2019adversarial} for the adversarial case). Assuming concave reward functions, \citet{agrawal2014bandits} proposes an Upper-Confidence Bound type of algorithms which achieves sublinear rates of regret and constraint violations. \citet{badanidiyuru2018bandits} proposes a primal-dual algorithm to solve BwK with has a sublinear regret, and by algorithm construction, their cost constraints are always satisfied. Our job extends on this literature stream in the following ways. 1. We allow an \textit{arbitrary} action space and reward and cost functions. 2. Our proposed benchmark is stricter than the best expected dynamic policy. 3. The novel joint learning and decision-making setting proposed here.



\subsection{Notation}

We use $\bbR_+^N := \{ x \ge 0: x \in \bbR^N \}$, $\bbR_-^N := \{ x \le 0: x \in \bbR^N \}$, and $[N] := \{ 1,\dots, N \}$ with $N$ being any integer. For any $x \in \bbR^N$ and $y \in \bbR^N$, $x \odot y := (x_1 y_1, \dots, x_N y_N)$ and $x^T y := \sum_{i=1}^n x_i y_i$ representing the element-wise and dot products between vectors of same dimension. We use $x \in A$ to represent that $x$ belongs to set $A$, and $(x^1, \dots, x^N) \in A^1 \times \dots \times A^N$ represents $x^i \in A^i$ for all $i \in [n]$. We reserve capital calligraphic letters to denote sets.  For any $x \in \bbR^N$, $[x]_+ := (\max \{x_1, 0 \}, \dots, \max \{x_N, 0 \})$ and $\mathds{1}(x \in A) := 1$ if $x \in A$ and $0$ otherwise. We use $\lVert \cdot \rVert$ to represent a norm operator, and in particular, for any $x \in \bbR^N$ we use $\lVert x \rVert_1 := \sum_{i=1}^N \lvert x_i \rvert$, $\lVert x \rVert_2 := \sqrt{\sum_{i=1}^N x_i^2}$, and $\lVert x \rVert_{\infty} = \max_{i \in [N]}$ $\lvert x_i \rvert$. For any real-valued convex function $f: \calX \rightarrow \bbR$, we say that $g$ is a subgradient of $f(\cdot)$ at $x \in \calX$ if $f(y) \ge f(x) + g^T (y-x)$ holds for all $y \in \calX$, and use $\partial f(x)$ to denote the set of subgradients of $f(\cdot)$ at $x$.

\section{Preliminaries and Algorithm}
We are interested in a real-time decision-making problem over a time horizon of length $T$ involving three objects: {\em (i)} $z^t \in \calZ \subseteq \bbR^d$, the decision to be made at time $t$, {\em (ii)} $\theta^* \in \Theta \subseteq \bbR^p$, a possibly unknown parameter vector describing the revenue and cost functions that may need to be learned, and {\em (iii)} $w^t \in \calW \subseteq \bbR^m$, a context vector received at prior to making a decision at time $t$. These three objects describe the revenue and cost functions that are central to the online decision-making problem. In particular, let $f(\cdot;\cdot,\cdot): \calZ \times \Theta \times \calW \rightarrow \bbR$ denote the revenue function and let $c(\cdot;\cdot,\cdot): \calZ \times \Theta \times \calW \rightarrow \bbR^K$ denote the collection of $K$ different cost functions. We assume that these functions are bounded, namely for the true revenue function it holds that $\sup_{z \in \calZ, w \in \calW} f(z;\theta^*, w) \le \bar{f}$ with $\bar{f}>0$ and for the cost functions it holds that $\sup_{z \in \calZ, \theta \in \Theta, w \in \calW} \lVert c(z;\theta, w) \rVert_\infty  \le \bar{C}$ with $\bar{C}>0$.

At each time period $t$, first $w^t$ is revealed to the decision maker and is assumed to be drawn \textrm{i.i.d} from an unknown distribution $\calP$ over $\calW$.  For example, if $\calW$ is a finite set, then $w^t$ could represent the scenario being revealed at time $t$. We assume that once the decision maker observes a context vector $w^t \in \calW$, then it also observes or otherwise have knowledge of the parametric forms of revenue and cost functions $f(\cdot;\cdot, w^t): \calZ \times \Theta \rightarrow \bbR $ and $c(\cdot;\cdot, w^t): \calZ \times \Theta \rightarrow \bbR^K$.
Although the true parameter $\theta^\ast$ may be unknown to the decision maker at time $t$, whenever a decision $z^t \in \calZ$ is made the revenue earned is equal to $f(z^t, \theta^\ast, w^t)$ and the vector of cost values incurred is equal to $c(z^t, \theta^\ast, w^t)$. 

In an ideal but unrealistic situation, the decision planner would be able to observe the sequence $(w^1,\ldots,w^T)$ of future context vector arrivals and would set the decision sequence $(z^1, \ldots, z^T)$ by solving the full observability (or hindsight) problem:
\begin{align}
    (O): \ \ &\underset{(z^1, \ldots, z^T) \in \calZ^T}{\max} \ \ \sum_{t = 1}^T f(z^t;\theta^*,w^t)   \nonumber \\
    &\mathrm{ s.t.} \ \ T \alpha \odot b \le \sum_{t = 1}^T c(z^t;\theta^*,w^t) \le Tb   \label{Eq:OrigProblem}
\end{align}
\noindent where $b \in \bbR_{++}^K$, and $\alpha \in [-1,1)^K \cup \{ - \infty \}$ with $\alpha_k = -\infty$ meaning that no lower bounds are present for coordinate $k$. Define $\underline{b} := \min_{k \in [K]}b_k$ and $\bar{b} := \max_{k \in [K]}b_k$, and we assume that $\underline{b} > 0$.
The vector $b$ can be thought as a resource or budget vector proportional to each period. Then, \eqref{Eq:OrigProblem} is a revenue maximization problem over the time horizon $T$ with lower and upper cost constraints. Setting $-1$ as the lower bound for $\alpha_k$ for all $k \in [K]$ is an arbitrary choice only affecting some of the constants in the regret bounds we prove. 

Before providing more details on the dynamics of the problem and our proposed algorithm, we introduce a novel benchmark to evaluate the performance/regret of our algorithm. The primary need for a new benchmark in our context is that the generality of our problem leads to feasibility issues.
Indeed, for some combinations of context vector arrivals, problem \eqref{Eq:OrigProblem} may be infeasible due the presence of both lower and upper bound constraints as well as the fact that the costs functions are generic. 
We now define an offline benchmark as follows. A natural benchmark to consider is the \emph{expected} optimal value of \eqref{Eq:OrigProblem}. 
However, as long as there is any positive probability of \eqref{Eq:OrigProblem} being infeasible, then this benchmark will be $-\infty$, which will lead to trivial regret bounds. Thus, to avoid such trivialities, we consider a benchmark that interpolates between the expected optimal value of \eqref{Eq:OrigProblem} and a deterministic problem that replaces the random revenue and cost functions with their expected values. In particular, let $\gamma \in [0,1]$ denote this interpolation parameter. For any $z \in \calZ$, $\theta \in \Theta$, $w^\prime \in \calW$, $w \sim \calP$, and $\gamma \in [0,1]$ we define:
\begin{align*}
\mathrm{rev}(z;\theta,w',\gamma) &:= (1-\gamma)f(z;\theta,w')  + \gamma\mathbb{E}_{\calP}[f(z;\theta,w)] \\
\mathrm{cost}(z;\theta,w',\gamma) &:= (1-\gamma) c(z;\theta,w') + \gamma \mathbb{E}_{\calP}[c(z;\theta,w)].
\end{align*}
Let $\calP^T := \calP \times \dots \times \calP$ denote a product distribution of length $T$, i.e., the distribution of $(w^1, \ldots, w^T)$. 
Now, for any $\gamma \in [0,1]$, let us define
\begin{align}
     \mathrm{OPT}(\calP,\gamma) : = \nonumber 
     \mathbb{E}_{\calP^T}
\left\lbrack
\begin{array}{c l}	
     \underset{z^t \in \calZ: t \in [T]}{\max} \ \ \sum_{t=1}^T \mathrm{rev}(z^t;\theta^*,w^t,\gamma)  &  \\
     \textrm{s.t.} \ \   T \alpha \odot b \le \sum_{t=1}^T \mathrm{cost}(z^t;\theta^*,w^t,\gamma) \le Tb 
\end{array}
\right\rbrack \nonumber 
\end{align}
and let us further define
\begin{equation}
    \mathrm{OPT}(\calP) := \max_{\gamma \in [0,1]}  \mathrm{OPT}(\calP,\gamma). \label{Eq:OptOffline}
\end{equation}
Note that $\mathrm{OPT}(\calP,0)$ is exactly the expected optimal value of the hindsight problem \eqref{Eq:OrigProblem}. 
On the other hand, $\mathrm{OPT}(\calP,1)$ corresponds to a deterministic approximation of \eqref{Eq:OrigProblem} that replaces all random quantities with their expectations and is typically a feasible problem.
Then, we can understand $\gamma \in [0,1]$ as an interpolation parameter between the more difficult hindsight problem $\mathrm{OPT}(\calP,0)$ and the expectation problem $\mathrm{OPT}(\calP,1)$. Importantly, the benchmark we consider is $\mathrm{OPT}(\calP)$, which considers the \emph{worst case} between these two extremes. It is possible to have $\mathrm{OPT}(\calP) = \mathrm{OPT}(\calP,0)$, $\mathrm{OPT}(\calP) = \mathrm{OPT}(\calP,1)$, $\mathrm{OPT}(\calP) = \mathrm{OPT}(\calP,\gamma)$ for some $\gamma \in (0,1)$, and $\mathrm{OPT}(\calP) = -\infty$. It is also possible to have a unique $\gamma$ that maximizes $\mathrm{OPT}(\calP,\gamma)$ as well as infinitely many such maximizers. Examples of all of these possibilities are included in the supplementary materials. 

\subsection{Joint Learning and Decision-making Dynamics and Regret Definition}
Now we describe the dynamics of our joint online learning and decision-making problem as well as a generic ``algorithmic scheme.'' In Section \ref{sec:dual_md}, we give a complete algorithm after building up the machinery of dual mirror descent.
Let $\mathcal{I}^t := (z^t, \theta^t, w^t, f^t(z^t;\theta^\ast,w^t), c(z^t;\theta^*,w^t))$ denote the information obtained during period $t$, and let $\mathcal{H}^t := (\mathcal{I}^1, \ldots, \mathcal{I}^T)$ denote the complete history up until the end of period $t$. Note that it is assumed that the decision planner observes the exact incurred cost value vector $c(z^t;\theta^*,w^t)$, but there is a possibility of including additional randomness in the observed revenue. In particular, the observed revenue $f^t(z^t;\theta^*,w^t)$ satisfies $f^t(z^t;\theta^*,w^t) = f(z^t;\theta^*,w^t) + \xi_t$ where $\xi_t$ is a mean zero random variable that is allowed to depend on $w^t$ but is independent of everything else.  

Let $A_{\theta}$ refer to a generic learning algorithm and let $A_{z}$ refer to a generic decision-making algorithm. Then, at any time period $t$, the decision planner sets
\begin{align}
    & \theta^t = A_{\theta}\left(\mathcal{H}^{t-1} \right) \nonumber, \\
    & z^t = A_{z}\left(f(\cdot;\theta^t,w^t), c(\cdot;\theta^t,w^t), \mathcal{H}^{t-1} \right) \label{Eq:Calc_zt_thetat}
\end{align}
We refer to $(A_{z}, A_{\theta})$ as $A$ when no confusion is possible. Note that an important special case is when $A_{\theta}$ outputs $\theta^\ast$ for all inputs, which is the case where $\theta^\ast$ is known.
Algorithm \ref{Alg:AbsAlgorithm}, which alternates between an online learning step using $A_\theta$ and an online decision-making step using $A_z$, specifies the precise sequence of events when using the generic algorithm $A$.
Recall that $\bar{C} := \sup_{(z,\theta,w) \in \calZ \times \Theta \times \calW}$ $ \lVert c(z;\theta,w) \rVert_{\infty}$, which is a constant that we will use as the minimum allowable remaining cost budget. For simplicity we assume that the constant $\bar{C}$ is available although we can easily replace it with an available upper bound.

\floatname{algorithm}{Algorithm} 
\begin{algorithm} 
\caption{Generic Online Learning and Decision-making Algorithmic Scheme} \label{Alg:AbsAlgorithm}
\begin{algorithmic}
\STATE {\textbf{Input:}} Initial estimate $\theta^1 \in \Theta$, and remaining cost budget vector $b^1 \gets T b$.
\FOR{$t = 1,\dots,T$}
\STATE 1. Update $\theta^t \gets A_{\theta}\left(\mathcal{H}^{t-1} \right)$. 
\STATE 2. Receive $w^t \in \mathcal{W}$, which is assumed to be drawn from an unknown distribution $\calP$ and is independent of $\calH^{t-1}$.
\STATE 3. Set $z^t \gets A_{z}\left(f(\cdot;\theta^t,w^t), c(\cdot;\theta^t,w^t), \mathcal{H}^{t-1} \right)$.
\STATE 4. Update remaining cost budget $b^{t+1} \gets b^t - c(z^t;\theta^*,w^t)$, and earn revenue $f^t(z^t;\theta^*,w^t)$. 
\STATE 5. If $b_k^{t+1}<\bar{C}$ for any $k \in [K]$, \textbf{break}.
\ENDFOR
\end{algorithmic}
\end{algorithm}
Note that Steps 4.\ and 5.\ of Algorithm \ref{Alg:AbsAlgorithm} ensure that the total cost incurred is always less than or equal to $b T$, which ensures that the upper bound constraints in \eqref{Eq:OrigProblem} are always satisfied, while there is a chance that some lower bound constraints may not be satisfied.
These steps make our later theoretical analysis simpler, but less conservative approaches can be used, for example allowing the algorithm to exceed $bT$ once.

Define $R(A|\calP) = \mathbb{E}_{\calP^T} \left\lbrack \sum_{t=1}^T f(z^t;\theta^*,w^t) \right\rbrack$ as the expected revenue of algorithm $A$ over distribution $\calP^T$, where $z^t$ is computed as in \eqref{Eq:Calc_zt_thetat}.  We define the regret of algorithm $A$ as $\mathrm{Regret}(A|\calP) :=$ $\mathrm{OPT}(\calP) - R(A|\calP)$.
Since the probability distribution $\calP$ is unknown to the decision maker, our goal is to design an algorithm $A$ that works well for any distribution $\calP$. That is, we would like to obtain a good distribution free regret bound.

\subsection{Dual Problem and Dual Mirror Descent Algorithm}\label{sec:dual_md}
We now consider a Lagrangian dual approach that will naturally lead to a dual mirror descent algorithm. Let $\lambda \in \bbR^K$ denote a vector of dual variables, and we define the set of feasible dual variables as $\Lambda := \{ \lambda \in \bbR^K:$ $ \lambda_k \ge 0 $ $\textrm{for all}$ $k$ $\textrm{with}$ $\alpha_k = -\infty\}$. For any triplet $(\lambda,\theta, w) \in \Lambda \times \Theta \times \calW$ define
 \begin{align*}
    \varphi(\lambda;\theta,w) & := \underset{z \in \calZ}{\max} \,   f(z;\theta,w) - \lambda^T c(z;\theta,w) \\
    z(\lambda;\theta,w) & :\in \arg\max_{z \in \calZ} f(z;\theta,w) - \lambda^T c(z;\theta,w),
\end{align*} 
and for any $(\lambda, \theta) \in \Lambda \times \Theta$ define
\begin{align*}
    p(\lambda) & : =  \sum_{k \in {[K]}} b_k ([\lambda_k]_+ - \alpha_k [-\lambda_k]_+) \\
    D(\lambda;\theta) & := \mathbb{E}_\calP[\varphi(\lambda;\theta,w)] + p(\lambda).
\end{align*} 
This works assumes that $z(\lambda;\theta,w)$ exists and can be efficiently computed for any $(\lambda, \theta, w) \in (\Lambda, \Theta, \calW)$. Furthermore, in case there are multiple optimal solutions corresponding to $\varphi(\lambda;\theta,w)$ we assume that the subroutine for computing $z(\lambda;\theta,w)$ breaks ties in a deterministic manner.
We call $D(\cdot;\theta)$ the dual function given parameters $\theta$, which is a key component of the analysis and algorithms proposed in this work. In particular, we first demonstrate in Proposition \ref{Pro:WeakDuality} that $D(\cdot;\theta^*)$ can be used to obtain an upper bound on $\mathrm{OPT}(\calP)$.

\begin{proposition}\label{Pro:WeakDuality}
For any $\lambda \in \Lambda$, it holds that $\mathrm{OPT}(\calP)$ $\le$ $T D(\lambda;\theta^*)$.
\end{proposition}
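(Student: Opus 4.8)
The plan is to establish weak duality by a standard two-sided Lagrangian relaxation, carried out pointwise in the realization of the arrivals and then pushed through the expectation. I would fix an arbitrary $\lambda \in \Lambda$ and an arbitrary interpolation parameter $\gamma \in [0,1]$, show that $\mathrm{OPT}(\calP,\gamma) \le T D(\lambda;\theta^*)$, and then take the maximum over $\gamma$ to conclude, since $\mathrm{OPT}(\calP) = \max_{\gamma}\mathrm{OPT}(\calP,\gamma)$.

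First I would condition on a fixed realization $(w^1,\ldots,w^T)$ and consider any decision sequence $(z^1,\ldots,z^T)$ feasible for the inner problem defining $\mathrm{OPT}(\calP,\gamma)$. Writing $\lambda = [\lambda]_+ - [-\lambda]_+$ coordinatewise, I would add to the objective the two nonnegative slack penalties $[\lambda]_+^T(Tb - \sum_t \mathrm{cost}(z^t;\theta^*,w^t,\gamma))$ and $[-\lambda]_+^T(\sum_t \mathrm{cost}(z^t;\theta^*,w^t,\gamma) - T\alpha \odot b)$. Both are nonnegative at any feasible point (the first by the upper bound $\sum_t \mathrm{cost} \le Tb$, the second by the lower bound $T\alpha \odot b \le \sum_t \mathrm{cost}$), so adding them only increases the objective. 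Collecting the cost terms yields $-\lambda^T\sum_t \mathrm{cost}$, while the constant terms combine to exactly $Tp(\lambda) = T\sum_{k} b_k([\lambda_k]_+ - \alpha_k[-\lambda_k]_+)$.

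The key place where the definition of $\Lambda$ enters is the coordinates with $\alpha_k = -\infty$: there the lower bound constraint is vacuous, so its slack is $+\infty$ and it cannot be penalized with a positive multiplier; requiring $\lambda_k \ge 0$ (hence $[-\lambda_k]_+ = 0$) is precisely what keeps the relaxation valid, and $\alpha_k[-\lambda_k]_+$ is read as $0$. After the relaxation I would split each summand $\mathrm{rev}(z^t;\theta^*,w^t,\gamma) - \lambda^T\mathrm{cost}(z^t;\theta^*,w^t,\gamma)$ into its $(1-\gamma)$ and $\gamma$ pieces via the definitions of $\mathrm{rev}$ and $\mathrm{cost}$, and bound each piece by $\varphi$: the realized piece by $\varphi(\lambda;\theta^*,w^t)$ through the inner maximization, and the expectation piece by $\mathbb{E}_{\calP}[\varphi(\lambda;\theta^*,w)]$ since $f(z;\theta^*,w)-\lambda^T c(z;\theta^*,w) \le \varphi(\lambda;\theta^*,w)$ holds $\calP$-almost surely for each fixed $z$. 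Because $\gamma \in [0,1]$, both coefficients are nonnegative, so these bounds are legitimate.

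Summing over $t$ gives, pointwise in the realization, $\sum_t \mathrm{rev}(z^t;\theta^*,w^t,\gamma) \le (1-\gamma)\sum_t \varphi(\lambda;\theta^*,w^t) + \gamma T\,\mathbb{E}_{\calP}[\varphi(\lambda;\theta^*,w)] + Tp(\lambda)$; as the right-hand side is independent of the $z^t$, it also upper bounds the inner maximum, and the infeasible case is automatic since the inner value is then $-\infty$. Taking $\mathbb{E}_{\calP^T}$ and using that the $w^t$ are i.i.d.\ (so $\mathbb{E}_{\calP^T}[\sum_t \varphi(\lambda;\theta^*,w^t)] = T\,\mathbb{E}_{\calP}[\varphi(\lambda;\theta^*,w)]$) collapses the $(1-\gamma)$ and $\gamma$ contributions into $T\,\mathbb{E}_{\calP}[\varphi(\lambda;\theta^*,w)]$, yielding $\mathrm{OPT}(\calP,\gamma) \le T\,\mathbb{E}_{\calP}[\varphi(\lambda;\theta^*,w)] + Tp(\lambda) = TD(\lambda;\theta^*)$, and maximizing over $\gamma$ finishes the proof. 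The main obstacle I anticipate is purely bookkeeping: arranging the two-sided relaxation and the sign conventions so the constants aggregate to exactly $Tp(\lambda)$ and the $\Lambda$ restriction is invoked correctly on the $\alpha_k=-\infty$ coordinates.
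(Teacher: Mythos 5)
Your proof is correct and follows essentially the same route as the paper: a two-sided Lagrangian relaxation of the constraints in $\mathrm{OPT}(\calP,\gamma)$ producing the $Tp(\lambda)$ term, bounding the $(1-\gamma)$ and $\gamma$ pieces by $\varphi(\lambda;\theta^*,\cdot)$ and $\mathbb{E}_{\calP}[\varphi(\lambda;\theta^*,w)]$ respectively, collapsing via the i.i.d.\ assumption, and maximizing over $\gamma$. Your write-up is in fact slightly more careful than the paper's at two points it glosses over --- the explicit $[\lambda]_+/[-\lambda]_+$ bookkeeping that aggregates the constants to exactly $Tp(\lambda)$, and the role of the restriction $\lambda_k \ge 0$ on coordinates with $\alpha_k = -\infty$ --- but these are presentational refinements of the same argument, not a different proof.
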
 

Next, Proposition \ref{Pro:StochSubg} demonstrates that a stochastic estimate of a subgradient of $D(\cdot;\theta)$ can be easily obtained during the sequence of events described in Algorithm \ref{Alg:AbsAlgorithm}.

\begin{proposition}\label{Pro:StochSubg}
Let $\lambda \in \Lambda$, $\theta \in \Theta$, and $w \in \calW$ be given. Define $\tilde{g}(\lambda;\theta,w) \in \bbR^K$ by $\tilde{g}_k(\lambda;\theta,w) := -c_k(z(\lambda;\theta,w);\theta,w) + b_k \left(\one(\lambda_k \ge 0) + \alpha_k\one(\lambda_k < 0) \right)$ for all $k \in [K]$. Then, if $w \sim \calP$, it holds that $\tilde{g}(\lambda;\theta,w)$ is a stochastic subgradient estimate of $D(\cdot;\theta)$ at $\lambda$, i.e., $ \mathbb{E}_\calP[\tilde{g}(\lambda;\theta,w)] \in \partial_{\lambda} D(\lambda;\theta)$.
\end{proposition}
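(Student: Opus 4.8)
The plan is to exploit the additive structure $D(\lambda;\theta) = \mathbb{E}_\calP[\varphi(\lambda;\theta,w)] + p(\lambda)$, treating each summand as a convex function of $\lambda$ and exhibiting an explicit subgradient for each. The claimed estimate splits exactly along this decomposition: the term $-c(z(\lambda;\theta,w);\theta,w)$ will be a subgradient of $\varphi(\cdot;\theta,w)$, while the term $b_k(\one(\lambda_k \ge 0) + \alpha_k \one(\lambda_k < 0))$ will be a subgradient of $p$. I would then combine these, taking expectations where needed, and invoke that the subdifferential of a sum of convex functions contains the sum of the subdifferentials.

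First I would handle $\varphi(\cdot;\theta,w)$. For fixed $(z,\theta,w)$ the map $\lambda \mapsto f(z;\theta,w) - \lambda^T c(z;\theta,w)$ is affine with gradient $-c(z;\theta,w)$, so $\varphi(\cdot;\theta,w)$ is a pointwise maximum of affine functions and hence convex. Writing $z^\ast := z(\lambda;\theta,w)$ for a maximizer, for any $\lambda' \in \Lambda$ we have $\varphi(\lambda';\theta,w) \ge f(z^\ast;\theta,w) - (\lambda')^T c(z^\ast;\theta,w)$, while $\varphi(\lambda;\theta,w) = f(z^\ast;\theta,w) - \lambda^T c(z^\ast;\theta,w)$ by optimality of $z^\ast$. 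Subtracting yields the subgradient inequality $\varphi(\lambda';\theta,w) - \varphi(\lambda;\theta,w) \ge (-c(z^\ast;\theta,w))^T(\lambda'-\lambda)$, i.e.\ $-c(z(\lambda;\theta,w);\theta,w) \in \partial_\lambda \varphi(\lambda;\theta,w)$ (this is Danskin's theorem in its elementary form for a supremum of affine functions).

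Next I would verify the $p$-term coordinate-wise, since $p(\lambda) = \sum_k p_k(\lambda_k)$ is separable with $p_k(\lambda_k) = b_k([\lambda_k]_+ - \alpha_k[-\lambda_k]_+)$. Each $p_k$ is piecewise linear with slope $b_k$ for $\lambda_k > 0$ and slope $b_k\alpha_k$ for $\lambda_k < 0$; since $b_k > 0$ and $\alpha_k < 1$ the right slope exceeds the left slope, so $p_k$ is convex, and its subdifferential at $\lambda_k = 0$ is the interval $[b_k\alpha_k, b_k]$. A direct check in the three cases $\lambda_k>0$, $\lambda_k=0$, $\lambda_k<0$ confirms that $b_k(\one(\lambda_k \ge 0) + \alpha_k\one(\lambda_k < 0))$ lies in $\partial p_k(\lambda_k)$, so this vector is a subgradient of the deterministic function $p$ at $\lambda$.

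Finally I would pass to the expectation and combine. Taking $\mathbb{E}_\calP$ of the pointwise subgradient inequality for $\varphi$ gives $\mathbb{E}_\calP[\varphi(\lambda';\theta,w)] - \mathbb{E}_\calP[\varphi(\lambda;\theta,w)] \ge \mathbb{E}_\calP[-c(z(\lambda;\theta,w);\theta,w)]^T(\lambda'-\lambda)$, so that $\mathbb{E}_\calP[-c(z(\lambda;\theta,w);\theta,w)] \in \partial_\lambda \mathbb{E}_\calP[\varphi(\lambda;\theta,w)]$. Adding the deterministic subgradient of $p$ then yields $\mathbb{E}_\calP[\tilde{g}(\lambda;\theta,w)] \in \partial_\lambda D(\lambda;\theta)$, as desired. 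I expect the only delicate point to be the expectation–subgradient interchange: one needs $w \mapsto c(z(\lambda;\theta,w);\theta,w)$ to be measurable (which holds because ties in the definition of $z(\lambda;\theta,w)$ are broken deterministically) and one needs integrability of both sides. Crucially, taking expectations \emph{directly inside} the subgradient inequality, rather than differentiating under the integral sign, sidesteps any dominated-convergence argument, so the boundedness assumptions $\sup f \le \bar{f}$ and $\lVert c \rVert_\infty \le \bar{C}$ are all that is really needed to justify the step.
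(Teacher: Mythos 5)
Your proposal is correct and follows essentially the same route as the paper's proof: both exploit the decomposition $D(\lambda;\theta)=\mathbb{E}_\calP[\varphi(\lambda;\theta,w)]+p(\lambda)$, use the Danskin-type suboptimality argument with $z(\lambda;\theta,w)$ fixed to handle $\varphi$, use the subgradient inequality for the piecewise-linear $p$ to handle the indicator term, and take expectations inside the pointwise subgradient inequality. If anything, your write-up is slightly more careful than the paper's, which asserts convexity of $p$ and invokes a generic $p' \in \partial p(\lambda)$ without explicitly verifying (as you do coordinate-wise, including the interval $[b_k\alpha_k, b_k]$ at $\lambda_k=0$) that the specific vector $b_k\left(\one(\lambda_k \ge 0) + \alpha_k\one(\lambda_k < 0)\right)$ is indeed a subgradient, and without your remarks on measurability and integrability.
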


We are now ready to describe our dual mirror descent algorithm.
Let $h(\cdot): \Lambda \rightarrow \bbR$ be the reference function for mirror descent, which we assume is $\sigma_1$-strongly convex in the $\ell_1$-norm, i.e., for some $\sigma_1 > 0$ it holds that $h(\lambda) \ge h(\lambda')$ $+$ $\langle \nabla h(\lambda'), \lambda - \lambda' \rangle $ $+$ $\tfrac{\sigma_1}{2} \lVert \lambda - \lambda' \rVert_1^2$  for any $\lambda, \lambda'$ in $\Lambda$. Also, we assume that $h(\cdot)$ is a separable function across components, i.e., it satisfies 
$h(\lambda) = \sum_{k = 1}^K$ $h_k(\lambda_k)$ where $h_k(\cdot) : \bbR \to \bbR$ is a convex univariate function for all $k \in [K]$. Define $V_h(\lambda,\lambda^\prime) : = h(\lambda)-h(\lambda^\prime)-\nabla h(\lambda^\prime)^T(\lambda-\lambda^\prime)$, the Bregman divergence using $h(\cdot)$ as the reference function.

Algorithm \ref{Alg:DualMirrorLearning} presents the main algorithm of this work. Algorithm \ref{Alg:DualMirrorLearning} is a specific instance of the more general algorithmic scheme, presented in Algorithm \ref{Alg:AbsAlgorithm}, where we fill in the generic decision making subroutine $A_z$ with a dual stochastic mirror descent \citet{hazan2019introduction,beck2003mirror} step with respect to the current estimate of the dual problem $ \min_{\lambda \in \Lambda}$ $D(\lambda;\theta^t)$. Note that the learning subroutine $A_\theta$ is left as a generic subroutine; the regret bounds that we prove in Section \ref{sect:theory} hold for any learning algorithm $A_\theta$ and naturally get better when $A_\theta$ has better convergence properties.

\begin{algorithm} 
\caption{Online Learning and Decision-making via Dual Mirror Descent} \label{Alg:DualMirrorLearning}
\begin{algorithmic}
\STATE {\textbf{Input:}} Initial estimate $\theta^1 \in \Theta$, remaining cost budget vector $b^1 = T b$, and initial dual solution $\lambda^1$.
\FOR{$t = 1,\dots,T$}
\STATE 1. Update $\theta^t \gets A_{\theta}\left(\mathcal{H}^{t-1} \right)$. 
\STATE 2. Receive $w^t \in \mathcal{W}$, which is assumed to be drawn from an unknown distribution $\calP$ and is independent of $\calH^{t-1}$.
\STATE 3. Make primal decision $z^t \gets z(\lambda^t;\theta^t,w^t)$, i.e.,
\begin{equation*}
z^t \in \arg\max_{z \in \calZ} f(z;\theta^t,w^t) - (\lambda^t)^T c(z;\theta^t,w^t).
\end{equation*}
\STATE 4. Update remaining cost budget $b^{t+1} \gets b^t - c(z^t;\theta^*,w^t)$, and earn revenue $f^t(z^t;\theta^*,w^t)$. 
\STATE 5. If $b_k^{t+1}<\bar{C}$ for any $k \in [K]$, \textbf{break}.
\STATE 6. Obtain dual stochastic subgradient $\tilde{g}^t$ where $\tilde{g}_k^t  $ $\gets$ $- c_k(z^t;\theta^t,w^t) + b_k \left(\one(\lambda_k \ge 0) + \alpha_k\one(\lambda_k < 0) \right) $ for all $k \in [K]$.
\STATE 7. Choose ``step-size" $\eta_t$ and take dual mirror descent step 
\begin{equation*}
\lambda^{t+1} \gets \arg \min_{\lambda \in \Lambda} \, \lambda^T \tilde{g}^t + \tfrac{1}{\eta_t} V_h(\lambda, \lambda^t).
\end{equation*}
\ENDFOR
\end{algorithmic}
\end{algorithm}

Note that Proposition \ref{Pro:StochSubg} ensures that $\tilde{g}^t$ from Step 6.\ of Algorithm \ref{Alg:DualMirrorLearning} is a stochastic subgradient of $D(\cdot;\theta^t)$ at $\lambda^t$. The specific form of the mirror descent step in Step 7.\ depends on the reference function $h(\cdot)$ that is used. A standard example is the Euclidean reference function, i.e., $h(\cdot) := \frac{1}{2} \lVert \cdot \rVert_2^2$, in which case Step 7.\ is a projected stochastic subgradient descent step. Namely, $\lambda_k^{t+1} $ $\gets$ 
$[\lambda_k^t -\eta \tilde{g}_k^t]_+$ for all $k \in [K]$ with $\alpha_k = -\infty$ and $\lambda_k^{t+1} $ $\gets$ $\lambda_k^t - \eta \tilde{g}_k^t$ otherwise. A simple extension of this example is $h(\lambda) := \lambda^TQ\lambda$ for some positive definite matrix $Q$. When no lower bounds are present, i.e., $\alpha_k = -\infty$ for all $k \in [K]$,  we can use an entropy-like reference function $h(\lambda) := \sum_{k \in [K]}$ $\lambda_k\log(\lambda_k)$ wherein Step 7.\ becomes a multiplicative weight update $\lambda_k^t \gets \lambda^t \exp(-\eta_t \tilde{g}_k^t)$  \citet{arora2012multiplicative}. Finally, note that since the reference function is component wise separable, one may use a different type of univariate reference function for different components. 


While Algorithm \ref{Alg:DualMirrorLearning} fills in the gap for $A_z$ using mirror descent, the learning algorithm $A_\theta$ in Step 1.\ is still left as generic and there are a range of possibilities that one might consider depending on the specific problem being addressed.
For example, considering only the revenue function for simplicity, suppose that there is a feature map $f': \calZ \times \calW \rightarrow \bbR^p$ such that $f(z;\theta,w) = f'(z;w)^T \theta$ for $(z,\theta,w) \in \calZ \times \Theta \times \calW$ and we observe both $f(z^t;\theta^\ast,w^t)$ and $f'(z^t;w^t)$ at time $t$. Then, one could use $(f^s(z^s;\theta^*,w^s), f'(z^s;w^s))_{s=1}^{t-1}$ to fit a linear model (possibly with regularization) for implementing $A_\theta$ at time $t$.
Depending on the underlying structure of the problem and randomness of the data arrivals, the previous methods may not converge to $\theta^*$. Different ways of applying Step 1. are shown for a linear contextual bandits problem in Section \ref{Sec:ExsAndExps}.
The performance of the different implementations vary drastically depending on the underlying randomness of the data arrivals.

\section{Regret Bound and Related Results}\label{sect:theory}
In this section, we present our main theoretical result, Theorem \ref{Thm:Master}, which shows regret bounds for Algorithm \ref{Alg:DualMirrorLearning}. In particular, the regret of Algorithm \ref{Alg:DualMirrorLearning} can be decomposed as the summation of two parts:  {\em (i)} the terms that appear when $\theta^*$ is known, which emerge from the properties of the Mirror Descent algorithm and can be bounded sublinearly as $\calO(\sqrt{T})$, and {\em (ii)} terms that naturally depend on the convergence of the learning process towards $\theta^\ast$. We also discuss the proof strategy for Theorem \ref{Thm:Master}. Finally, for each lower bound constraint in \eqref{Eq:OrigProblem}, we prove that our algorithm may violate this lower bound by at most $\calO(\sqrt{T})$ plus terms that depend on how $\theta^t$ converges to $\theta^\ast$.


\subsection{Regret Bound}

Before presenting our main theorem, we need to establish a few more ingredients of the regret bound. First, we present Assumption \ref{Ass:DualVarBounded}, which can be thought of as a boundedness assumption on the dual iterates.
\begin{assumption}[Bounded Dual Iterates]\label{Ass:DualVarBounded}
There is an absolute constant $C_h > 0$ such that the dual iterates $\{\lambda^t\}$ of Algorithm \ref{Alg:DualMirrorLearning} satisfy $\mathbb{E} \left[ \lVert \nabla h(\lambda^t) \rVert_\infty \right] \le C_h$ for all $t \in [T]$.
\end{assumption}
Note that, in the Euclidean case where $h(\lambda) = \frac{1}{2} \lVert \lambda \rVert_2^2$, we have $\nabla h(\lambda) = \lambda$ and therefore Assumption \ref{Ass:DualVarBounded} may be thought of as a type of boundedness condition. After stating our regret bound, we present a sufficient condition for Assumption \ref{Ass:DualVarBounded}, which involves only the properties of the problem and not the iterate sequence of the algorithm.


Now, recall that $\mathcal{H}^t$  can be understood as all the information obtained by Algorithm \ref{Alg:DualMirrorLearning} up to period $t$.  Then, Step 4. of Algorithm \ref{Alg:DualMirrorLearning} is intrinsically related to the following stopping time with respect to $\calH^{t-1}$.
\begin{definition}[Stopping time]\label{Def:StopTimeTauA} Define $\tau_{A}$ as the minimum between $T$ and the smallest time $t$ such that there exists $k \in [K]$ with
$\sum_{t=1}^{\tau_{A}} c_k(z^t;\theta^*,w^t) + \bar{C} > b_k T$.
\end{definition}

Finally, recall that we defined constants $\bar{f} > 0$, $\bar{C} > 0$, $\underline{b} > 0$ and $\bar{b} > 0$ such that $\sup_{z \in \calZ, w \in \calW} f(z;\theta^*, w) \le \bar{f}$, $\sup_{z \in \calZ, \theta \in \Theta, w \in \calW} \lVert c(z;\theta, w) \rVert_\infty  \le \bar{C}$, $\underline{b} := \min_{k \in [K]}b_k$ and $\bar{b} := \max_{k \in [K]}b_k$. Also, $\sigma_1$ refers to the strong convexity constant of $h(\cdot)$. We are now ready to state Theorem \ref{Thm:Master}, which presents our main regret bound.

\begin{theorem}\label{Thm:Master}
Let $A$ denote Algorithm \ref{Alg:DualMirrorLearning} with a constant ``step-size'' rule $\eta_t \gets \eta$ for all $t \geq 1$ where $\eta > 0$. Suppose that Assumption \ref{Ass:DualVarBounded} holds.
Then, for any distribution $\calP$ over $w \in \calW$, it holds that $\mathrm{Regret}(A|\calP) ~\leq~ \Delta_{\mathrm{DM}} + \Delta_{\mathrm{Learn}}$ where 
\begin{align*}
     \Delta_{\mathrm{DM}} :=~  &\frac{ 2 (\bar{C}^2 +\bar{b}^2)}{\sigma_1} \eta \mathbb{E}[\tau_A] + \frac{1}{\eta} V_h(0,\lambda^1)  + \frac{\bar{f}}{\underline{b}} \left( \bar{C} + \frac{C_h + \lVert \nabla h(\lambda^1) \rVert_\infty}{\eta }  \right) \\
    \Delta_{\mathrm{Learn}} :=~ &\mathbb{E} \left\lbrack \sum_{t=1}^{\tau_A} (c(z^t;\theta^*,w^t) - c(z^t;\theta^t,w^t))^T\lambda^t  \right\rbrack  + \frac{\bar{f}}{\underline{b}} \left\lVert \mathbb{E} \left[  \sum_{t=1}^{\tau_A} c(z^t;\theta^*,w^t) - c(z^t;\theta^t,w^t) \right] \right\rVert_\infty.
\end{align*}
\end{theorem}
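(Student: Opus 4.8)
The plan is to bound the regret by sandwiching $\mathrm{OPT}(\calP)$ between the dual upper bound of Proposition~\ref{Pro:WeakDuality} and a lower bound on the realized revenue $R(A|\calP)=\mathbb{E}\big[\sum_{t=1}^{\tau_A} f(z^t;\theta^*,w^t)\big]$, using mirror descent to control the dual iterates and a separate stopping-time argument to control the revenue forfeited once the budget triggers the \textbf{break} in Step~5. Throughout I would write $D^t(\lambda):=\varphi(\lambda;\theta^t,w^t)+p(\lambda)$ for the per-period sampled dual; by Proposition~\ref{Pro:StochSubg}, together with the subgradient of the piecewise-linear $p$, the vector $\tilde g^t$ is a subgradient of $D^t$ at $\lambda^t$, and a direct computation gives the Euler-type identity $(\tilde g^t)^\top\lambda^t = p(\lambda^t)-(\lambda^t)^\top c(z^t;\theta^t,w^t)$.

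First I would rewrite each realized reward. Since $z^t=z(\lambda^t;\theta^t,w^t)$ attains $\varphi(\lambda^t;\theta^t,w^t)=f(z^t;\theta^t,w^t)-(\lambda^t)^\top c(z^t;\theta^t,w^t)$, combining this with the identity above and adding and subtracting the true cost yields the exact relation
\[ f(z^t;\theta^*,w^t)=\Phi_t+\big(c(z^t;\theta^*,w^t)-c(z^t;\theta^t,w^t)\big)^\top\lambda^t-(\tilde g^t)^\top\lambda^t, \]
where $\Phi_t:=f(z^t;\theta^*,w^t)-(\lambda^t)^\top c(z^t;\theta^*,w^t)+p(\lambda^t)$. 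The key accounting point is that the \emph{revenue} estimation error is silently absorbed into $\Phi_t$, which is written entirely in terms of the true revenue and true cost, so the only learning residual surviving this step is the \emph{cost} gap $(c(z^t;\theta^*,w^t)-c(z^t;\theta^t,w^t))^\top\lambda^t$, which is exactly the first term of $\Delta_{\mathrm{Learn}}$.

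Next I would sum over $t\le\tau_A$ and take expectations. The quantity $\mathbb{E}\big[\sum_t(\tilde g^t)^\top\lambda^t\big]$ is bounded above by the standard mirror-descent regret inequality against the comparator $\lambda=0$: using that $h$ is $\sigma_1$-strongly convex in $\ell_1$, so the relevant dual norm is $\ell_\infty$, and that $\|\tilde g^t\|_\infty\le \bar C+\bar b$, this produces $\tfrac1\eta V_h(0,\lambda^1)$ and a term of order $\tfrac{\eta(\bar C^2+\bar b^2)}{\sigma_1}\mathbb{E}[\tau_A]$, matching the first two terms of $\Delta_{\mathrm{DM}}$. For the $\Phi_t$ contribution I would invoke Proposition~\ref{Pro:WeakDuality} pathwise at the random iterate $\lambda^t$: because $\{\tau_A\ge t\}$, $\lambda^t$ and $\theta^t$ are $\calH^{t-1}$-measurable while $w^t$ is independent of $\calH^{t-1}$, conditioning and an optional-stopping (Wald-type) summation give $\mathbb{E}\big[\sum_{t\le\tau_A}(\varphi(\lambda^t;\theta^*,w^t)+p(\lambda^t))\big]\ge \tfrac{\mathbb{E}[\tau_A]}{T}\,\mathrm{OPT}(\calP)$. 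Since $\Phi_t\le \varphi(\lambda^t;\theta^*,w^t)+p(\lambda^t)$, the difference is the $\theta^*$-Lagrangian suboptimality of the played action, $\varphi(\lambda^t;\theta^*,w^t)-[f(z^t;\theta^*,w^t)-(\lambda^t)^\top c(z^t;\theta^*,w^t)]\ge 0$, which must be shown to collapse into the stated learning terms.

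It then remains to bound the stopping deficit $\mathrm{OPT}(\calP)\big(1-\mathbb{E}[\tau_A]/T\big)\le \bar f\,\mathbb{E}[T-\tau_A]$. When Step~5 fires, some resource $k$ satisfies $\sum_{s\le\tau_A}c_k(z^s;\theta^*,w^s)>b_kT-\bar C$; writing the true consumption as $\sum_{s\le\tau_A} b_k(\one(\lambda^s_k\ge0)+\alpha_k\one(\lambda^s_k<0))-\sum_s\tilde g^s_k+\sum_s(c_k(z^s;\theta^*,w^s)-c_k(z^s;\theta^t,w^s))$ and bounding $\mathbb{E}\|\sum_s\tilde g^s\|_\infty\le \tfrac1\eta(C_h+\|\nabla h(\lambda^1)\|_\infty)$ through the mirror-map recursion $\nabla h(\lambda^{s+1})=\nabla h(\lambda^s)-\eta\tilde g^s$ and Assumption~\ref{Ass:DualVarBounded}, one obtains $\mathbb{E}[T-\tau_A]\le \tfrac1{\underline b}\big(\bar C+\tfrac{C_h+\|\nabla h(\lambda^1)\|_\infty}{\eta}+\|\mathbb{E}\sum_s(c(z^s;\theta^*,w^s)-c(z^s;\theta^t,w^s))\|_\infty\big)$, which simultaneously yields the third term of $\Delta_{\mathrm{DM}}$ and the second term of $\Delta_{\mathrm{Learn}}$. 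The hard part will be this stopping-time analysis and, within it, routing every parameter-learning discrepancy into precisely the two cost-gap terms of $\Delta_{\mathrm{Learn}}$: reconciling the budget update, which uses the true cost $c(z^t;\theta^*,w^t)$, with the dual subgradient, which uses the estimated cost $c(z^t;\theta^t,w^t)$, and handling the suboptimality of the played action under $\theta^*$, all while keeping careful track of the lower-bound components $\alpha_k$ appearing both in $p$ and in the multiplier $\one(\lambda_k\ge0)+\alpha_k\one(\lambda_k<0)$.
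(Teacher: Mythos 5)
Your plan reproduces the paper's architecture almost step for step: the paper also splits the regret at the stopping time of Definition~\ref{Def:StopTimeTauA}, pays $\bar f\,\mathbb{E}[T-\tau_A]$ for the truncated periods, bounds $\mathbb{E}[T-\tau_A]$ coordinate-wise through the mirror-map recursion and Assumption~\ref{Ass:DualVarBounded} (its Proposition~\ref{Pro:StopTimeBound}), runs the three-point/strong-convexity mirror-descent bound against the comparator $\lambda=0$ to get $\tfrac1\eta V_h(0,\lambda^1)+\tfrac{2\eta(\bar C^2+\bar b^2)}{\sigma_1}\mathbb{E}[\tau_A]$, and uses optional stopping plus Proposition~\ref{Pro:WeakDuality} (its Proposition~\ref{Pro:CorrectRegretUpToTau_A}). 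Two small slips in your sketch are fixable: after projection the mirror-map relation is the one-sided inequality $\dot h_k(\lambda_k^{t+1})\ge \dot h_k(\lambda_k^t)-\eta\tilde g_k^t$ of Proposition~\ref{Pro:KKT_for_h}, not the equality you wrote, and correspondingly only the one-sided bound $-\sum_s\tilde g_k^s\le\tfrac1\eta\big(\dot h_k(\lambda_k^{\tau_A+1})-\dot h_k(\lambda_k^1)\big)$ is available rather than your two-sided $\lVert\sum_s\tilde g^s\rVert_\infty$ claim; fortunately the one-sided version is exactly what the stopping-time argument needs. Your per-period Wald bound $\mathbb{E}\big[\sum_{t\le\tau_A}(\varphi(\lambda^t;\theta^*,w^t)+p(\lambda^t))\big]\ge\tfrac{\mathbb{E}[\tau_A]}{T}\mathrm{OPT}(\calP)$ is a clean substitute for the paper's Jensen step at $\bar\lambda^{\tau_A}$.

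The genuine gap is the one you flag yourself, and it is worse than a loose end because in your decomposition the sign comes out wrong. Writing $\Gamma_t:=\varphi(\lambda^t;\theta^*,w^t)-\big[f(z^t;\theta^*,w^t)-(\lambda^t)^\top c(z^t;\theta^*,w^t)\big]\ge0$ for the $\theta^*$-Lagrangian suboptimality of the played action, your identity gives exactly
\begin{equation*}
\varphi(\lambda^t;\theta^*,w^t)+p(\lambda^t)-f(z^t;\theta^*,w^t)=\Gamma_t-\big(c(z^t;\theta^*,w^t)-c(z^t;\theta^t,w^t)\big)^\top\lambda^t+(\tilde g^t)^\top\lambda^t,
\end{equation*}
so after your Wald and mirror-descent steps the played-action cost gap enters the regret bound with a \emph{minus} sign, the opposite of the first term of $\Delta_{\mathrm{Learn}}$, and $\Gamma_t$ enters with a plus sign. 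Matching the theorem would then require $\mathbb{E}[\sum_t\Gamma_t]\le 2\,\mathbb{E}[\sum_t(c(z^t;\theta^*,w^t)-c(z^t;\theta^t,w^t))^\top\lambda^t]$, which fails in general: bounding $\Gamma_t$ via optimality of $z^t$ under $\theta^t$ produces revenue gaps $f(\cdot;\theta^*,w)-f(\cdot;\theta^t,w)$ and cost gaps at the counterfactual maximizer $z(\lambda^t;\theta^*,w)$, none of which appear in $\Delta_{\mathrm{Learn}}$. So your claim that the revenue estimation error is ``silently absorbed'' is exactly backwards — the revenue gaps do surface and must be cancelled against a dual-function gap. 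The paper's Proposition~\ref{Pro:CorrectRegretUpToTau_A} never isolates $\Gamma_t$: it starts from $(g^t)^\top\lambda^t = D(\lambda^t;\theta^t)-\mathbb{E}_w f(z(\lambda^t;\theta^t,w);\theta^t,w)$, adds and subtracts $D(\lambda^t;\theta^*)$ and the true revenue of the played action, and reduces the residual $\big[D(\lambda^t;\theta^t)-D(\lambda^t;\theta^*)\big]+\mathbb{E}_w\big[f(z^t;\theta^*,w)-f(z^t;\theta^t,w)\big]$ to cost-gap terms alone using the definition of $D$, before applying optional stopping. That grouping — the cancellation of the revenue discrepancy against the dual-gap term, with careful sign bookkeeping — is precisely the step your proposal is missing, and without it the argument does not reach the stated $\Delta_{\mathrm{Learn}}$.
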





Theorem \ref{Thm:Master} states that the regret of Algorithm \ref{Alg:DualMirrorLearning} can be upper bounded by the sum of two terms:  {\em (i)} a quantity $\Delta_{\mathrm{DM}}$ that relates to the properties of the decision-making algorithm, dual mirror descent, and {\em (ii)} a quantity $\Delta_{\mathrm{Learn}}$ that relates to the convergence of the learning algorithm $A_\theta$. It is straightforward to see that setting $\eta \gets \gamma/\sqrt{T}$ for some constant parameter $\gamma>0$ implies that $\Delta_{\mathrm{DM}}$ is $O(\sqrt{T})$. In the pure online optimization case, $\theta^\ast$ is known and hence $\theta^t = \theta^\ast$ for all $t \in [T]$ yielding $\Delta_{\mathrm{Learn}} = 0$. Thus, using $\eta \gets \gamma/\sqrt{T}$ in the pure online optimization case yields $\mathrm{Regret}(A|\calP) \leq O(\sqrt{T})$ and extends results presented by \citet{balseiro2020dual}.
More generally, $\Delta_{Learn}$ depends on the convergence of $\theta^t$ to $\theta^*$. Under a stricter version of Assumption \ref{Ass:DualVarBounded} and assuming the cost functions are Lipschitz in $\theta$, we demonstrate in the supplementary materials that $\Delta_{Learn}$ is $O(\mathbb{E}\left[\sum_{t=1}^{\tau_A} \lVert \theta^t - \theta^*\rVert_{\theta} \right])$.


Let us now return to Assumption \ref{Ass:DualVarBounded} and present a sufficient condition for this assumption that depends only on the structural properties of the problem and not directly on the iterations of the algorithm.
The type of sufficient condition we consider is an extended Slater condition that requires both lower and upper bound cost constraints to be satisfied in expectation with positive slack for all $\theta \in \Theta$. Let us first define precisely what the average slack is for a given $\theta \in \Theta$.

\begin{definition}\label{Def:SlackTheta}
For a given $\theta \in \Theta$, we define its slack $\delta_\theta \in \bbR$ as $\delta_{\theta} := \mathbb{E}_{\calP} [\max_{z \in\calZ} \, \mathrm{res}(z; \theta,w)]$ with $\mathrm{res}(z; \theta,w) := \min \{ \lVert Tb_k - c_k(z;\theta,w) \rVert_\infty, \lVert c_k(z;\theta,w) - T\alpha_k b_k \rVert_\infty \}$ for all $(z, w) \in \calZ \times \calW$.
\end{definition}

The following proposition uses the average slack to upper bound $C_h$ in Assumption \ref{Ass:DualVarBounded}.

\begin{proposition}\label{Prop:DualQuantityBounded}
Assume that there exists $\delta >0$ such that $\delta_{\theta} \ge \delta$ for all $\theta \in \Theta$, and let $C^{\rhd} :=  2 (\eta\frac{(\bar{C}^2 + \bar{b}^2)}{\sigma_1} +  \bar{f})$. 
Suppose that we use the Euclidean reference function $h(\cdot) := \tfrac{1}{2}\|\cdot\|_2^2$, which corresponds to the traditional projected stochastic subgradient method.
Then, it holds that $C_h \le \max\{ \lVert \lambda^1 \rVert_{\infty}, \sqrt{2} \sqrt{ 0.5 (C^{\rhd}/\delta)^2 + \eta C^{\rhd}} \}$.
\end{proposition}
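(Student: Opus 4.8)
The plan is to verify Assumption \ref{Ass:DualVarBounded} directly for the projected stochastic subgradient iterates. Since $h(\cdot)=\tfrac12\|\cdot\|_2^2$ gives $\nabla h(\lambda)=\lambda$, it suffices to bound $\bbE[\|\lambda^t\|_2]$ uniformly in $t$, because $\|\lambda^t\|_\infty \le \|\lambda^t\|_2$ and $\bbE[\|\lambda^t\|_2]\le\sqrt{\bbE[\|\lambda^t\|_2^2]}$ by Jensen. The first step is a one-step energy inequality. Because $0\in\Lambda$ and $\lambda^{t+1}=\Pi_\Lambda(\lambda^t-\eta\tilde g^t)$, nonexpansiveness of the Euclidean projection yields
\begin{equation*}
\tfrac12\|\lambda^{t+1}\|_2^2 \;\le\; \tfrac12\|\lambda^t\|_2^2 \;-\; \eta\,(\lambda^t)^T\tilde g^t \;+\; \tfrac{\eta^2}{2}\|\tilde g^t\|_2^2 ,
\end{equation*}
and since each coordinate of $\tilde g^t$ is bounded in absolute value by $\bar C+\bar b$, I get $\|\tilde g^t\|_2^2\le 2(\bar C^2+\bar b^2)/\sigma_1$, which is exactly the term feeding the $\eta(\bar C^2+\bar b^2)/\sigma_1$ part of $C^{\rhd}$.

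The second step extracts a negative drift from the Slater-type slack. Conditioning on $\calH^{t-1}$ (which fixes $\lambda^t$ and $\theta^t$), Proposition \ref{Pro:StochSubg} gives $\bbE[\tilde g^t\mid\calH^{t-1}]\in\partial_\lambda D(\lambda^t;\theta^t)$, so convexity of $D(\cdot;\theta^t)$ together with $0\in\Lambda$ produces the comparison $(\lambda^t)^T\,\bbE[\tilde g^t\mid\calH^{t-1}] \ge D(\lambda^t;\theta^t)-D(0;\theta^t)$. The heart of the argument is the lower bound
\begin{equation*}
D(\lambda;\theta)-D(0;\theta)\;\ge\;\delta\,\|\lambda\|_1-\bar f\qquad\text{for all }\lambda\in\Lambda,\ \theta\in\Theta,
\end{equation*}
which I obtain by evaluating $\varphi(\lambda;\theta,w)$ at the slack-maximizing decision $z^\star(\theta,w)\in\arg\max_z \mathrm{res}(z;\theta,w)$: using $f\ge0$, $D(0;\theta)\le\bar f$ (the revenue bound), and the explicit form of $p(\lambda)$ (contributing $\lambda_k(b_k-c_k)$ when $\lambda_k\ge0$ and $|\lambda_k|(c_k-\alpha_k b_k)$ when $\lambda_k<0$), both per-coordinate margins at $z^\star$ are at least $\mathrm{res}(z^\star;\theta,w)$, so taking $\bbE_\calP$ and invoking $\delta_\theta\ge\delta$ from Definition \ref{Def:SlackTheta} gives the claim. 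Combining with $\|\lambda\|_1\ge\|\lambda\|_2$ and taking total expectation, the one-step inequality becomes the quadratic drift
\begin{equation*}
\bbE\!\left[\|\lambda^{t+1}\|_2^2\mid\calH^{t-1}\right]\;\le\;\|\lambda^t\|_2^2-2\eta\delta\,\|\lambda^t\|_2+\eta C^{\rhd},
\end{equation*}
where the additive constant matches $\eta C^{\rhd}=2\eta(\eta(\bar C^2+\bar b^2)/\sigma_1+\bar f)$ exactly.

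The final step, which I expect to be the main obstacle, is to turn this per-step conditional drift into the uniform-in-$t$ bound $\bbE[\|\lambda^t\|_2^2]\le M^2$ with $M:=\sqrt{(C^{\rhd}/\delta)^2+2\eta C^{\rhd}}$; the stated result then follows since $\bbE[\|\lambda^t\|_\infty]\le M$, and freezing $\lambda^t:=\lambda^{\tau_A}$ after the stopping time extends it to all $t\in[T]$, while the outer $\max$ with $\|\lambda^1\|_\infty$ absorbs the case $\|\lambda^1\|_2>M$. I would run a two-regime Lyapunov argument on $Y_t:=\|\lambda^t\|_2^2$: above the threshold $Y_t\ge (C^{\rhd}/(2\delta))^2$ the drift term $-2\eta\delta\|\lambda^t\|_2+\eta C^{\rhd}$ is nonpositive, so $\bbE[Y_{t+1}\mid\calH^{t-1}]\le Y_t$ (a conditional supermartingale), while below the threshold the pathwise bounded increment $\|\lambda^{t+1}\|_2\le\|\lambda^t\|_2+\eta\|\tilde g^t\|_2$ caps $Y_{t+1}$ by a constant that a short calculation shows is at most $M^2$. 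The delicate point is that one cannot simply induct on $\bbE[Y_t]$: the governing map $y\mapsto y-2\eta\delta\sqrt y+\eta C^{\rhd}$ is convex, so Jensen pushes the wrong way and the naive induction does not close. I would instead control the excursions above the threshold using the bounded increments (equivalently, a supermartingale/optional-stopping argument on each excursion, or an exponential Lyapunov function in the spirit of drift lemmas), which is precisely what converts ``negative drift above a threshold plus bounded steps'' into the uniform second-moment bound $M^2$ and thereby discharges Assumption \ref{Ass:DualVarBounded}.
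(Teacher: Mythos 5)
Your proposal follows essentially the same route as the paper. Your Steps 1 and 2 --- the one-step projection inequality plus the Slater-slack lower bound on $(\lambda^t)^T\bbE[\tilde{g}^t\mid\calH^{t-1}]$ --- are exactly the paper's Proposition \ref{Pro:KeySlater} specialized to $h(\cdot)=\tfrac12\|\cdot\|_2^2$, which yields
\begin{equation*}
\bbE\left[V_h(0,\lambda^{t+1})\mid\lambda^t,\theta^t\right]\;\le\;V_h(0,\lambda^t)+\eta\left(C^{\rhd}-\delta\|\lambda^t\|_1\right),
\end{equation*}
after which the paper thresholds at $\|\lambda^t\|_1\ge C^{\rhd}/\delta$, uses $\max_{\|\lambda\|_1\le C^{\rhd}/\delta}\tfrac12\|\lambda\|_2^2=\tfrac12(C^{\rhd}/\delta)^2$, and finishes with Jensen and $\|\cdot\|_\infty\le\|\cdot\|_2$ --- precisely your concluding chain.

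Where you differ is instructive. The ``main obstacle'' you flag is real: the paper dispatches it in a single display asserting that for \emph{any} $(\lambda^t,\theta^t)$ the conditional expectation of $V_h(0,\lambda^{t+1})$ is at most $\max\{\max_{\|\lambda\|_1\le C^{\rhd}/\delta}V_h(0,\lambda)+\eta C^{\rhd},\,V_h(0,\lambda^1)\}$, which is not literally valid pointwise --- above the threshold the available bound is only $V_h(0,\lambda^t)$, which can exceed that maximum --- so passing to a uniform-in-$t$ bound on $\bbE[V_h(0,\lambda^t)]$ does require the negative-drift-plus-bounded-increments machinery you describe, and you are right that naive induction on the expectation fails by Jensen (this is the same repair used for the analogous lemma in \citet{balseiro2020dual}). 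Two details of your plan need care, however. First, your claimed pathwise cap below the threshold, $(C^{\rhd}/(2\delta)+\eta\|\tilde{g}^t\|_2)^2\le M^2$, is not automatic: the cross term $C^{\rhd}\eta\|\tilde{g}^t\|_2/\delta$ is only dominated under a step-size condition of the form $\eta\lesssim C^{\rhd}/\delta^2$; for the induction itself you should instead use the conditional bound $\bbE[Y_{t+1}\mid\calH^{t-1}]\le Y_t+\eta C^{\rhd}\le(C^{\rhd}/(2\delta))^2+\eta C^{\rhd}\le M^2$ on the low event, reserving the pathwise increment bound for controlling entries into excursions. Second, the paper assumes only $\sup f\le\bar{f}$, not $f\ge0$; your drift derivation should use $f\ge-\bar{f}$ (as the paper's own proof of Proposition \ref{Pro:KeySlater} implicitly does), giving $(\lambda^t)^T g^t\ge\delta\|\lambda^t\|_1-2\bar{f}$ rather than $-\bar{f}$ --- and the $2\bar{f}$ inside $C^{\rhd}$ is exactly the budget for this, so your constants survive unchanged.
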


\subsection{Proof Sketch and Cost Feasibility}
The proof sketch for Theorem \ref{Thm:Master} is informative of how the algorithm works and therefore we outline it here. At a high level the proof consists of two major steps. First, we prove that the $\bbE[\tau_A]$ is close to $T$ for the pure online optimization case. In the general case additional terms depending on how $\theta^t$ converges to $\theta^\ast$ appear. Second, we bound the expected regret up to period $\tau_A$. In particular, we prove $\mathbb{E}[\tau_A D(\sum_{t=1}^{\tau_A} \tfrac{1}{\tau_A} \lambda^t;\theta^*) -\sum_{t=1}^{\tau_A} f(z^t;\theta^*,w^t)]$ upper bounds the regret and is $O(\sqrt{T})$ in the pure online optimization case. Finally, the expected regret up to period $T$ is bounded by the sum of the expected regret up to period $\tau_A$ plus the trivial bound $\bar{f} \mathbb{E}[T-\tau_A]$. 
(Note that the two major steps of our proof mimic those of \citet{balseiro2020dual} but the generality of our setting as well as the presence of parameter learning leads to new complications.)


A key element of the proof is that if we violate the upper cost constraints this occurs near the final period $T$ (as long as we `properly' learn $\theta^*$). 
A solution obtained using Algorithm \ref{Alg:DualMirrorLearning} can not overspend, but may underspend. Proposition \ref{Pro:LowerBoundsSatisfied} shows that the amount of underspending can again be bounded by the sum of terms that arise from the decision-making algorithm (mirror descent) and terms that depend on the convergence of the learning process.
In the pure online optimization case, these lower constraint violations are bounded by $O(\sqrt{T})$ if we use $\eta = \gamma/\sqrt{T}$ with $\gamma>0$ arbitrary. To put this result in context, even if constraint violations can occur their growth is considerably smaller than $T$, which is the rate at which the scale of the constraints in \eqref{Eq:OrigProblem} grow. In the general case, terms depending on how $\theta^t$ converges to $\theta^\ast$ again appear, analogously to Theorem \ref{Thm:Master}. 


\begin{proposition}\label{Pro:LowerBoundsSatisfied}
Assume we run Algorithm \ref{Alg:DualMirrorLearning} under Assumption  \ref{Ass:DualVarBounded} using $\eta_t = \eta$ for all $t \ge 1$. For any $k \in [K]$ with $\alpha_k \ne -\infty$ it holds:
\begin{align*}
     T \alpha_k b_k - \mathbb{E}[\sum_{t=1}^{\tau_A} c_k(z^t;\theta^*,w^t)]  \le &  \left( \frac{\lVert \nabla h(\lambda^1) \rVert_\infty + C_h}{\eta } \right) \frac{\underline{b} + \alpha_k b_k}{\underline{b}} +  \frac{\alpha_k b_k \bar{C}}{ \underline{b}} \\
    &+ \frac{\alpha_k b_k \lVert \mathbb{E}[ \sum_{t=1}^{\tau_{A}} c(z^t;\theta^*,w^t) - c(z^t;\theta^t,w^t) ] \rVert_\infty  }{\underline{b}} \\
    & +\mathbb{E}[\sum_{t=1}^{\tau_A} c_k(z^t;\theta^t,w^t) - c_k(z^t;\theta^*,w^t)].
\end{align*}
\end{proposition}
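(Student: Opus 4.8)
The plan is to split the coordinate-$k$ lower-bound violation into a part accumulated up to the stopping time $\tau_A$ and a tail that accounts for the periods lost to early stopping:
\[
T\alpha_k b_k - \sum_{t=1}^{\tau_A} c_k(z^t;\theta^*,w^t) = \Big(\alpha_k b_k \tau_A - \sum_{t=1}^{\tau_A} c_k(z^t;\theta^*,w^t)\Big) + \alpha_k b_k(T-\tau_A).
\]
I would bound the first (``running'') part directly from the mirror-descent recursion and the second (``tail'') part from a stopping-time-close-to-$T$ estimate; taking expectations and invoking Assumption \ref{Ass:DualVarBounded} then recombines these into the four terms of the claim.

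For the running part, the key point is that $\alpha_k\neq-\infty$ makes the coordinate $\lambda_k$ unconstrained in $\Lambda$, so with a separable reference function Step 7 of Algorithm \ref{Alg:DualMirrorLearning} produces the exact recursion $\nabla h(\lambda^{t+1})_k = \nabla h(\lambda^t)_k - \eta\,\tilde{g}_k^t$. Telescoping over $t=1,\dots,\tau_A$ and substituting the explicit subgradient from Proposition \ref{Pro:StochSubg} expresses $\sum_{t=1}^{\tau_A} c_k(z^t;\theta^t,w^t)$ in terms of $\nabla h(\lambda^1)_k-\nabla h(\lambda^{\tau_A+1})_k$ and the budget terms. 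Writing $\one(\lambda_k^t\ge0)+\alpha_k\one(\lambda_k^t<0)=\alpha_k+(1-\alpha_k)\one(\lambda_k^t\ge0)$ and using $b_k(1-\alpha_k)>0$ shows the extra budget contribution equals $-b_k(1-\alpha_k)\sum_t\one(\lambda_k^t\ge0)\le 0$ and may be dropped, while splitting $c_k(z^t;\theta^*,w^t)=c_k(z^t;\theta^t,w^t)+(c_k(z^t;\theta^*,w^t)-c_k(z^t;\theta^t,w^t))$ isolates exactly the fourth term. What remains is $\tfrac1\eta(\nabla h(\lambda^1)_k-\nabla h(\lambda^{\tau_A+1})_k)$, which in expectation is at most $(\lVert\nabla h(\lambda^1)\rVert_\infty+C_h)/\eta$ by Assumption \ref{Ass:DualVarBounded}.

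For the tail I would prove $\underline{b}\,\mathbb{E}[T-\tau_A]\le (\lVert\nabla h(\lambda^1)\rVert_\infty+C_h)/\eta+\bar{C}+\lVert\mathbb{E}[\sum_{t=1}^{\tau_A}(c(z^t;\theta^*,w^t)-c(z^t;\theta^t,w^t))]\rVert_\infty$. On $\{\tau_A<T\}$ the break in Step 5 forces some (random) coordinate $k'$ to satisfy $\sum_{t=1}^{\tau_A} c_{k'}(z^t;\theta^*,w^t)>b_{k'}T-\bar{C}$. Applying the telescoping identity to $k'$, bounding the budget factor by $\tau_A$ (using $\alpha_{k'}<1$, and the one-sided inequality $\nabla h(\lambda^{t+1})_{k'}\ge\nabla h(\lambda^t)_{k'}-\eta\tilde{g}_{k'}^t$ coming from the projection when $\alpha_{k'}=-\infty$), and rearranging gives $b_{k'}(T-\tau_A)\le\bar{C}+\tfrac1\eta(\nabla h(\lambda^{\tau_A+1})_{k'}-\nabla h(\lambda^1)_{k'})+\sum_{t=1}^{\tau_A}(c_{k'}(z^t;\theta^*,w^t)-c_{k'}(z^t;\theta^t,w^t))$. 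Since $b_{k'}\ge\underline{b}$ and $T-\tau_A\ge0$ (the inequality is trivial on $\{\tau_A=T\}$), taking expectations and using Assumption \ref{Ass:DualVarBounded} yields the estimate. Scaling it by $\alpha_k b_k/\underline{b}$ and adding the running-part bound reproduces the first three terms, the factor $(\underline{b}+\alpha_k b_k)/\underline{b}$ arising from combining the two $(\lVert\nabla h(\lambda^1)\rVert_\infty+C_h)/\eta$ contributions.

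The main obstacle is this stopping-time estimate. First, the exhausted coordinate $k'$ is random, so passing from the realized discrepancy $\sum_{t=1}^{\tau_A}(c_{k'}(z^t;\theta^*,w^t)-c_{k'}(z^t;\theta^t,w^t))$ to the coordinate-free $\lVert\mathbb{E}[\sum_{t=1}^{\tau_A}(c(z^t;\theta^*,w^t)-c(z^t;\theta^t,w^t))]\rVert_\infty$ requires care about the order of expectation and the coordinate-wise maximum (a naive pointwise bound only yields $\mathbb{E}\,\lVert\cdot\rVert_\infty$). Second, $\tau_A$ is a stopping time, so controlling $\nabla h(\lambda^{\tau_A+1})$ by Assumption \ref{Ass:DualVarBounded} (stated for fixed $t$) must handle the terminal iterate carefully, e.g.\ via the one-step bound $|\tilde{g}_k^t|\le\bar{C}+\bar{b}$ together with the recursion. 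Finally, the recombination is most transparent when $\alpha_k\ge0$, where scaling the stopping estimate by the nonnegative factor $\alpha_k b_k/\underline{b}$ is valid; the favorable sign of the dropped term $-b_k(1-\alpha_k)\sum_t\one(\lambda_k^t\ge0)$ in the running part is precisely what makes the per-coordinate bound go through.
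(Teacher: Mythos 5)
Your proof is correct and follows essentially the same route as the paper's: the same split into a running part handled by telescoping the exact per-coordinate recursion $\dot{h}_k(\lambda_k^{t+1}) = \dot{h}_k(\lambda_k^{t}) - \eta\tilde{g}_k^t$ (Proposition \ref{Pro:KKT_for_h}) with the budget indicator bounded below by $\alpha_k$ and the cost split at $\theta^t$ versus $\theta^*$, plus a tail $\alpha_k b_k (T-\tau_A)$ controlled by a stopping-time estimate that is exactly Proposition \ref{Pro:StopTimeBound} re-derived inline. The subtleties you flag --- the random exhausted coordinate versus $\lVert \mathbb{E}[\cdot] \rVert_\infty$, controlling the terminal iterate $\lambda^{\tau_A+1}$ under Assumption \ref{Ass:DualVarBounded} stated for fixed $t$, and the sign of $\alpha_k$ when scaling the tail bound --- are genuine but are equally present in, and glossed over by, the paper's own argument.
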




\section{Experiments\label{Sec:ExsAndExps}}

This section describes the two experiments performed. The first models the problem of a centralized bidder entity bidding on behalf of several clients. Each client has both lower and upper bounds on their desired spending. This experiment uses data from the online advertising company Criteo \citep{diemert2017attribution}. The results show that our methodology spends the clients budgets (mostly) in their desired range, depleting their budgets close to the last period ($T$), and obtaining a higher profit than a highly used heuristic. The second experiment is a linear contextual bandits problem with lower and upper bounds on the number of actions that can be taken. This experiment is illustrative of how different schemes to learn $\theta^*$, \textit{i.e.}, implementations of Step 1. of Algorithm \ref{Alg:DualMirrorLearning}, may be more or less effective depending on the inherent randomness of the data arrivals.  

\subsection{Centralized repeated bidding with budgets}
Consider a centralized bidding entity, which we here call the bidder, who bids on behalf of $K \ge 1$ clients. The bidder can participate in at most $T \ge 1$ auctions which are assumed to use a second-price mechanism. In the case of winning an auction, the bidder can only assign the reward of the auction to at most one client at a time. At the beginning of each auction, the bidder observes a vector $w \in \calW$ of features and a vector $r(w) \in \calR_+^{K}$. Each coordinate of $r(w)$ represents the monetary amount the $k^{th}$ client offers the bidder for the auction reward. For each auction $t \in [T]$, call `$\mathrm{mp}^t$'  to the highest bid from the other bidders. The goal of the bidder is to maximize its profit while satisfying its clients lower and upper spending bounds. Defining $\calX := \{ x \in \bbR_+^K:$ $\sum_{i=1}^K x_i \le 1 \}$, the problem the bidder would like to solve is (special case of Problem \eqref{Eq:OrigProblem}): 
\begin{align}
     & \underset{(z^t,x^t) \in \calR_+ \times \calX: t \in [T]}{\max} \ \ \sum_{t = 1}^T \sum_{k=1}^K (r_k(w^t) - \mathrm{mp}^t) x_k^t \mathds{1} ( z^t \ge \mathrm{mp}^t ) \nonumber \\
    & \text{s.t.} \ \ T \alpha \odot b \le \sum_{t = 1}^T r(w^t)\odot  x^t \mathds{1} \{ z^t \ge \mathrm{mp}^t \}  \le T b .  \nonumber
\end{align}
where $Tb$ represent the maximum the clients would like to spent, and $\alpha \in [0,1)^K$ the minimum percentage to be spent. The pair $(z^t, x^t) \in \bbR_+ \times \Delta$ represents the submitted bid and the probabilistic allocation of the reward chosen by the bidder at period $t$ (we later show that our algorithm uses a binary allocation policy). We use $\mathds{1} \{ z^t \ge \mathrm{mp}^t \}$ to indicate that the bidder wins the auction $t \in [T]$
only if its bid is higher than $\mathrm{mp}^t$. Here we assume $r(\cdot): \calW \rightarrow \bbR_+^K$ as known, but the extension to the case when we need to learn it is natural.

An important property of this problem is that we can implement our methodology without learning the distribution of $\mathrm{mp}$ , making this experiment fall in the pure online optimization case. The latter occurs as $\varphi(\lambda;(w,\mathrm{mp}))$ $=$ $\underset{(z,x) \in \calR_+ \times \calX}{\max} \sum_{k =1}^K (r_k(w)(1-\lambda_k) - \mathrm{mp})x_k \mathds{1} \{ z \ge \mathrm{mp} \}$ can be solved as Algorithm \ref{Alg:SecPrice} shows. 
\floatname{algorithm}{Algorithm} 
\begin{algorithm} 
\caption{Solving $\varphi(\cdot;\cdot,\cdot)$} \label{Alg:SecPrice}
\begin{algorithmic}
\STATE {\textbf{Input:}} Pair $(\lambda, w) \in \calR^{K} \times \calW$, and reward vector $r(w)$.
\STATE 1. Select $k^* \in \arg\underset{k \in [K]}{\max}$ $r_k(w)(1-\lambda_k)$.
\STATE 2. If $r_{k^*}(w)(1-\lambda_{k^*}) \ge 0$ set $z = r_{k^*}(w)(1-\lambda_{k^*})$, $x_{k^*} = 1$ and $x_k = 0$ for all $k \in [K] \ne k^*$, otherwise choose $z = x_k = 0$ for all $k \in [K]$.
\STATE {\textbf{Output:}} $(z,x)$ optimal solution for $\varphi(\lambda;(w,\mathrm{mp}))$.
\end{algorithmic}
\end{algorithm}

\textbf{Experiment Details.} 
This experiment is based on data from Criteo \citep{diemert2017attribution}. Criteo is a Demand-Side Platform (DSP), which are entities who bid on behalf of hundreds or thousands of advertisers which set campaigns with them. The dataset contains millions of bidding logs during one month of Criteo's operation. In all these logs, Criteo successfully acquired ad-space for its clients through real-time second-price auctions (each log represents a different auction and ad-space). Each log contains information about the ad-space and user to which it was shown, the advertiser who created the ad, the price paid by Criteo for the ad-space, and if a conversion occurred or not (besides from other unused columns). The logs from the first three weeks were used as training data, the next two days as validation, and the last week as test.

The experiment was performed as follows. The user's information and advertiser ids from the train data were used to train the neural network for conversion prediction from \citet{pan2018field}. This prediction model was validated using the validation data. Once selected and saved the set of parameters with highest validation AUC, we use the predictions coming from this architecture as if they were the truthful probabilities of conversion. From the test data, we obtained total budgets to spend for each advertiser, assuming that all advertisers expect their budget to be spent at least by 95\% ($\alpha_k = 0.95$ for all $k \in [K]$). To simulate a real operation, we read the test logs in order using batches of 128 logs (as updating a system at every arrival is not realistic). We use 100 simulations for statistical significance and use traditional subgradient descent on Step 7. of Algorithm \ref{Alg:DualMirrorLearning} (more experimental details in the supplement).


\begin{figure}
\center
    \includegraphics[width=0.8\textwidth]{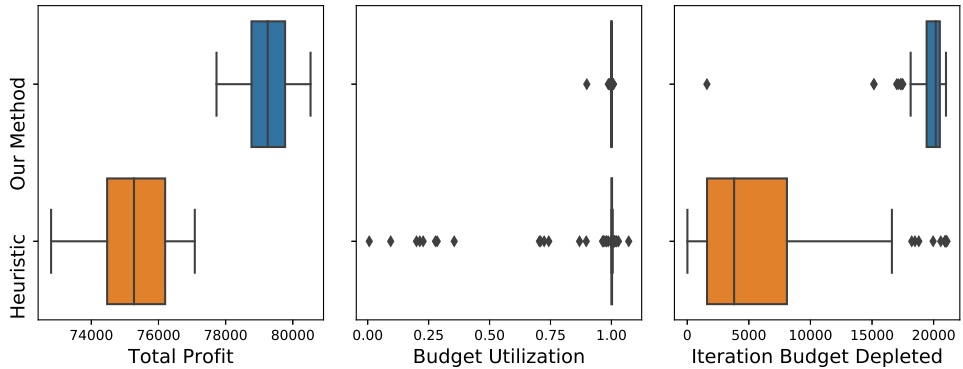}
    \caption{Box plots of the total profit obtained, and average budget utilization and  budget depletion iteration per advertiser over 100 simulations.  Budget utilization corresponds to the percentage of the total budget that an advertiser spent. If an advertiser never depleted its budget, its depletion time equals the simulation length.\label{Fig:ProfitBudgetDep}}
\end{figure}

Figure \ref{Fig:ProfitBudgetDep} shows that our methodology obtains a higher profit in comparison to the baseline. Also, almost all advertisers got their total spending on the feasible range (above 95\% of their total target budget). In addition, advertisers tend to deplete their budgets close to the end of the simulations. Observe that few advertisers spent their budgets in average closer to the beginning rather than the simulations end. We found that those advertisers had relatively small budgets. We saw that as budgets increased, advertisers average depletion time steadily approached the simulation end. 

\subsection{Linear contextual bandits with bounds over the number of actions.}\label{Subsec:LinearContextualBandits}

At each period $t \in [T]$, an agent observes a matrix $W^t \in \bbR^d \times \bbR^n $ and can decide between playing an action or not. If it plays an action, it incurs a cost of $\rho$ and selects a coordinate $i(t) \in [d]$. It then observes a reward $r^t$ with mean $\mathbb{E}[r^t] = (W_{i(t)}^t)^T \theta^* $, where $W_{i(t)}^t$ is the ${i(t)}^{th}$ row of $W^t$ and $\theta^*$ is an unknown parameter. We assume that $r^t = (W_{i(t)}^t)^T \theta^* + \epsilon$ with $\epsilon$ being a zero-mean noise independent of the algorithm history. If the agent does not play an action it incurs no cost. The agent operates at most for $T$ periods, requiring its total cost to be lower than $T$ and higher than $0.5 T$. The agent does not know the distribution $\calW$ over which $W^t$ is sampled (but knows that they are sampled i.i.d.). We can model this problem as having $\calZ = \{ z \in \bbR_+^K: \sum_{i=1}^T z_i \le 1 \}$, $\calW$ being the set of possible matrix arrivals, $f(z;\theta,W^t) = ((W_1^t)^T \theta, \dots, (W_d^t)^T  \theta)^T z$,  and $c(z;\theta,W^t) = (\rho, \dots, \rho) \odot z$. Even when $\calZ$ allows probabilistic allocations, there is always a solution of Step 3. of Algorithm \ref{Alg:DualMirrorLearning} which takes at most one action per period.

\textbf{Experiment Details.} We tried eight combinations of $d \times n$, run Algorithm \ref{Alg:DualMirrorLearning} using  $T = 1000,$ $5000,$ $10000$, use $\rho = 4$, and run 100 simulations of each experiment setting. Each simulation uses a unique seed to create $\theta^*$ and the mean matrix $W$ by sampling i.i.d. $\mathrm{Uniform}(-0.5, 0.5)$ random variables. Both $\theta^*$ and $W$ are then normalized to satisfy $\lVert \theta^* \rVert_2 =1$ and $\lVert W_{d'} \rVert_2 =1$ for all $d' \in [d]$. 

Besides the eight $d \times n$ configurations and three possible $T$ values, we tried six ways of obtaining the revenue terms (making a total of 144 experiment configurations). First, to create $W^t$ we either use $W^t = W$ for all $t \in [T]$, \textit{i.e.} no randomness, or $W^t  = W + \xi^t$ with $\xi^t$ a random matrix with each element being sampled i.i.d. from a $\mathrm{Uniform}(-0.1,0.1)$ random variable. Also, given a selected action $i(t) \in [d]$ on period $t \in [T]$, the observed revenue is either $W_{i(t)}^T \theta^*$ or $W_{i(t)}^T \theta^*$ plus either a $\mathrm{Uniform}(-0.1,0.1)$ or $\mathrm{Uniform}(-0.5,0.5)$ random term. We run Step 7. of algorithm \ref{Alg:DualMirrorLearning} using subgradient descent.

We implemented Step 1. of Algorithm \ref{Alg:DualMirrorLearning} in the following ways. 1. Gaussian Thompson-Sampling as in \citet{agrawal2013thompson}. 2. Least-squares estimation. 3. Ridge regression estimation. 4. Ridge regression estimation plus a decaying randomized perturbation. 5. `Known $\theta^\ast$'. The last method represents the case of a pure online optimization problem. We also solve \eqref{Eq:OrigProblem} optimally for each combination of experiment setting and simulation. In this case $\mathrm{OPT}(\calP) = \mathrm{OPT}(\calP,0)$, and each optimization problem inside $\mathrm{OPT}(\calP,0)$ is a bag problem. Please refer to the supplement for detailed descriptions of the methods, more experimental details, and the proof that $\mathrm{OPT}(\calP) = \mathrm{OPT}(\calP,0)$.

 Table \ref{Tab:50_50} shows the percentage of the average revenue obtained against the best possible revenue achievable over the 100 simulations when using $(d \times n)$ equal to $(50,50)$. A column label, such as $(0.5, 0.1)$ indicates that a $\mathrm{Uniform}(-0.5,0.5)$ is added to the observed revenue and that i.i.d. $\mathrm{Uniform}(-0.1,0.1)$ elements were added element-wise to $W^t$ for each $t \in [T]$. `$0.0$' indicates that no randomness was added either to the revenue or $W^t$ matrices depending on the case. (When $W$ has no randomness, the `Known $\theta^*$' method matches $\mathrm{OPT}(\calP)$ as the optimal action is always the same.)  
\begin{table}[h]
\center
    \begin{tabular}{| p{5.0 cm} | p{1.35 cm} | p{1.35 cm} | p{1.35 cm} | p{1.35 cm} | p{1.35 cm} | p{1.35 cm} |}
    \hline 
    T = 10000, (d $\times$ n) = (50,50) & (0.0,0.0) & (0.1,0.0) & (0.5,0.0) & (0.0,0.1) & (0.1,0.1) & (0.5,0.1) \\ \hline 
    Least Squares &  43.2 & 51.2 & 59.5 & 91.4 & 91.5 & 85.8 \\ \hline
    Thompson Sampling &  98.1 & 13.2 & 2.3 & 93.1 & 19.7 & 3.5 \\ \hline
    Ridge Reg. &  44.9 & 52.9 & 65.0 & 95.6 & 94.5 & 84.9 \\ \hline
    Ridge Reg. + Perturbation &  59.3 & 63.2 & 67.7 & 95.5 & 94.4 & 85.2 \\ \hline
    Known $\theta^*$ &  100 & 100 & 99.9 & 96.7 & 96.7 & 96.8 \\ \hline
    \end{tabular}
    \caption{The results shown are the average revenue over 100 simulations relative to the best value possible. A column label, such as $(0.5, 0.1)$ indicates that a $\mathrm{Uniform}(-0.5,0.5)$ is added to the observed revenue and that $i.i.d.$ $\mathrm{Uniform}(-0.1,0.1)$ elements were added to each coordinate of $W^t$ for each $t \in [T]$. \label{Tab:50_50} }
\end{table}

Table \ref{Tab:50_50} shows interesting patterns. First, Thompson Sampling implemented as in \citet{agrawal2013thompson} was the best performing `learning' method when no randomness was added, but performs terribly when the revenue had added randomness. Differently, the Least Squares and the Ridge Regression methods increased their relative performance greatly when randomness was added to the revenue term. Interestingly, adding uncertainty to ridge regression was a clear improvement when $W^t = W$, but it did not help when $W^t$ had randomness. These results show that how to apply Step 1. of Algorithm \ref{Alg:DualMirrorLearning}  should depend on the application and randomness. Finally, the results shown in Table \ref{Tab:50_50} should be considered just as illustrative as the methods' parameters were not tuned carefully, and neither the method's particular implementation as in the case of Thompson Sampling.  

\bibliographystyle{apalike}
\bibliography{references}

\section{Additional Theoretical Results and Examples}

\subsection{Different Cases for  $\arg \max_{\gamma \in [0,1]}$ $\mathrm{OPT}(\calP,\gamma)$} \label{App:CasesGamma}

Take the case of $T=1$, $\calZ = \{ [0,1] \}$, $\calW = \{ w_1, w_2 \}$ with equal probability of occurring, $b=1$, and $\alpha = 0.5$. Call $\Pi(\cdot \in A)$ to the function that takes the value of $0$ if condition $A$ holds and $-\infty$ otherwise. We show examples in which $\arg\max_{\gamma \in [0,1]}$ $\mathrm{OPT}(\calP,\gamma)$ match the different cases mentioned in the paper. In most of the examples below the upper bound cost constraint hold trivially, reason why we do not ``enforce'' it using  $\Pi(\cdot \le 1)$, with the only exception on the $\gamma = \tfrac{1}{2}$ example.

\textbf{Infinite solutions.} $f(z;\theta^*,w_1) = z$, $c(z;\theta^*,w_1) = z$, $f(z;\theta^*,w_2) = z$, $c(z;\theta^*,w_2) = z$. In this case $\mathbb{E}[f(z;\theta^*,w)] = z$ and $\mathbb{E}[c(z;\theta^*,w)] = z$. Then, for any $\gamma \in [0,1]$ we have
\begin{align*}
    \mathrm{OPT}(\calP,\gamma) = &  \frac{1}{2} \left( \underset{z \in [0,1]}{\max}  \left\lbrace z + \Pi(\tfrac{1}{2} \le z ) \right\rbrace
   +  \underset{z \in [0,1]}{\max}  \left\lbrace z + \Pi(\tfrac{1}{2} \le z ) \right\rbrace \right) 
\end{align*}
The equality comes directly from the definition of $\mathrm{OPT}(\calP,\gamma)$. Is direct to see that $z=1$ maximizes both optimization problems and that $\mathrm{OPT}(\calP) = \mathrm{OPT}(\calP,\gamma)$ for all $\gamma \in [0,1]$.

\textbf{No solution.} $f(z;\theta^*,w_1) = z$, $c(z;\theta^*,w_1) = 0$, $f(z;\theta^*,w_2) = 0$, $c(z;\theta^*,w_2) = 0$. Since the cost terms are always zero, the cost lower bound $0.5$ is never achieved and no feasible solution exist.

\textbf{$\gamma = \tfrac{1}{2}$ as unique solution}. $f(z;\theta^*,w_1) = z$, $c(z;\theta^*,w_1) = 0$, $f(z;\theta^*,w_2) = -z$, $c(z;\theta^*,w_2) = 2z$. In this case $\mathbb{E}[f(z;\theta^*,w)] = 0$ and $\mathbb{E}[c(z;\theta^*,w)] = z$. Then, for any $\gamma \in [0,1]$ we have
\begin{align*}
    \mathrm{OPT}(\calP,\gamma) = &  \frac{1}{2} \left(  \underset{z \in [0,1]}{\max} \left\lbrace (1-\gamma) z  
    + \Pi(\tfrac{1}{2} \le \gamma z ) \right\rbrace +  \underset{z \in [0,1]}{\max} \left\lbrace -(1-\gamma)z +  \Pi(\tfrac{1}{2} \le (2-\gamma) z ) + \Pi((2-\gamma) z \le 1) \right\rbrace \right) \\
    = &  \frac{1}{2} \left(  (1-\gamma) + \Pi(\tfrac{1}{2} \le \gamma) +  \underset{z \in [0,1]}{\max} \left\lbrace -(1-\gamma)z +  \Pi(\tfrac{1}{2} \le (2-\gamma) z ) + \Pi((2-\gamma) z \le 1) \right\rbrace \right) 
\end{align*}
The second equality uses that the first optimization problem has $z=1$ as its unique optimal solution whenever $\gamma \ne 1$ and that $0 = \mathrm{OPT}(\calP,1) < \mathrm{OPT}(\calP,0.5) = \tfrac{1}{6}$. Is direct from the result above that $\mathrm{OPT}(\calP, \gamma ) = -\infty$ for any $\gamma < 0.5$. Then, we have: 
\begin{align*}
    \mathrm{OPT}(\calP) = &  \frac{1}{2} \left(    \underset{z \in [0,1], \gamma \in [0.5,1)}{\max} (1-\gamma)  -(1-\gamma)z +  \Pi(\tfrac{1}{2} \le (2-\gamma) z ) + \Pi((2-\gamma) z \le 1) \right) \\
    = &  \frac{1}{2} \left( \underset{\gamma \in [0.5,1)}{\max} (1-\gamma) - \tfrac{1-\gamma}{2(2-\gamma)}  \right) 
\end{align*}
The first equality uses the definition of $\mathrm{OPT}(\calP)$ and that we have restricted $\gamma $ to be in $ [0.5,1)$. The second equality uses that for any $\gamma \in [0.5,1)$ the unique optimal is $z(\gamma) = \tfrac{1}{2(2-\gamma)}$ as it maximizes the term $-(1-\gamma)z$ by taking the smallest feasible $z$ value that satisfies the cost lower bound. Finally, the function $\xi(\gamma) := (1-\gamma) - \tfrac{1-\gamma}{2(2-\gamma)} $ is differentiable on $\gamma \in [0.5,1]$ and has strictly negative derivative on $\gamma \in [0.5,1]$, which implies $\xi(0.5) > \xi(\gamma)$  for any  $\gamma \in [0.5,1]$, proving that $\gamma = 0.5$ is the unique optimal solution.

\textbf{$\gamma = 0$ as unique solution}. $f(z;\theta^*,w_1) = z^2$, $c(z;\theta^*,w_1) = z$, $f(z;\theta^*,w_2) = -z$, $c(z;\theta^*,w_2) = 1-z$. In this case $\mathbb{E}[f(z;\theta^*,w)] = 0.5 (z^2-z)$ and $\mathbb{E}[c(z;\theta^*,w)] = 0.5$. Then, for any $\gamma \in [0,1]$ we have
\begin{align*}
    \mathrm{OPT}(\calP,\gamma) = &  \frac{1}{2} \bigg(  \underset{z \in [0,1]}{\max} \left\lbrace z^2 (1-\tfrac{\gamma}{2}) - z \tfrac{\gamma}{2}  
    + \Pi(\tfrac{1}{2} \le  (1-\gamma)z + \tfrac{\gamma}{2} ) \right\rbrace \\
    & + \underset{z \in [0,1]}{\max} \left\lbrace \tfrac{\gamma}{2} z^2 -z (1-\tfrac{\gamma}{2})  +  \Pi(\tfrac{1}{2} \le (1-\gamma)(1-z) + \tfrac{\gamma}{2} ) \right\rbrace \bigg) 
\end{align*}
To understand why $\gamma = 0$ is the unique solution let us analyze both maximization problems separately. The expression $\tfrac{\gamma}{2} z^2 -z (1-\tfrac{\gamma}{2})$ in the second maximization problem is non-positive in $(z,\gamma) \in [0,1]^2$ as we can write it as $(\tfrac{\gamma}{2}z^2 - \tfrac{1}{2} z) - z (\tfrac{1}{2}-\tfrac{\gamma}{2})$ where each term is non-positive. Then, an optimal solution for it is  $(z,\gamma) = (0,0)$ which also satisfies the lower cost constraints. Similarly, the expression $z^2 (1-\tfrac{\gamma}{2}) - z \tfrac{\gamma}{2}$ in $(z,\gamma) \in [0,1]^2$ of the first maximization problem has a maximum in $(z,\gamma) = (1,0)$, optimal pair which also satisfies the lower cost constraints.

\textbf{$\gamma = 1$ as unique solution}. $f(z;\theta^*,w_1) = z$, $c(z;\theta^*,w_1) = 0$, $f(z;\theta^*,w_2) = z$, $c(z;\theta^*,w_2) = z$. In this case $\mathbb{E}[f(z;\theta^*,w)] = z$ and $\mathbb{E}[c(z;\theta^*,w)] = 0.5z$. Then, for any $\gamma \in [0,1]$ we have
\begin{align*}
    \mathrm{OPT}(\calP,\gamma) = &  \frac{1}{2} \left(  \underset{z \in [0,1]}{\max} \left\lbrace  z
    + \Pi(\tfrac{1}{2} \le  \tfrac{\gamma}{2}z ) \right\rbrace +  \underset{z \in [0,1]}{\max} \left\lbrace z +  \Pi(\tfrac{1}{2} \le (1-\tfrac{\gamma}{2})z  ) \right\rbrace \right) \\
\end{align*}
The result is direct as $(z,\gamma) =(1,1)$ is the only pair in $[0,1]^2$ which makes the first optimization problem feasible.

\subsection{Bound on $\Delta_{\mathrm{Learn}}$}

Before stating this subsection result, we define an stricter version of Assumption \ref{Ass:DualVarBounded}

\begin{assumption}[(Stricter) Bounded Dual Iterates]\label{Ass:DualVarBoundedStricter}
There is an absolute constant $C_h' > 0$ such that $\lVert \lambda^t \rVert_1 \le C_h'$ for all $t \in [T]$ almost surely.
\end{assumption}

\begin{proposition}
Run Algorithm \ref{Alg:DualMirrorLearning} with a constant ``step-size'' rule $\eta_t \gets \eta$ for all $t \geq 1$ where $\eta > 0$. Suppose that Assumption \ref{Ass:DualVarBoundedStricter} holds and that $c(\cdot;\cdot,\cdot)$ is Lipschitz on its $\theta$ argument, in particular, that it exists $L_c> 0$, such that $\lVert c(z;\theta,w)-c(z;\theta',w) \rVert_\infty \le L_c \lVert \theta - \theta' \rVert_\theta$ for any $(z,w,\theta, \theta') \in \calZ \times \calW \times \Theta \times \Theta$.
Then, for any distribution $\calP$ over $w \in \calW$, it holds that 
$$\Delta_{Learn} \le L_c \left(1 + C_h' \right) \mathbb{E} \left\lbrack \sum_{t=1}^{\tau_A} \lVert \theta^* - \theta^t \rVert_{\theta}   \right\rbrack. $$
\end{proposition}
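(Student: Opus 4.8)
The plan is to bound $\Delta_{\mathrm{Learn}}$ by treating its two summands separately, in each case reducing the cost mismatch $c(z^t;\theta^*,w^t) - c(z^t;\theta^t,w^t)$ to $\lVert \theta^* - \theta^t \rVert_\theta$ via the Lipschitz hypothesis and controlling the dual multiplier through Assumption \ref{Ass:DualVarBoundedStricter}. Crucially, every estimate can be carried out pathwise, before taking the outer expectation, so the random stopping time $\tau_A$ causes no difficulty: I only ever bound a generic summand indexed by $t$, uniformly over the sample path.

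For the first summand, $\mathbb{E}[\sum_{t=1}^{\tau_A}(c(z^t;\theta^*,w^t)-c(z^t;\theta^t,w^t))^T\lambda^t]$, I would apply H\"older's inequality in the $\ell_\infty$/$\ell_1$ pairing to each inner product, giving $(c(z^t;\theta^*,w^t)-c(z^t;\theta^t,w^t))^T\lambda^t \le \lVert c(z^t;\theta^*,w^t)-c(z^t;\theta^t,w^t)\rVert_\infty \, \lVert \lambda^t \rVert_1$. The Lipschitz assumption bounds the first factor by $L_c\lVert \theta^*-\theta^t\rVert_\theta$, while Assumption \ref{Ass:DualVarBoundedStricter} bounds the second factor by $C_h'$ almost surely. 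Summing over $t$ and taking expectations then yields the bound $L_c C_h' \, \mathbb{E}[\sum_{t=1}^{\tau_A}\lVert \theta^*-\theta^t\rVert_\theta]$ for this summand.

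For the second summand, I would first push the $\ell_\infty$-norm inside the expectation using Jensen's inequality (the norm is convex), then use the triangle inequality to move it inside the sum, and finally invoke the Lipschitz hypothesis term by term:
\begin{align*}
\left\lVert \mathbb{E}\Big[\sum_{t=1}^{\tau_A} c(z^t;\theta^*,w^t)-c(z^t;\theta^t,w^t)\Big]\right\rVert_\infty
&\le \mathbb{E}\Big[\sum_{t=1}^{\tau_A}\lVert c(z^t;\theta^*,w^t)-c(z^t;\theta^t,w^t)\rVert_\infty\Big] \\
&\le L_c\,\mathbb{E}\Big[\sum_{t=1}^{\tau_A}\lVert \theta^*-\theta^t\rVert_\theta\Big].
\end{align*}
Multiplying by $\bar f/\underline b$ and adding the first bound gives $\Delta_{\mathrm{Learn}} \le L_c\,(C_h' + \bar f/\underline b)\,\mathbb{E}[\sum_{t=1}^{\tau_A}\lVert \theta^*-\theta^t\rVert_\theta]$; matching the stated constant $L_c(1+C_h')$ then corresponds to the normalization $\bar f \le \underline b$, while in general one may simply replace the ``$1$'' by $\bar f/\underline b$.

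The step I expect to require the most care is not any single inequality but the bookkeeping around $\tau_A$ together with the measurability of the summands: I must ensure the per-term H\"older/Lipschitz bounds are valid on every sample path, so that the almost-sure multiplier bound from Assumption \ref{Ass:DualVarBoundedStricter} can be applied inside (before) the expectation, and that the interchange of sum and expectation in the Jensen step is legitimate, which holds since the summands are integrable and $\tau_A \le T$ is bounded. Everything else reduces to routine applications of H\"older's inequality, the triangle inequality, and the Lipschitz property.
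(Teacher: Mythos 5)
Your proposal matches the paper's proof essentially line for line: H\"older's inequality in the $\ell_\infty$/$\ell_1$ pairing together with the Lipschitz hypothesis and the almost-sure bound $\lVert \lambda^t \rVert_1 \le C_h'$ from Assumption \ref{Ass:DualVarBoundedStricter} handles the first summand, and Jensen plus the triangle inequality plus Lipschitz handles the second, with $\tau_A \le T$ making all interchanges harmless. You are also right about the constant: the paper's own proof bounds the second summand by $L_c\,\mathbb{E}\bigl[\sum_{t=1}^{\tau_A}\lVert \theta^* - \theta^t \rVert_\theta\bigr]$ but never reinstates the $\bar f/\underline b$ coefficient from the definition of $\Delta_{\mathrm{Learn}}$, so the honest constant is $L_c(\bar f/\underline b + C_h')$ exactly as you observe, and the stated $L_c(1+C_h')$ is valid only under $\bar f \le \underline b$.
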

\begin{proof}
The proof is obtained directly by bounding each term of $\Delta_{Learn}$ separately. First,
\begin{align*}
     \mathbb{E} \left\lbrack \sum_{t=1}^{\tau_A} c(z^t;\theta^*,w^t) - c(z^t;\theta^t,w^t)  \right\rbrack  
     \le &  \mathbb{E} \left\lbrack \sum_{t=1}^{\tau_A} \lVert c(z^t;\theta^*,w^t) - c(z^t;\theta^t,w^t)) \rVert_\infty  \right\rbrack \\
     \le & L_c \mathbb{E} \left\lbrack \sum_{t=1}^{\tau_A}  \lVert \theta^* - \theta^t \rVert_{\theta}   \right\rbrack,
\end{align*}
where we have used above that $c(\cdot;\cdot,\cdot)$ its Lipschitz on its $\theta$ argument. Now, for any pair $x,y$ of real vectors of same dimension it holds $\lvert x^T y \rvert \le \lVert x \rVert_\infty \lVert y \rVert_1$. Using the latter fact and again that $c(\cdot;\cdot,\cdot)$ is Lipschitz on its $\theta$ argument, we have
\begin{align*}
     \mathbb{E} \left\lbrack \sum_{t=1}^{\tau_A} (c(z^t;\theta^*,w^t) - c(z^t;\theta^t,w^t))^T\lambda^t  \right\rbrack 
     \le & \mathbb{E} \left\lbrack \sum_{t=1}^{\tau_A} \lvert (c(z^t;\theta^*,w^t) - c(z^t;\theta^t,w^t))^T\lambda^t  \rvert \right\rbrack \\
     \le & \mathbb{E} \left\lbrack \sum_{t=1}^{\tau_A} \lVert c(z^t;\theta^*,w^t) - c(z^t;\theta^t,w^t)) \rVert_\infty \lVert \lambda^t \rVert_1  \right\rbrack \\
     \le & L_c \mathbb{E} \left\lbrack \sum_{t=1}^{\tau_A} \lVert \lambda^t \rVert_1 \lVert \theta^* - \theta^t \rVert_{\theta}   \right\rbrack \\
     \le &  L_c C_h' \mathbb{E} \left\lbrack \sum_{t=1}^{\tau_A} \lVert \theta^* - \theta^t \rVert_{\theta}   \right\rbrack.
\end{align*}
\end{proof}

\subsection{Proof That $\mathrm{OPT}(\calP) = \mathrm{OPT}(\calP,0)$ in the Linear Contextual Bandits Experiment and Solving it Efficiently.\label{Subsec:OptOfLinearContextualBandits}}

This appendix subsection shows the following three results. 1. That for any $\rho \ge 0.5$ we have $\mathrm{OPT}(\calP,\gamma) > -\infty$ for all $\gamma \in [0,1]$. 2. That $\mathrm{OPT}(\calP,\gamma) \le \mathrm{OPT}(\calP,0)$ for all $\gamma \in (0,1]$.  3. How to  efficiently solve $\mathrm{OPT}(\calP,0)$. Take $\calZ =\{ z \in \bbR_+^K: \sum_{i=1}^K z_i \le 1 \}$ and $\gamma \in [0,1]$ arbitrary. As notation, here we use superscripts to denote time (but also use $\cdot^T$ to denote dot operation between vectors when need), use subscripts to denote row indexes, and use $W$, $W'$, $W^t$, ${W'}^t$ to represent matrices of size $d \times n$. Also, to shorten notation, we write $\textbf{W}$ to define a sequence $ \{ W^1, \dots, W^T \}$ of $W^t$ matrices (analogous for $\textbf{W}'$). The traditional multiplication between a matrix $A$  of size $d \times n$ and a vector $x$ of size $n$ is written as $A x = ((A_1)^T x, \dots, (A_d)^T x)$. The term inside the outer expectation of $\mathrm{OPT}(\calP,\gamma)$ corresponds to (for $\gamma =1$ the outer expectation can be removed)
\begin{align*}
    O(\bW, \gamma): = \ \ & \underset{z^t \in \calZ: t \in [T]}{\max} \ \ (1-\gamma)\sum_{t = 1}^T (W^t \theta^*)^T z^t + \gamma \mathbb{E}_{W' \sim \calP}[(W' \theta^*)^T z^t] \nonumber \\
    & \text{s.t.} \ \ 0.5 * T \le \rho \sum_{t=1}^T \sum_{i=1}^d z_i^t  \le T.
\end{align*}
Notice that a solution $\bz = \{ z^1,\dots, z^T\}$ is either feasible or infeasible independently of the context vector arrivals $\bW = \{W^1,\dots,W^T\}$ and $\gamma$. For any $\rho \ge 0.5$ and $\gamma \in [0,1]$, it holds $\mathrm{OPT}(\calP, \gamma) > -\infty$ as we can choose $\bz$ satisfying $\sum_{i=1}^d z_i^t =0.5/\rho$ for all $t \in [T]$ (our problem setup uses $\rho =4$). A direct application of Jensen inequality shows $\mathrm{OPT}(\calP, 1) \le \mathrm{OPT}(\calP, 0)$, so let us take $\gamma \in (0,1)$ arbitrary. For any sequence $\bW$, let $\bz_{\gamma}(\bW)$ be an optimal solution of $O(\bW, \gamma)$, we have
\begin{align*}
    \mathrm{OPT}(\calP,\gamma) & = \mathbb{E}_{\bW \sim \calP^T} \left[ (1-\gamma) \sum_{t=1}^T (W^t \theta^*)^T z_{\gamma}^t(\bW) + \gamma \mathbb{E}_{W' \sim \calP} \left[ ({W'} \theta^*)^T z_{\gamma}^t(\bW)  \right]\right] \\
    & = \mathbb{E}_{\bW \sim \calP^T}  \left[ (1-\gamma) \sum_{t=1}^T (W^t \theta^*)^T z_{\gamma}^t(\bW)  \right] +  \mathbb{E}_{\bW \sim \calP^T}  \left[ \gamma  \sum_{t=1}^T \mathbb{E}_{W' \sim \calP} [({W'} \theta^*)^T z_{\gamma}^t(\bW)]  \right] \\
    & = \mathbb{E}_{\bW \sim \calP^T}  \left[ (1-\gamma) \sum_{t=1}^T (W^t \theta^*)^T z_{\gamma}^t(\bW)  \right] + \mathbb{E}_{\bW \sim \calP^T} \left[ \mathbb{E}_{\bW' \sim \calP^T}  \left[  \gamma  \sum_{t=1}^T ({W'}^t \theta^*)^T z_{\gamma}^t(\bW)  \right] \right] \\
    & = \mathbb{E}_{\bW \sim \calP^T}  \left[ (1-\gamma) \sum_{t=1}^T (W^t \theta^*)^T z_{\gamma}^t(\bW)  \right] + \mathbb{E}_{\bW \sim \calP^T, \bW' \sim \calP^T}  \left[  \gamma  \sum_{t=1}^T (({W}^t)^T \theta^*)^T z_{\gamma}^t(\bW')  \right] \\
    & = \mathbb{E}_{\bW \sim \calP^T} \left[  \sum_{t=1}^T (W^t \theta^*)^T \left((1-\gamma) z_{\gamma}^t(\bW) + \gamma \mathbb{E}_{\bW' \sim \calP^T}  [z_{\gamma}^t(\bW')] \right) \right] \\
    & \le \mathbb{E}_{\bW \sim \calP^T} \left[  \sum_{t=1}^T (W^t \theta^*)^T z_{0}^t(\bW)  \right] = \mathrm{OPT}(\calP, 0).
\end{align*}
The second equality uses the linearity of the expectation operator, the third uses that each ${W'}^t$ is sampled i.i.d. from $\calP$, the fourth that $\bW$ and $\bW'$ are i.i.d. and can be exchanged, the fifth uses the linearity of the expectation operator again, and the final inequality uses the definition of $\bz_0(\bW)$. In particular, the last inequality uses that $(1-\gamma) \bz_{\gamma}(\bW) + \gamma \mathbb{E}_{\bW' \sim \calP^T}  [\bz_{\gamma}(\bW')]$ is a feasible solution of $O(\bW,0)$. Finally, notice that for any given $\bW$ solving $O(\bW, 0)$ is equivalent to solving the following knapsack problem
\begin{align*}
    O(\bW, 0) &= \max_{y^t \in [0,1]: t \in T} \ \ \sum_{t=1}^T \left( \max_{i \in [d]} \, (W_i^t)^T \theta^* \right) y^t \nonumber \\
    & \text{s.t.} \ \ 0.5 * T \le \rho \sum_{t=1}^T y^t  \le T.
\end{align*}
Let $\{ m_1, \dots, m_T \}$ represent the sequence $\{ \max_{i \in [d]} \, (W_i^t)^T \theta^* \}_{t=1}^T$ ordered from biggest to smallest value. Then, is not hard to see that
\begin{align*}
    O(\bW, 0) &= \max_{i_{\max} \in \left[ \left\lceil \tfrac{T}{2\rho} \right\rceil, \left\lfloor \tfrac{T}{\rho} \right\rfloor \right]} \ \ \sum_{i=1}^{i_{max}} m_i, 
\end{align*}
where $\lceil \cdot \rceil$ and $\lfloor \cdot \rfloor$ are the traditional ceiling and floor integer functions respectively.

\section{Proofs \label{Chap:SupOnline_2}}

\subsection{Proof of Proposition \ref{Pro:WeakDuality}}
\begin{proof}
Let $\calP^T$ be the distribution from which the $(w^1,\dots,w^T)$ vectors are sampled, with each $w^t$ being sampled $i.i.d.$ from $\calP$. For any $\gamma \in [0,1]$
\begin{align*}
    & \mathrm{OPT}(\calP, \gamma)\\
    & = \mathbb{E}_{\calP^T}
    \left\lbrack
    \begin{array}{c l}	
         \underset{z^t \in \calZ: t \in [T]}{\max} \ \ \sum_{t=1}^T (1-\gamma) f(z^t;\theta^*,w^t) +\gamma \mathbb{E}_{\calP}[f(z^t;\theta^*,w)]    \\
         \textrm{s.t.} \ \ T \alpha_k b_k \le \sum_{t=1}^T (1-\gamma) c_k(z^t;\theta^*,w^t) + \gamma \mathbb{E}_{\calP}[c_k(z^t;\theta^*,w)]  \le   T b_k \textrm{ for all } k \in {[K]} 
    \end{array}
    \right\rbrack \\
    & \le \mathbb{E}_{\calP^T} \Bigg\lbrack \underset{z^t \in \calZ: t\in [T]}{\max} \Bigg\lbrace \sum_{t=1}^T  (1-\gamma) \left( f(z^t;\theta^*,w^t) - \lambda^T c(z^t;\theta^*,w^t) \right)  + \gamma \mathbb{E}_\calP [ f(z^t;\theta^*,w) - \lambda^T c(z^t;\theta^*,w)] \Bigg\rbrace + T p(\lambda) \Bigg\rbrack \\
    & = \mathbb{E}_{\calP^T} \left\lbrack \sum_{t=1}^T \underset{z^t \in \calZ: t \in {T}}{\max}   (1-\gamma) \left( f(z^t;\theta^*,w^t) - \lambda^T c(z^t;\theta^*,w^t) \right) + \gamma \mathbb{E}_\calP [ f(z^t;\theta^*,w) - \lambda^T c(z^t;\theta^*,w)]   \right\rbrack   + T p(\lambda) \\
    & \le (1-\gamma) \mathbb{E}_{\calP^T} \left\lbrack \sum_{t=1}^T \underset{z^t \in \calZ: t \in {T}}{\max}  \,   f(z^t;\theta^*,w^t) - \lambda^T c(z^t;\theta^*,w^t)  \right\rbrack \\
    & + \gamma \mathbb{E}_{\calP^T} \left\lbrack \sum_{t=1}^T \underset{z^t \in \calZ: t \in {T}}{\max} \mathbb{E}_\calP [ f(z^t;\theta^*,w) - \lambda^T c(z^t;\theta^*,w)]   \right\rbrack + T p(\lambda) \\
    & \le (1-\gamma) T \mathbb{E}_{\calP} \left\lbrack  \underset{z \in \calZ}{\max}  \, f(z;\theta^*,w) - \lambda^T c(z;\theta^*,w)  \right\rbrack  + \gamma T \, \underset{z \in \calZ}{\max} \, \mathbb{E}_\calP \left\lbrack f(z;\theta^*,w) - \lambda^T c(z;\theta^*,w) \right\rbrack + T p(\lambda)  \\
    & \le (1-\gamma) T \mathbb{E}_{\calP} \left\lbrack  \varphi(\lambda;\theta^*,w) \right\rbrack 
     + \gamma T  \mathbb{E}_\calP \left\lbrack \underset{z \in \calZ}{\max}  \, f(z;\theta^*,w) - \lambda^T c(z;\theta^*,w) \right\rbrack  + T p(\lambda) \\
     & = T \mathbb{E}_{\calP} \left\lbrack   \varphi(\lambda;\theta^*,w) \right\rbrack + T p(\lambda) \\
     & = T D(\lambda;\theta^*)
\end{align*}
The first equality is the definition of $\mathrm{OPT}(\calP, \gamma)$, the first inequality uses Lagrangian duality for both the lower and upper bounds constraints, the second equality uses that $p(\lambda)$ can be moved outside the expectation and that the sum can be changed with the maximization operator as there is no constraint linking the $z^t$ variables. The second inequality uses that for any $a(\cdot)$ and $b(\cdot)$ real valued functions we have  $max_{z \in \calZ}$ $\{a(z) + b(z)\}$ $\le $ $max_{z \in \calZ}$ $a(z)$ $+$ $max_{z \in \calZ}$ $b(z)$, the third inequality uses that all $w^t$ are $i.i.d$ sampled, that all maximization problems are the same in the first term, and that the outer expectation can be removed from the second term. The fourth inequality uses the definition of $\varphi(\cdot;\cdot,\cdot)$ and that $\max_{z \in \calZ} \, \mathbb{E}_{\calP}[\cdot] \le \mathbb{E}_{\calP}[ \max_{z \in \calZ} \, \cdot]$. Finally, we use the definition of $\varphi(\cdot;\cdot,\cdot)$ again and the fact that $\gamma + (1-\gamma) =1$.
\end{proof}

\subsection{Proof of Proposition \ref{Pro:StochSubg}}
\begin{proof}
First note that the $p(\cdot)$ function used inside $D(\cdot;\cdot)$ is convex since $b \geq 0$ and $\alpha \in [-1, 1)^K$. We need to prove that
$D(\lambda;\theta) +\mathbb{E}_\calP[\tilde{g}(\lambda;\theta,w)]^T(\lambda' -\lambda) \le D(\lambda';\theta)$ for any $\lambda \in \Lambda$ and $\lambda' \in \Lambda$. Let $p'$ be any member of $ \partial p(\lambda)$, we have
\begin{align*}
     D(\lambda;\theta) +\mathbb{E}_\calP[\tilde{g}(\lambda;\theta,w)]^T(\lambda' -\lambda) 
    = & \mathbb{E}_{\calP} [\varphi(\lambda;\theta,w) +   p(\lambda) +  \tilde{g}(\lambda;\theta,w)^T(\lambda'-\lambda)] \\
    = & \mathbb{E}_{\calP} [f(z(\lambda;\theta,w);\theta,w) - (\lambda')^T c(z(\lambda;\theta,w);\theta,w) + p(\lambda) + {p'}^T (\lambda' - \lambda)] \\
    \le & \mathbb{E}_{\calP} [f(z(\lambda;\theta,w);\theta,w) - (\lambda')^T c(z(\lambda;\theta,w);\theta,w) + p(\lambda')] \\
    \le & D(\lambda';\theta).
\end{align*}
The first equality uses the definition of $D(\lambda;\theta)$, the second equality uses the definition of $\tilde{g}(\lambda;\theta,w)$, the first inequality uses the subgradient inequlity for $p(\cdot)$, and the second inequality uses the definition of $D(\lambda';\theta)$.
\end{proof}

\subsection{Intermediate Results}

The following propositions were not mentioned in the paper. Proposition \ref{Pro:KKT_for_h} shows an inequality that holds for Step 7. of Algorithm \ref{Alg:DualMirrorLearning} under the conditions given for $\Lambda$ and $h(\cdot)$ on the paper. Propositions \ref{Pro:StopTimeBound} and \ref{Pro:CorrectRegretUpToTau_A} are intermediate steps to prove Theorem \ref{Thm:Master}. Proposition \ref{Pro:StopTimeBound} bounds  $T- \tau_A$ in expectation. Proposition \ref{Pro:CorrectRegretUpToTau_A} shows an upper bound for the regret that Algorithm \ref{Alg:DualMirrorLearning} up to period $\tau_A$. Proposition \ref{Pro:KeySlater} is the key result needed to prove Proposition \ref{Prop:DualQuantityBounded}.

\begin{proposition}\label{Pro:KKT_for_h}
Let $\Lambda \subseteq \bbR^K$ be a set which can be defined separately for each dimension $k \in [K]$,  either being $\Lambda_k = \bbR$ or $\Lambda_k = \bbR_+$. Let $h(\cdot): \Lambda \rightarrow \bbR$ be a function that satisfies $h(\lambda) = \sum_{k = 1}^K$ $h_k(\lambda_k)$, with $h_k(\cdot)$ being a strongly convex univariate differentiable function for all $k \in [K]$. Given arbitrary $\lambda' \in \Lambda$, $\tilde{g} \in \bbR^K$, and $\eta >0$ define $ \lambda^+ = \arg \min_{\lambda \in \Lambda} \, \lambda^T \tilde{g}^t + \tfrac{1}{\eta} V_h(\lambda, \lambda')$. Then, for all $k \in [K]$ it holds
\begin{enumerate}
    \item If $\Lambda_k = \bbR$, then $\dot{h}_k(\lambda_k^+) = \dot{h}_k(\lambda_k') - \eta \tilde{g}_k$.
    \item If $\Lambda_k = \bbR_+$, then $\dot{h}_k(\lambda_k^+) = \dot{h}_k(\lambda_k') - \eta \tilde{g}_k$ if $\lambda_k^+>0$ or $\dot{h}_k(\lambda_k^+) \ge \dot{h}_k(\lambda_k') - \eta \tilde{g}_k$ if $\lambda_k^+=0$.
\end{enumerate}
Therefore, proving that $\nabla h(\lambda^+) \ge \nabla h(\lambda') - \eta \tilde{g}$.
\end{proposition}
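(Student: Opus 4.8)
The plan is to exploit the separability of both the objective and the feasible set $\Lambda$ so that the minimization defining $\lambda^+$ decouples into $K$ independent univariate problems, each of which is dispatched by a first-order optimality condition. Writing $\Phi(\lambda) := \lambda^T \tilde{g} + \tfrac{1}{\eta} V_h(\lambda, \lambda')$ and recalling $V_h(\lambda, \lambda') = h(\lambda) - h(\lambda') - \nabla h(\lambda')^T(\lambda - \lambda')$, the separability hypothesis $h(\lambda) = \sum_k h_k(\lambda_k)$ yields $\Phi(\lambda) = \sum_{k=1}^K \Phi_k(\lambda_k)$ with $\Phi_k(\lambda_k) = \tilde{g}_k \lambda_k + \tfrac{1}{\eta}[h_k(\lambda_k) - h_k(\lambda_k') - \dot{h}_k(\lambda_k')(\lambda_k - \lambda_k')]$. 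Since each $h_k$ is strongly convex, each $\Phi_k$ is strongly convex, hence coercive, and minimizing $\Phi$ over $\Lambda = \Lambda_1 \times \cdots \times \Lambda_K$ splits into $K$ separate strongly convex univariate minimizations, each attaining a unique minimizer $\lambda_k^+$.

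First I would differentiate: $\dot{\Phi}_k(\lambda_k) = \tilde{g}_k + \tfrac{1}{\eta}[\dot{h}_k(\lambda_k) - \dot{h}_k(\lambda_k')]$. For a coordinate with $\Lambda_k = \bbR$, the minimization is unconstrained, so the unique minimizer satisfies stationarity $\dot{\Phi}_k(\lambda_k^+) = 0$; rearranging gives $\dot{h}_k(\lambda_k^+) = \dot{h}_k(\lambda_k') - \eta \tilde{g}_k$, which is case 1. For a coordinate with $\Lambda_k = \bbR_+$, I would invoke the first-order optimality condition for minimizing a differentiable convex function over the closed half-line $[0,\infty)$: if $\lambda_k^+ > 0$ the minimizer is interior and stationarity $\dot{\Phi}_k(\lambda_k^+) = 0$ again yields the equality $\dot{h}_k(\lambda_k^+) = \dot{h}_k(\lambda_k') - \eta \tilde{g}_k$; if $\lambda_k^+ = 0$, optimality requires the one-sided derivative into the feasible region to be nonnegative, i.e. $\dot{\Phi}_k(0) \ge 0$, which upon rearranging is exactly $\dot{h}_k(\lambda_k^+) \ge \dot{h}_k(\lambda_k') - \eta \tilde{g}_k$. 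This establishes case 2.

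The final displayed inequality then follows by assembling the coordinates: in every case I have shown $\dot{h}_k(\lambda_k^+) \ge \dot{h}_k(\lambda_k') - \eta \tilde{g}_k$ (with equality except possibly at an active nonnegativity constraint), so stacking over $k \in [K]$ gives $\nabla h(\lambda^+) \ge \nabla h(\lambda') - \eta \tilde{g}$ componentwise. I do not expect a serious obstacle, since the argument is a routine application of convex first-order conditions; the only point requiring care is the boundary case $\lambda_k^+ = 0$ under $\Lambda_k = \bbR_+$, where one must use the correct one-sided (directional) optimality condition rather than plain stationarity, and verify that its sign matches the claimed inequality. Stating attainment and uniqueness of the minimizer cleanly—guaranteed by strong convexity of each $h_k$—is the other small preliminary worth recording at the outset.
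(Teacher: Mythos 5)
Your proof is correct and follows essentially the same route as the paper: decompose the mirror-descent step into $K$ independent strongly convex univariate problems (the same $\phi_k$/$\Phi_k$ functions), get existence of the minimizer from strong convexity, and apply first-order optimality per coordinate, with the only cosmetic difference that you express the boundary case $\lambda_k^+ = 0$ via the one-sided derivative condition $\dot{\Phi}_k(0) \ge 0$ while the paper writes the equivalent condition $0 \in \partial\bigl(\phi_k(\lambda_k^+) + \Pi_+(\lambda_k^+)\bigr)$ using the subdifferential of the indicator of $\bbR_+$. Your sign bookkeeping at the boundary is also correct, so nothing further is needed.
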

\begin{proof}
Notice that $\min_{\lambda \in \Lambda} \, \lambda^T \tilde{g}^t + \tfrac{1}{\eta} V_h(\lambda, \lambda')$ $=$ $\sum_{k \in [K]}$ $\min_{\lambda_k \in \Lambda_k} \phi_k(\lambda_k;\lambda_k',\tilde{g}_k)$ with $\phi_k(\lambda_k;\lambda_k',\tilde{g}_k) := \tilde{g}_k \lambda_k + \tfrac{1}{\eta}(h_k(\lambda_k) - h_k(\lambda_k') - \dot{h}_k(\lambda_k')(\lambda_k - \lambda_k'))$ for all $k \in [K]$.  Then, independently per coordinate we minimize a strongly convex function under a non-empty closed convex set, which shows that $\lambda_k^+$ exists for each $k \in [K]$. Also, $\lambda_k^+$ can be found using first order necessary optimality conditions for each $k \in [K]$. Taking $k \in [K]$ arbitrary, we split the proof in two cases.

$\Lambda_k = \bbR$. By first order optimality conditions we immediately obtain $\dot{h}_k(\lambda_k^+) = \dot{h}(\lambda_k') - \eta \tilde{g}_k$.

$\Lambda_k = \bbR_+$. Define $\Pi_+(\cdot): \bbR \rightarrow \{0\} \cup \{\infty\}$ as the convex function that takes the value of $0$ if its input is non-negative and $\infty$ otherwise. Then, the minimization problem for dimension $k$ can be re-written as $\min_{\lambda_k \in \Lambda_k} \phi_k(\lambda_k;\lambda_k',\tilde{g}_k) + \Pi_+(\lambda_k)$. First order necessary optimality conditions imply $0 \in \partial(\phi_k(\lambda_k^+;\lambda_k',\tilde{g}_k) + \Pi_+(\lambda_k^+)  )$. Then, there exists $y \in \partial (\Pi_+(\lambda_k^+))$, such that $\dot{h}_k(\lambda_k^+) = \dot{h}(\lambda_k') - \eta \tilde{g}_k - \eta y$. The result is obtained directly using that $\partial(\Pi_+(\lambda_k))$ is equal to $\{0\}$ when $\lambda_k >0$ and equal to $\bbR_-$ when $\lambda_k = 0$. 
\end{proof}

\begin{proposition}\label{Pro:StopTimeBound}
Run Algorithm \ref{Alg:DualMirrorLearning} with a constant ``step-size'' rule $\eta_t \gets \eta$ for all $t \geq 1$ where $\eta > 0$. Suppose that Assumption \ref{Ass:DualVarBoundedStricter} holds and take $\tau_A$ as in Definition \ref{Def:StopTimeTauA}. Then, 
\begin{align*}
\mathbb{E} \left[ T - \tau_{A} \right]  & \le \frac{\bar{C}}{\underline{b}}+\frac{C_h + \lVert \nabla h(\lambda^1) \rVert_\infty}{\eta \underline{b}} + \frac{\lVert \mathbb{E}[ \sum_{t=1}^{\tau_{A}} c(z^t;\theta^*,w^t) - c(z^t;\theta^t,w^t) ] \rVert_\infty }{\underline{b}}.
\end{align*}
\end{proposition}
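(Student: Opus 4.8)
The plan is to convert the early-stopping behaviour of Algorithm \ref{Alg:DualMirrorLearning} into a bound on $T-\tau_A$ by combining the budget-depletion condition that triggers the stopping time with a telescoped mirror-descent inequality. Since $T-\tau_A=0$ on the event $\{\tau_A=T\}$, it suffices to work on $\{\tau_A<T\}$. On this event Step 5 fired, so by Definition \ref{Def:StopTimeTauA} there is a (random) binding coordinate $k^\ast\in[K]$ with $\sum_{t=1}^{\tau_A}c_{k^\ast}(z^t;\theta^\ast,w^t)>b_{k^\ast}T-\bar{C}$; i.e.\ the true consumption in coordinate $k^\ast$ is within $\bar C$ of its budget $b_{k^\ast}T$. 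The goal is to show that this near-depletion forces $\tau_A$ to be close to $T$.

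First I would produce, for every coordinate $k$, a lower bound on $\tau_A b_k$ in terms of the consumption under the \emph{learned} parameters. Applying Proposition \ref{Pro:KKT_for_h} to Step 7 gives $\nabla h(\lambda^{t+1})\ge \nabla h(\lambda^t)-\eta\tilde g^t$ componentwise, and chaining these inequalities pathwise up to $\tau_A$ yields $\dot h_k(\lambda_k^{\tau_A+1})\ge \dot h_k(\lambda_k^1)-\eta\sum_{t=1}^{\tau_A}\tilde g_k^t$ for each $k$. Because $\alpha_k<1$, the budget multiplier satisfies $\one(\lambda_k^t\ge 0)+\alpha_k\one(\lambda_k^t<0)\le 1$, so $\tilde g_k^t\le b_k-c_k(z^t;\theta^t,w^t)$ and hence $\sum_{t=1}^{\tau_A}\tilde g_k^t\le \tau_A b_k-\sum_{t=1}^{\tau_A}c_k(z^t;\theta^t,w^t)$. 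Rearranging the telescoped inequality gives
\[
\tau_A b_k\ \ge\ \sum_{t=1}^{\tau_A}c_k(z^t;\theta^t,w^t)+\tfrac1\eta\bigl(\dot h_k(\lambda_k^1)-\dot h_k(\lambda_k^{\tau_A+1})\bigr).
\]
Specializing to $k=k^\ast$, replacing the learned consumption by the true one via $\sum_{t}c_{k^\ast}(z^t;\theta^t,w^t)=\sum_{t}c_{k^\ast}(z^t;\theta^\ast,w^t)-\sum_{t}\bigl(c_{k^\ast}(z^t;\theta^\ast,w^t)-c_{k^\ast}(z^t;\theta^t,w^t)\bigr)$, and inserting the depletion inequality $\sum_{t}c_{k^\ast}(z^t;\theta^\ast,w^t)>b_{k^\ast}T-\bar C$, I obtain
\[
b_{k^\ast}(T-\tau_A)\ \le\ \bar C+\sum_{t=1}^{\tau_A}\bigl(c_{k^\ast}(z^t;\theta^\ast,w^t)-c_{k^\ast}(z^t;\theta^t,w^t)\bigr)+\tfrac1\eta\bigl(\lVert\nabla h(\lambda^{\tau_A+1})\rVert_\infty+\lVert\nabla h(\lambda^1)\rVert_\infty\bigr).
\]
Dividing by $b_{k^\ast}\ge\underline b>0$ (legitimate as $T-\tau_A\ge 0$) and taking expectations produces the three terms of the claimed bound. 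The gradient term is where Assumption \ref{Ass:DualVarBoundedStricter} is needed: since the stopping index $\tau_A+1$ is random, a per-time in-expectation bound on $\lVert\nabla h(\lambda^t)\rVert_\infty$ does not control the value at the stopping time, whereas the almost-sure iterate bound yields $\lVert\nabla h(\lambda^{\tau_A+1})\rVert_\infty\le C_h$ pathwise, giving the $\tfrac1\eta(C_h+\lVert\nabla h(\lambda^1)\rVert_\infty)$ contribution after dividing by $\underline b$.

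The main obstacle is the learning term, which enters through the \emph{random} binding coordinate $k^\ast$: I must bound $\mathbb E\bigl[\one(\tau_A<T)\sum_{t=1}^{\tau_A}(c_{k^\ast}(z^t;\theta^\ast,w^t)-c_{k^\ast}(z^t;\theta^t,w^t))\bigr]$ by $\bigl\lVert\mathbb E[\sum_{t=1}^{\tau_A}c(z^t;\theta^\ast,w^t)-c(z^t;\theta^t,w^t)]\bigr\rVert_\infty$, i.e.\ pass from the discrepancy in a data-dependent coordinate to the sup-norm of the \emph{expected} discrepancy vector. I would handle this by partitioning $\{\tau_A<T\}$ according to which coordinate binds (breaking ties by least index) and relating the resulting sum to the worst coordinate of the expected discrepancy. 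Care is needed here because $k^\ast$ is correlated with the discrepancy itself, so the naive pointwise estimate $(\cdot)_{k^\ast}\le\lVert\cdot\rVert_\infty$ only delivers $\mathbb E\lVert\cdot\rVert_\infty$, which exceeds the stated $\lVert\mathbb E[\cdot]\rVert_\infty$. In any case this term is further upper bounded downstream (in the bound on $\Delta_{\mathrm{Learn}}$) by $\mathbb E[\sum_{t=1}^{\tau_A}\lVert c(z^t;\theta^\ast,w^t)-c(z^t;\theta^t,w^t)\rVert_\infty]$, so the qualitative conclusion — that $\mathbb E[T-\tau_A]$ is small once the parameters are well learned — is unaffected.
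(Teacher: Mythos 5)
Your argument is, step for step, the paper's own proof of Proposition \ref{Pro:StopTimeBound}: single out the binding coordinate (the paper's $k'$, your $k^\ast$), bound $\tilde g_{k'}^t \le b_{k'} - c_{k'}(z^t;\theta^t,w^t)$ using $\alpha_{k'}<1$, telescope the componentwise inequality $\dot h_{k'}(\lambda_{k'}^{t+1}) \ge \dot h_{k'}(\lambda_{k'}^t) - \eta\tilde g_{k'}^t$ from Proposition \ref{Pro:KKT_for_h} up to $\tau_A$, insert the near-depletion inequality from Definition \ref{Def:StopTimeTauA}, and divide by $b_{k'} \ge \underline b$. Your observation about why an almost-sure iterate bound (rather than the per-fixed-$t$ in-expectation bound of Assumption \ref{Ass:DualVarBounded}) is needed to control $\dot h_{k'}(\lambda_{k'}^{\tau_A+1})$ at the \emph{random} time $\tau_A+1$ is exactly right, and it explains why the proposition's hypothesis cites Assumption \ref{Ass:DualVarBoundedStricter}; note only that the latter bounds $\lVert \lambda^t \rVert_1$ by $C_h'$ rather than $\lVert \nabla h(\lambda^t)\rVert_\infty$ by $C_h$ (the two coincide up to norm equivalence in the Euclidean case), a conflation the paper itself makes.

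The obstacle you flag in your final paragraph is genuine --- and it is not resolved in the paper either. The paper's last displayed implication passes directly from the pathwise quantity $\sum_{t=1}^{\tau_A}\bigl(c_{k'}(z^t;\theta^*,w^t)-c_{k'}(z^t;\theta^t,w^t)\bigr)$, with $k'$ random and correlated with the discrepancy, to $\bigl\lVert \mathbb{E}[\sum_{t=1}^{\tau_A} c(z^t;\theta^*,w^t)-c(z^t;\theta^t,w^t)]\bigr\rVert_\infty$, which is precisely the unjustified interchange you identify: the legitimate pointwise estimate gives $\mathbb{E}\bigl[\max_{k}\sum_t(\cdot)_k\bigr]$, which dominates $\lVert\mathbb{E}[\cdot]\rVert_\infty$ by Jensen, so the paper asserts the \emph{smaller} quantity without proof. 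Your suggested partitioning over the binding coordinate does not repair this (each cell of the partition still correlates $k^\ast$ with the discrepancy), so rather than chase the stated form, the clean fix is to restate the third term as $\mathbb{E}\bigl[\lVert \sum_{t=1}^{\tau_A} c(z^t;\theta^*,w^t)-c(z^t;\theta^t,w^t)\rVert_\infty\bigr]/\underline b$; this costs nothing downstream, since the supplementary bound on $\Delta_{\mathrm{Learn}}$ already proceeds through $\mathbb{E}\bigl[\sum_{t=1}^{\tau_A}\lVert c(z^t;\theta^*,w^t)-c(z^t;\theta^t,w^t)\rVert_\infty\bigr] \le L_c\,\mathbb{E}[\sum_{t=1}^{\tau_A}\lVert\theta^*-\theta^t\rVert_\theta]$, and the term vanishes identically in the known-$\theta^*$ case where the $O(\sqrt{T})$ results live. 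In short: your proof is the paper's proof, carried out with more care, and the one step you decline to certify is a step the paper should not have certified either.
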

 \begin{proof}
Let $k' \in [K]$ be the index of the first violated upper cost bound, \textit{i.e.} the index which activates the stop time $\tau_A$. Here we assume that some upper cost bound constraint is violated,  \textit{i.e.} that $\tau_A < T$, if not the result is trivial. Step 6. of Algorithm \ref{Alg:DualMirrorLearning} defines $\tilde{g}_{k'}^t  = - c_{k'}(z^t;\theta^t,w^t) + b_{k'} \left(\mathds{1}(\lambda_{k'} \ge 0) + \alpha_{k'} \mathds{1}(\lambda_{k'} < 0) \right)$, which can be upper bounded by  $\tilde{g}_{k'}^t \le -c_{k'}(z^t;\theta^t,w^t) + b_{k'}$. Using the definition of $\tau_A$ and $\tilde{g}_{k'}^t$ we have
\begin{align*}
    \sum_{t=1}^{\tau_A} \tilde{g}_{k'}^t  \le & b_{k'} \tau_A - \sum_{t=1}^{\tau_A}  c_{k'}(z^t;\theta^*,w^t)  + \left(  \sum_{t=1}^{\tau_A}  (c_{k'}(z^t;\theta^*,w^t) - c_{k'}(z^t;\theta^t,w^t))\right)  \\
    \le & b_{k'} \tau_A -b_{k'} T  +  \bar{C}  + \left(  \sum_{t=1}^{\tau_A}  (c_{k'}(z^t;\theta^*,w^t) - c_{k'}(z^t;\theta^t,w^t))\right)  \\
     \Rightarrow  T - \tau_A  \le & \frac{1}{b_{k'}}\left( \bar{C} - \sum_{t=1}^{\tau_A}  \tilde{g}_{k'}^t \right)  + \frac{1}{b_{k'}}\left(  \sum_{t=1}^{\tau_A}  (c_{k'}(z^t;\theta^*,w^t) - c_{k'}(z^t;\theta^t,w^t))\right).
\end{align*}
Using that our update rule satisfies 
$\dot{h}_{k'}(\lambda_{k'}^{t+1}) \ge \dot{h}_{k'}(\lambda_{k'}^t) - \eta \tilde{g}_{k'}^t$ for all $t \le \tau_A$ and the definitions of $\underline{b}$ and $C_h$, we get
\begin{align*}
   - \sum_{t=1}^{\tau_{A}}  \tilde{g}_{k'}^t & \le \frac{1}{\eta} \left( \dot{h}_{k'}(\lambda_{k'}^{\tau_{A}+1}) - \dot{h}_{k'}(\lambda_{k'}^1) \right) \\
   \Rightarrow T - \tau_A  & \le \frac{\bar{C}}{b_{k'}}+\frac{\dot{h}_{k'}(\lambda_{k'}^{\tau_A+1}) - \dot{h}_{k'}(\lambda_{k'}^1)}{\eta b_{k'}}  + \left( \frac{\sum_{t=1}^{\tau_A} (c_{k'}(z^t;\theta^*,w^t) - c_{k'}(z^t;\theta^t,w^t))}{b_{k'}} \right)    \\
    \Rightarrow \mathbb{E} \left[ T - \tau_{A} \right]  & \le \frac{\bar{C}}{\underline{b}}+\frac{C_h + \lVert \nabla h(\lambda^1) \rVert_\infty}{\eta \underline{b}}  + \left( \frac{ \lVert \mathbb{E}[ \sum_{t=1}^{\tau_A}  c(z^t;\theta^*,w^t) - c(z^t;\theta^t,w^t) ] \rVert_\infty }{\underline{b}} \right)   
\end{align*}
\end{proof}

\begin{proposition}\label{Pro:CorrectRegretUpToTau_A}
Run Algorithm \ref{Alg:DualMirrorLearning} with a constant ``step-size'' rule $\eta_t \gets \eta$ for all $t \geq 1$ where $\eta > 0$. Denote $\bar{\lambda}^{\tau_{A}} = \frac{\sum_{t=1}^{\tau_{A}} \lambda^t}{\tau_{A}}$ ($\tau_{A}$ as in Definition \ref{Def:StopTimeTauA}). It holds
\begin{align*}
    \mathbb{E} \left\lbrack  \tau_{A} D(\bar{\lambda}^{\tau_{A}};\theta^*) - \sum_{t=1}^{\tau_{A}}  f(z^t;\theta^t,w^t)  \right\rbrack  \le & \frac{ 2 (\bar{C}^2 +\bar{b}^2)}{\sigma_1} \eta \mathbb{E}[\tau_{A}] + \frac{1}{\eta} V_h(\lambda,\lambda^1)   \\
    + & \mathbb{E} \left\lbrack \sum_{t=1}^{\tau_A} (c(z^t;\theta^*,w^t) - c(z^t;\theta^t,w^t))^T\lambda^t  \right\rbrack.
\end{align*}
\end{proposition}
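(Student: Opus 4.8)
The plan is to combine a Jensen-type reduction, one exact per-period accounting identity for the dual, and the textbook mirror-descent telescoping, while carefully tracking the random horizon $\tau_A$ and the gap created by using the estimate $\theta^t$ in place of $\theta^*$.

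First I would use convexity of $D(\cdot;\theta^*)$ — it is a pointwise supremum of affine functions plus the convex term $p(\cdot)$ — to pass from the averaged iterate to the iterates themselves: since $\Lambda$ is convex we have $\bar{\lambda}^{\tau_A}\in\Lambda$, so Jensen's inequality gives $\tau_A D(\bar{\lambda}^{\tau_A};\theta^*)\le\sum_{t=1}^{\tau_A} D(\lambda^t;\theta^*)$. It therefore suffices to bound $\mathbb{E}[\sum_{t=1}^{\tau_A}(D(\lambda^t;\theta^*)-f(z^t;\theta^t,w^t))]$. The key algebraic input is a per-period identity: because $p(\cdot)$ is piecewise linear and positively homogeneous, Euler's identity gives $p(\lambda^t)=(\lambda^t)^T g_p(\lambda^t)$, where $g_p(\lambda^t)$ is exactly the $(b,\alpha)$-part of $\tilde g^t$ from Proposition \ref{Pro:StochSubg}; combined with the optimality defining $z^t$, namely $\varphi(\lambda^t;\theta^t,w^t)=f(z^t;\theta^t,w^t)-(\lambda^t)^T c(z^t;\theta^t,w^t)$, this yields the exact relation $\varphi(\lambda^t;\theta^t,w^t)+p(\lambda^t)=f(z^t;\theta^t,w^t)+(\lambda^t)^T\tilde g^t$.

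Using $D(\lambda^t;\theta^*)=\mathbb{E}_\calP[\varphi(\lambda^t;\theta^*,w)]+p(\lambda^t)$ together with this relation, I would split each term as
\[ D(\lambda^t;\theta^*)-f(z^t;\theta^t,w^t)=\underbrace{\big(\mathbb{E}_\calP[\varphi(\lambda^t;\theta^t,w)]-\varphi(\lambda^t;\theta^t,w^t)\big)}_{\text{martingale difference}}+\underbrace{\big(D(\lambda^t;\theta^*)-D(\lambda^t;\theta^t)\big)}_{\text{learning gap}}+(\lambda^t)^T\tilde g^t. \]
The first bracket has zero conditional mean given $\calH^{t-1}$ (the quantities $\theta^t,\lambda^t$ are $\calH^{t-1}$-measurable and $w^t\sim\calP$ is independent of $\calH^{t-1}$); since $\tau_A$ is a stopping time and the summands are uniformly bounded through $\bar{f},\bar{C},\bar{b}$, an optional-stopping argument makes its expected partial sum up to $\tau_A$ vanish. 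The learning gap equals $\mathbb{E}_\calP[\varphi(\lambda^t;\theta^*,w)-\varphi(\lambda^t;\theta^t,w)]$, and this is where the parameter error enters; exploiting the primal optimality that characterizes the inner maximizers I would reduce it to the cost-difference term $\mathbb{E}[\sum_{t=1}^{\tau_A}(c(z^t;\theta^*,w^t)-c(z^t;\theta^t,w^t))^T\lambda^t]$ in the statement. For the third piece I would run the standard mirror-descent telescoping: the variational characterization of the Bregman step (Proposition \ref{Pro:KKT_for_h}) together with $\sigma_1$-strong convexity of $h$ in $\|\cdot\|_1$ gives the one-step inequality $\eta(\lambda^t-\lambda)^T\tilde g^t\le V_h(\lambda,\lambda^t)-V_h(\lambda,\lambda^{t+1})+\tfrac{\eta^2}{\sigma_1}\|\tilde g^t\|_\infty^2$; summing telescopes the Bregman divergences, and bounding $\|\tilde g^t\|_\infty^2\le(\bar{C}+\bar{b})^2\le 2(\bar{C}^2+\bar{b}^2)$ then produces $\tfrac{1}{\eta}V_h(\lambda,\lambda^1)+\tfrac{2(\bar{C}^2+\bar{b}^2)}{\sigma_1}\eta\,\mathbb{E}[\tau_A]$ after taking expectations. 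Assembling the three pieces yields the claimed bound.

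I expect the main obstacle to be the learning-gap step. The inner-value difference $\mathbb{E}_\calP[\varphi(\lambda^t;\theta^*,w)-\varphi(\lambda^t;\theta^t,w)]$ is most naturally controlled by quantities evaluated at the $\theta^*$-optimal inner maximizer, whereas the statement is phrased through the \emph{played} decision $z^t$ and the true-versus-estimated cost gap along that trajectory; making this reduction rigorous while keeping the correct orientation requires leaning on the optimality that defines $z^t$ for the $\theta^t$-subproblem, and this is the delicate part (in particular it is what lets the estimated revenue $f(z^t;\theta^t,w^t)$, rather than the true revenue, appear in the statement). A secondary technical point is the bookkeeping at the random index $\tau_A$: both the telescoping of the Bregman terms and the vanishing of the martingale contribution must be justified at the stopped time, which I would handle by an optional-stopping / Wald-type argument that relies on the uniform boundedness of $\tilde g^t$ and of the revenue and cost functions.
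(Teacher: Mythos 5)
Your skeleton matches the paper's proof of Proposition \ref{Pro:CorrectRegretUpToTau_A} almost step for step: its Step 1 is your bound $\|\tilde g^t\|_\infty^2 \le 2(\bar C^2+\bar b^2)$, its Step 2 is your three-point/telescoping bound on $\mathbb{E}[\sum_{t\le\tau_A}(\mathbb{E}[\tilde g^t])^T(\lambda^t-\lambda)]$ with comparator $\lambda=0$, and its Step 3 is your Jensen reduction plus the optional-stopping argument that replaces $\mathbb{E}_\calP$-averages by realized quantities at the stopped time. One point where you improve on the paper: your Euler-identity observation $\varphi(\lambda^t;\theta^t,w^t)+p(\lambda^t)=f(z^t;\theta^t,w^t)+(\lambda^t)^T\tilde g^t$ is exactly right (since $p(\lambda)=\lambda^T g_p(\lambda)$ with $g_p(\lambda)_k=b_k(\mathds{1}(\lambda_k\ge0)+\alpha_k\mathds{1}(\lambda_k<0))$), whereas the paper reaches the same display by chaining $(g^t)^T\lambda^t\ge D(\lambda^t;\theta^t)-D(0;\theta^t)$ with $D(0;\theta^t)\ge\mathbb{E}_w[f(z(\lambda^t;\theta^t,w);\theta^t,w)]$ --- two inequalities that point the same way and do not compose; your identity shows the claimed relation holds with equality and repairs that.

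The genuine gap is the learning-gap step you flag as ``delicate,'' and it is worse than delicate: the reduction you propose fails in the orientation required, and the paper's own proof silently makes the same invalid move at the same spot. Write $z^t_w:=z(\lambda^t;\theta^t,w)$. Optimality of $z^t_w$ for the $\theta^t$-subproblem gives $\varphi(\lambda^t;\theta^t,w)=f(z^t_w;\theta^t,w)-(\lambda^t)^Tc(z^t_w;\theta^t,w)$ exactly, but for the $\theta^*$-value only feasibility of $z^t_w$ is available, i.e.\ $\varphi(\lambda^t;\theta^*,w)\ge f(z^t_w;\theta^*,w)-(\lambda^t)^Tc(z^t_w;\theta^*,w)$. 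Subtracting yields
\begin{align*}
\varphi(\lambda^t;\theta^*,w)-\varphi(\lambda^t;\theta^t,w)\;\ge\;\bigl(f(z^t_w;\theta^*,w)-f(z^t_w;\theta^t,w)\bigr)+(\lambda^t)^T\bigl(c(z^t_w;\theta^t,w)-c(z^t_w;\theta^*,w)\bigr),
\end{align*}
a \emph{lower} bound on the learning gap, whereas your decomposition needs an \emph{upper} bound. An upper bound forces evaluation at the $\theta^*$-maximizer $z(\lambda^t;\theta^*,w)$, and it then carries a revenue-error term as well as a cost-error term at that point, not along the played trajectory. The played-decision, cost-only form in the statement can in fact be refuted: in the paper's own bandit example $c(z;\theta,W)=\rho z$ is $\theta$-free, so the claimed learning term vanishes identically; taking a learning subroutine stuck at $\theta^t=0$ (so $f(\cdot;\theta^t,\cdot)\equiv0$ and the left side is at least $\mathbb{E}[\tau_A]\,\mathrm{OPT}(\calP)/T=\Omega(T)$ by Proposition \ref{Pro:WeakDuality}) while choosing $\eta=\Theta(1/\sqrt T)$ makes the right side $O(\sqrt T)$. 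Note the paper itself is inconsistent here: its final display \eqref{Eq:AppProRegretUpToStable_2} carries the cost difference $c(\cdot;\theta^t,\cdot)-c(\cdot;\theta^*,\cdot)$ and true revenue $f(\cdot;\theta^*,\cdot)$, the proposition statement has the opposite sign and estimated revenue, and the proof of Theorem \ref{Thm:Master} uses yet a third variant; none of these is justified by the asserted inequality. So your reconstruction of the paper's route is accurate --- including its one genuinely broken step --- but no argument of the kind you sketch (leaning on the optimality defining $z^t$ for the $\theta^t$-subproblem) can produce the stated cost-difference term, because that optimality yields the inequality in the wrong direction.
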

\begin{proof}

For clarity, we sometimes use $\mathbb{E}_w[\cdot]$, $\mathbb{E}_{w^t}[\cdot]$, or $\mathbb{E}_{\calH^{t-1}}[\cdot]$ to indicate the random variable over which the expectation is taken. Using $\mathbb{E}[\cdot]$ indicates that the expectation is taken over the ``whole'' randomness of Algorithm \ref{Alg:DualMirrorLearning}.  Call $\tilde{g}^t$ the vector obtained in Step 6. and define  $\mathbb{E}[\tilde{g}^t] = g^t$. The proof is composed of three steps. 1. Bounding $\tilde{g}^t$. 2. Upper bounding $\mathbb{E} \left\lbrack \sum_{s=1}^{\tau_{A}}  (g^{s})^T (\lambda^{s}-\lambda) \right\rbrack$. 3. Lower bounding $\mathbb{E} \left\lbrack \sum_{s=1}^{\tau_{A}}  ( g^{s} )^T ( \lambda^{s}-\lambda ) \rangle \right\rbrack$. The upper and lower bounds match the left and right hand side of the terms in Proposition \ref{Pro:CorrectRegretUpToTau_A}.  

\noindent \textbf{Step 1.}  Upper bound for $\mathbb{E}[\lVert \tilde{g}^t \rVert_\infty^2]$.
\begin{align*}
    \mathbb{E}[\lVert \tilde{g}^t \rVert_\infty^2] \le \mathbb{E}[(\lVert c(z^t;\theta^t,w^t) \rVert_\infty + \lVert b \rVert_\infty)^2] \le  2 \mathbb{E}[\lVert c(z^t;\theta^t,w^t) \rVert_\infty^2 + \lVert b \rVert_\infty^2] \le 2(\bar{C}^2+\bar{b}^2)
\end{align*}

\noindent \textbf{Step 2.} Upper bound for $\mathbb{E} \left\lbrack \sum_{s=1}^{\tau_{A}}  ( g^{s} )^T (\lambda^{s}-\lambda ) \right\rbrack$. Notice
\begin{align}
    &  \mathbb{E}_{w^t}[ ( \tilde{g}^t )^T ( \lambda^t-\lambda ) |\lambda^t,\theta^t] \nonumber \\
    \le & \mathbb{E}_{w^t} \left\lbrack ( \tilde{g}^t )^T ( \lambda^t-\lambda^{t+1}  ) + \frac{1}{\eta} V_h(\lambda,\lambda^t) -  \frac{1}{\eta} V_h(\lambda,\lambda^{t+1}) - \frac{1}{\eta} V_h(\lambda^{t+1},\lambda^{t}) \big| \lambda^t,\theta^t \right\rbrack  \nonumber \\
    \le & \mathbb{E}_{w^t} \left\lbrack ( \tilde{g}^t )^T (\lambda^t - \lambda^{t+1} ) + \frac{1}{\eta} V_h(\lambda,\lambda^t) - \frac{1}{\eta} V_h(\lambda,\lambda^{t+1}) - \frac{\sigma_1}{2\eta} \lVert \lambda^{t+1} -\lambda^t\rVert_1^2 \big| \lambda^t,\theta^t  \right\rbrack \nonumber \\
    \le & \mathbb{E}_{w^t} \left\lbrack \frac{\eta}{\sigma_1}\lVert \tilde{g}^t \rVert_\infty^2 + \frac{1}{\eta}V_h(\lambda,\lambda^t) - \frac{1}{\eta}V_h(\lambda,\lambda^{t+1}) \big| \lambda^t,\theta^t  \right\rbrack \nonumber \\
    \le & \frac{2 \eta}{\sigma_1}(\bar{C}^2 +\bar{b}^2) + \frac{1}{\eta} V_h(\lambda,\lambda^t) -  \mathbb{E}_{w^t} \left\lbrack \frac{1}{\eta}  V_h(\lambda,\lambda^{t+1}) \big| \lambda^t,\theta^t  \right\rbrack, \label{Eq:AppProRegretUpToStable_0}
\end{align}
where the first inequality is due to the three point property (Lemma 4.1 of \citet{beck2003mirror}), the second uses $V_h(\lambda^{t+1},\lambda^t) \ge \tfrac{\sigma_1}{2} \lVert \lambda^{t+1} -\lambda^t \rVert_1^2 $ given that $h(\cdot)$ is $\sigma_1$-strongly convex with respect to the $\lVert \cdot \rVert_1$  norm, the third uses that for any two vectors $a^1$ and $a^2$ of same dimension it holds $(a^1)^T a^2 +0.5 \lVert a^1  \rVert_\infty^2 \ge -0.5 \lVert a^2 \rVert_1^2 $, and the final inequality is just understanding which terms are constant under the conditional expectation. Taking $E_{\calH^{t-1}}[\cdot]$ over both sides of equation \eqref{Eq:AppProRegretUpToStable_0} and using the law of total expectation we get
\begin{align}
    \mathbb{E}[\eta ( g^t )^T (\lambda^t-\lambda ) ] \le  \frac{2(\bar{C}^2 +\bar{b}^2)}{\sigma_1} \eta^2 + \mathbb{E} \left\lbrack  V_h(\lambda,\lambda^t)  \right\rbrack - \mathbb{E} \left\lbrack  V_h(\lambda,\lambda^{t+1})  \right\rbrack, \label{Eq:AppProRegretUpToStable_1}
\end{align}
since the pair $(\lambda^t,\theta^t)$ is completely determined by $\calH^{t-1} \cup \{ w^t \}$ and that $w^t$ is independent of $\calH^{t-1}$.
Then, regardless of the value of $\tau_{A}$, using the telescopic property and that $V_h(\cdot,\cdot)$ is non-negative we obtain
    \begin{align*}
         \mathbb{E}  \left\lbrack \sum_{s=1}^{\tau_{A}} ( g^{s} )^T ( \lambda^{s}-\lambda ) \right\rbrack  \le   \frac{ 2 (\bar{C}^2 +\bar{b}^2)}{\sigma_1} \eta \mathbb{E}[\tau_{A}] + \frac{V_h(\lambda,\lambda^1)}{\eta}.
    \end{align*}
    
\noindent \textbf{Step 3.} Lower bounds for $\mathbb{E} \left\lbrack \sum_{s=1}^{\tau_{A}}  ( g^{s} )^T (\lambda^{s}-\lambda ) \right\rbrack$. By definition of $g^t$, using the subgradient inequality we get
\begin{align*}
   & ( g^t )^T (\lambda^t - \lambda)  \ge   D(\lambda^t;\theta^t) - D(\lambda;\theta^t) 
     \ge   D(\lambda^t;\theta^t) - \left( \mathbb{E}_{w} [\varphi(\lambda;\theta^t,w)] + \sum_{k \in [K]} b_k([\lambda_k]_+ - \alpha_k [-\lambda_k]_+)  \right). 
\end{align*}
For any  $w \in \mathcal{W}$ we have 
$
    f(z(\lambda^t;\theta^t,w);\theta^t,w) - \lambda^T c(z(\lambda^t;\theta^t,w);\theta^t,w)  \le \varphi(\lambda;\theta^t,w)
$
 as by definition $z(\lambda^t;\theta^t,w)$ is an optimal solution of $\varphi(\lambda^t;\theta^t,w)$ not of $\varphi(\lambda;\theta^t,w)$. Defining $\bar{\lambda}^{\tau_A} := \frac{1}{\tau_{A}} \sum_{t=1}^{\tau_{A}} \lambda^t$, taking $\lambda =  (0,0,\dots,0)$, and summing from one to $\tau_A$ we get
 \begin{align}
     & \sum_{t=1}^{\tau_{A}} ( g^t )^T (\lambda^t - 0 ) \nonumber \\
     \ge & \sum_{t=1}^{\tau_{A}}   D(\lambda^t;\theta^t) - \mathbb{E}_{w} [f(z(\lambda^t;\theta^t,w);\theta^t,w)] \nonumber \\
     \ge & \sum_{t=1}^{\tau_{A}}  \left( D(\lambda^t;\theta^*) - \mathbb{E}_{w} [f(z(\lambda^t;\theta^t,w);\theta^*,w)]  \right) + \sum_{t=1}^{\tau_{A}} \left( D(\lambda^t;\theta^t) - D(\lambda^t;\theta^*) \right) \nonumber \\
     & +   \sum_{t=1}^{\tau_{A}} \left( \mathbb{E}_{w} [f(z(\lambda^t;\theta^t,w);\theta^*,w) - f(z(\lambda^t;\theta^t,w);\theta^t,w)]  \right) \nonumber \\
     \ge & \left( \tau_{A} D(\bar{\lambda}^{\tau_{A}};\theta^*) - \sum_{t=1}^{\tau_{A}}   \mathbb{E}_{w} [f(z(\lambda^t;\theta^*,w);\theta^*,w)] \right)  + \sum_{t=1}^{\tau_{A}} \left( D(\lambda^t;\theta^t) - D(\lambda^t;\theta^*) \right) \nonumber \\
     & +   \sum_{t=1}^{\tau_{A}} \left( \mathbb{E}_{w} [f(z(\lambda^t;\theta^t,w);\theta^*,w) - f(z(\lambda^t;\theta^t,w);\theta^t,w)]  \right) .\label{Eq:AppProRegretUpToStable}
\end{align}

Taking expectation over \eqref{Eq:AppProRegretUpToStable} and using the results from Step 2. we get
\begin{align}
   & \mathbb{E} \left\lbrack  \tau_{A} D(\bar{\lambda}^{\tau_{A}};\theta^*) - \sum_{t=1}^{\tau_{A}}  \mathbb{E}_w \left\lbrack  f(z(\lambda^t;\theta^t,w);\theta^*,w) \right\rbrack  \right\rbrack  \le \frac{ 2 (\bar{C}^2 +\bar{b}^2)}{\sigma_1} \eta \mathbb{E}[\tau_{A}] + \frac{1}{\eta} V_h(0,\lambda^1)  \nonumber \\
    + & \mathbb{E} \left\lbrack \sum_{t=1}^{\tau_{A}}  \mathbb{E}_w [ c(z(\lambda^t;\theta^t,w);\theta^t,w) ]^T \lambda^t \right\rbrack - \mathbb{E} \left\lbrack \sum_{t=1}^{\tau_{A}}  \mathbb{E}_w [   c(z(\lambda^t;\theta^t,w);\theta^*,w) ]^T \lambda^t \right\rbrack, \label{Eq:AppProRegretUpToStable_2}
\end{align}
where we have used the definition of $D(\cdot,\cdot)$ to reduce the second line of \eqref{Eq:AppProRegretUpToStable_2} to use only the cost functions. 
Equation \eqref{Eq:AppProRegretUpToStable_2} almost matches the conclusion of Theorem \ref{Thm:Master} except that \eqref{Eq:AppProRegretUpToStable_2}  uses a $\mathbb{E}[\sum_{t=1}^{\tau_A} \mathbb{E}_{w}[\cdot]]$ term, while the theorem uses $\mathbb{E}[\sum_{t=1}^{\tau_A} \cdot ]$. The previous issue is solved using the Optional Stopping Theorem. We prove now that $\mathbb{E} \left\lbrack  \sum_{t=1}^{\tau_{A}}    f(z(\lambda^t;\theta^t,w^t);\theta^*,w^t)  \right\rbrack $ equals  $\mathbb{E} \left\lbrack  \sum_{t=1}^{\tau_{A}} \mathbb{E}_w \left[   f(z(\lambda^t;\theta^t,w);\theta^*,w) \right] \right\rbrack $ (the analysis for the cost terms appearing in the second line of \eqref{Eq:AppProRegretUpToStable_2} is analogous). First notice
 $$\mathbb{E}_{w} \left\lbrack f(z(\lambda;\theta,w);\theta^*,w) | \lambda  = \lambda^t, \theta = \theta^t  \right\rbrack =  \mathbb{E}_{w}\left\lbrack f(z(\lambda^t;\theta^t,w);\theta^*,w) | \calH^{t-1}  \right\rbrack.$$
Define the martingale $M^t = \sum_{s=1}^t f(z(\lambda^{s};\theta^s,w^{s});\theta^*,w^{s})  $ $-$ $  \mathbb{E}_w[f(z(\lambda^{s};\theta^s,w);\theta^*,w)|\calH^{s - 1}] $ for all $t \le T$. Using that $\tau_{A}$ is a stop time w.r.t. to the filtration $\calH^t$, the Optional Stopping Time ensures $\mathbb{E}[M^{\tau_{A}}]$ $=$ $\mathbb{E}[M^1]$ $=$ $0$, therefore:
\begin{align*}
    \mathbb{E} \left\lbrack  \sum_{t=1}^{\tau_{A}} \mathbb{E}_{w}\left[ f(z(\lambda^t;\theta^t,w);\theta^*,w) | \calH^{t-1} \right] \right\rbrack = \mathbb{E} \left\lbrack  \sum_{t=1}^{\tau_{A}} f(z(\lambda^t;\theta^t,w^t);\theta^*,w^t) \right\rbrack 
\end{align*}
concluding the proof.
\end{proof}

\begin{proposition}\label{Pro:KeySlater}
Run Algorithm \ref{Alg:DualMirrorLearning} with a constant ``step-size'' rule $\eta_t \gets \eta$ for all $t \geq 1$ where $\eta > 0$. Using $\delta_{\theta}$ as in Definition \ref{Def:SlackTheta}, for each $t \in [T-1]$ it holds (here we use $0$ to refer to the zero-vector  $(0,\dots, 0)$ of dimension $K$):
\begin{align*}
    \mathbb{E} \left\lbrack  V_h(0,\lambda^{t+1}) \big| \lambda^t,\theta^t \nonumber \right\rbrack \le \eta \left( \frac{2 \eta}{\sigma_1}(\bar{C}^2 +\bar{b}^2) + 2 \bar{f} - \delta_{\theta^t} \lVert \lambda^t \rVert_1 \right) + V_h(0,\lambda^t).
\end{align*}
\end{proposition}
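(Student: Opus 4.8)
The plan is to specialize the per-step descent estimate from the proof of Proposition \ref{Pro:CorrectRegretUpToTau_A} to the single comparator $\lambda = 0$, but reorganized so that the linear term is evaluated at the current iterate $\lambda^t$ rather than telescoped. First I would invoke the three-point property (Lemma 4.1 of \citet{beck2003mirror}) for the mirror step in Step 7 of Algorithm \ref{Alg:DualMirrorLearning} with comparator $\lambda = 0$, which gives
\[
(\tilde{g}^t)^T \lambda^{t+1} + \tfrac{1}{\eta} V_h(\lambda^{t+1},\lambda^t) + \tfrac{1}{\eta} V_h(0,\lambda^{t+1}) \le \tfrac{1}{\eta} V_h(0,\lambda^t).
\]
Writing $(\tilde{g}^t)^T\lambda^{t+1} = (\tilde{g}^t)^T\lambda^t - (\tilde{g}^t)^T(\lambda^t - \lambda^{t+1})$, lower bounding $V_h(\lambda^{t+1},\lambda^t)$ by $\tfrac{\sigma_1}{2}\lVert \lambda^{t+1}-\lambda^t\rVert_1^2$ via $\sigma_1$-strong convexity, and then applying the same Young-type inequality used in Step 2 of Proposition \ref{Pro:CorrectRegretUpToTau_A} together with the bound $\lVert \tilde{g}^t\rVert_\infty^2 \le 2(\bar{C}^2 + \bar{b}^2)$ from its Step 1, I obtain the deterministic recursion
\[
V_h(0,\lambda^{t+1}) \le V_h(0,\lambda^t) - \eta (\tilde{g}^t)^T\lambda^t + \tfrac{2\eta^2}{\sigma_1}(\bar{C}^2 + \bar{b}^2).
\]

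Next I would take the conditional expectation $\mathbb{E}[\,\cdot\mid \lambda^t,\theta^t]$. Since $(\lambda^t,\theta^t)$ is determined by $\calH^{t-1}$ and $w^t\sim\calP$ is independent of $\calH^{t-1}$, Proposition \ref{Pro:StochSubg} gives $\mathbb{E}[\tilde{g}^t\mid\lambda^t,\theta^t] = g^t := \mathbb{E}_\calP[\tilde{g}(\lambda^t;\theta^t,w)] \in \partial_\lambda D(\lambda^t;\theta^t)$, so the recursion becomes
\[
\mathbb{E}[V_h(0,\lambda^{t+1})\mid\lambda^t,\theta^t] \le V_h(0,\lambda^t) - \eta (g^t)^T\lambda^t + \tfrac{2\eta^2}{\sigma_1}(\bar{C}^2 + \bar{b}^2).
\]
Comparing with the claim, everything reduces to the single inequality $(g^t)^T\lambda^t \ge \delta_{\theta^t}\lVert\lambda^t\rVert_1 - 2\bar{f}$, which is the heart of the argument and the step I expect to be the main obstacle.

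To establish this lower bound I would use the explicit form of $g^t$. Abbreviating $z_w := z(\lambda^t;\theta^t,w)$ and splitting coordinates by the sign of $\lambda_k^t$, one checks that
\[
(g^t)^T\lambda^t = \sum_{k:\lambda_k^t\ge 0}\lambda_k^t\big(b_k - \mathbb{E}_\calP[c_k(z_w;\theta^t,w)]\big) + \sum_{k:\lambda_k^t< 0}|\lambda_k^t|\big(\mathbb{E}_\calP[c_k(z_w;\theta^t,w)] - \alpha_k b_k\big).
\]
Let $\hat{z}_w$ attain $\max_{z}\mathrm{res}(z;\theta^t,w)$ from Definition \ref{Def:SlackTheta}, so that $\mathbb{E}_\calP[\mathrm{res}(\hat{z}_w;\theta^t,w)] = \delta_{\theta^t}$ and $\hat{z}_w$ carries at least $\mathrm{res}(\hat{z}_w;\theta^t,w)$ slack on both the upper and the lower constraint in every coordinate. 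Inserting $c_k(\hat{z}_w;\theta^t,w)$ into each summand, the ``slack'' parts are each bounded below by $|\lambda_k^t|\,\mathbb{E}_\calP[\mathrm{res}(\hat{z}_w;\theta^t,w)]$, which sum to $\delta_{\theta^t}\lVert\lambda^t\rVert_1$ after pulling the fixed $\lambda^t$ out of the expectation. The crucial simplification is that the remaining ``correction'' terms collapse, regardless of the coordinatewise signs, into the single quantity $-\mathbb{E}_\calP[(\lambda^t)^T(c(z_w;\theta^t,w) - c(\hat{z}_w;\theta^t,w))]$.

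Finally, I would control this correction using the Lagrangian-optimality of $z_w$: since $z_w$ maximizes $f(\cdot;\theta^t,w) - (\lambda^t)^T c(\cdot;\theta^t,w)$ over $\calZ$, comparing its value against that of $\hat{z}_w$ yields $(\lambda^t)^T(c(z_w;\theta^t,w) - c(\hat{z}_w;\theta^t,w)) \le f(z_w;\theta^t,w) - f(\hat{z}_w;\theta^t,w) \le 2\bar{f}$ pointwise in $w$, where the last inequality uses that the revenue lies in an interval of length at most $2\bar{f}$. Hence the correction is at least $-2\bar{f}$, giving $(g^t)^T\lambda^t \ge \delta_{\theta^t}\lVert\lambda^t\rVert_1 - 2\bar{f}$ and closing the proof. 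The genuinely new content is this last block — recognizing that the sign-split correction terms recombine into one inner product that Lagrangian-optimality bounds by $2\bar{f}$, with the slack coordinates supplying the $\delta_{\theta^t}\lVert\lambda^t\rVert_1$ term; the earlier steps are routine specializations of the mirror-descent estimates already established.
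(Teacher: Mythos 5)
Your proposal is correct and takes essentially the same route as the paper's proof: the first half is exactly the paper's three-point-property/strong-convexity estimate with the bound $\mathbb{E}[\lVert \tilde{g}^t\rVert_\infty^2] \le 2(\bar{C}^2+\bar{b}^2)$, specialized to the comparator $\lambda = 0$. Your key inequality $(g^t)^T\lambda^t \ge \delta_{\theta^t}\lVert \lambda^t\rVert_1 - 2\bar{f}$ is an algebraic rearrangement of the paper's argument rather than a new idea --- your Lagrangian-optimality comparison of $z_w$ against $\hat{z}_w$ is precisely the paper's step of lower bounding $\varphi(\lambda^t;\theta^t,w)$ by its value at the slack-achieving point, with the revenue bound invoked twice to yield the $2\bar{f}$ term.
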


\begin{proof}
Let  $\tilde{g}^t$ be the $\lambda^t$ stochastic subgradient obtained in Step 6. of Algorithm \ref{Alg:DualMirrorLearning}.  Here we abuse notation and use, \textit{e.g.}, $\mathbb{E}[\tilde{g}^t | \lambda^t, \theta^t]$ to represent that $\tilde{g}^t$ is a random variable on $w$ given a fixed pair $(\lambda^t,\theta^t) \in (\Lambda \times \Theta)$. The following bound holds
$$ \mathbb{E}_{\calP}[\lVert \tilde{g}^t \rVert_\infty^2] \le \mathbb{E}[(\lVert c(z^t;\theta^t,w^t) \rVert_\infty + \lVert b \rVert_\infty)^2] \le  2 \mathbb{E}[\lVert c(z^t;\theta^t,w^t) \rVert_\infty^2 + \lVert b \rVert_\infty^2] \le 2(\bar{C}^2+\bar{b}^2).$$
For any $\lambda \in \Lambda$ we have
    \begin{align}
        & \mathbb{E}[  \tilde{g}^t |\lambda^t,\theta^t]^T (\lambda^t-\lambda ) \nonumber \\
        = & \mathbb{E}[ ( \tilde{g}^t )^T (\lambda^t-\lambda ) |\lambda^t,\theta^t] \nonumber \\
        \le & \mathbb{E} \left\lbrack ( \tilde{g}^t )^T (\lambda^t-\lambda^{t+1}  ) + \frac{1}{\eta} V_h(\lambda,\lambda^t) -  \frac{1}{\eta} V_h(\lambda,\lambda^{t+1}) - \frac{1}{\eta} V_h(\lambda^{t+1},\lambda^{t}) \big| \lambda^t,\theta^t \right\rbrack  \nonumber \\
        \le & \mathbb{E} \left\lbrack ( \tilde{g}^t )^T ( \lambda^t - \lambda^{t+1} ) + \frac{1}{\eta} V_h(\lambda,\lambda^t) - \frac{1}{\eta} V_h(\lambda,\lambda^{t+1}) - \frac{\sigma_1}{2\eta} \lVert \lambda^{t+1} -\lambda^t\rVert_1^2 \big| \lambda^t,\theta^t  \right\rbrack \nonumber \\
        \le & \mathbb{E} \left\lbrack \frac{\eta}{\sigma_1}\lVert \tilde{g}^t \rVert_\infty^2 + \frac{1}{\eta}V_h(\lambda,\lambda^t) - \frac{1}{\eta}V_h(\lambda,\lambda^{t+1}) \big| \lambda^t,\theta^t  \right\rbrack \nonumber \\
        \le & \frac{2 \eta}{\sigma_1}(\bar{C}^2 +\bar{b}^2) + \frac{1}{\eta} V_h(\lambda,\lambda^t) -  \mathbb{E} \left\lbrack \frac{1}{\eta}  V_h(\lambda,\lambda^{t+1}) \big| \lambda^t,\theta^t \nonumber \right\rbrack,
    \end{align}
where we have used linearity of the expectation,  the three point property, that $V_h(\cdot,\cdot)$ is $\sigma_1$ strongly convex on with respect to the $\lVert \cdot \rVert_1$ norm, Cauchy-Schwartz, and the bound for $\mathbb{E}[\lVert \tilde{g}^t \rVert_\infty^2]$ obtained before (same steps as in Step 1. and 2. of Proof \ref{Pro:CorrectRegretUpToTau_A}).  Choosing $\lambda = (0, \dots, 0)$ we get 
\begin{align*}
    \mathbb{E} \left\lbrack  V_h(0,\lambda^{t+1}) \big| \lambda^t,\theta^t \nonumber \right\rbrack \le \eta \left( \frac{2 \eta}{\sigma_1}(\bar{C}^2 +\bar{b}^2) - \mathbb{E}[  \tilde{g}^t |\lambda^t,\theta^t]^T \lambda^t \right) + V_h(0,\lambda^t). 
\end{align*}
To finish the proof we now show that $\mathbb{E}[  \tilde{g}^t |\lambda^t,\theta^t]^T \lambda^t \ge \lVert \lambda^t \rVert_1 \delta_{\theta^t} - 2 \bar{f}$. Notice first that for any $(\lambda^t,\theta^t) \in (\Lambda \times \Theta)$ we have $\mathbb{E}[\tilde{g}^t(w)]^T \lambda^t  $ $=$ $ - \mathbb{E}[c(z(\lambda^t;\theta^t,w);\theta^t,w)]^T \lambda^t + p(\lambda^t) $ using that by definition $p(\lambda) = \sum_{k \in [K]} b_k ([\lambda_k]_+ - \alpha_k [-\lambda_k]_+ )$. Let $\{z(w)\}_{w \in \calW}$ be a series that satisfies 
$\delta_{\theta^t} = \mathbb{E}_{\calP} [\min \{ \lVert Tb_k - c_k(z(w);\theta^t,w) \rVert_\infty, \lVert c_k(z(w);\theta^t,w) - T\alpha_k b_k \rVert_\infty \}]$.  Then,
\begin{align*}
    & \mathbb{E}[  \tilde{g}^t |\lambda^t,\theta^t]^T \lambda^t   \\ 
    & = D(\lambda^t;\theta^t) - \mathbb{E}_{\calP}[f(z(\lambda^t;\theta^t,w);\theta^t,w)] \\
    & \ge \mathbb{E}_{\calP}[\max_{z\in \calZ} \, f(z;\theta^t,w) + \sum_{k \in [K]} \left([\lambda_k^t]_+ (b_k - \mathbb{E}_{\calP}[c_k(z;\theta^t,w)]) + [-\lambda_k^t]_+ ( \mathbb{E}_{\calP}[c_k(z;\theta^t,w)] - \alpha_k b_k) \right)] -\bar{f} \\
    & \ge \mathbb{E}_{\calP}[f(z(w);\theta^t,w) + \sum_{k \in [K]} \left([\lambda_k^t]_+ (b_k - \mathbb{E}_{\calP}[c_k(z(w);\theta^t,w)]) + [-\lambda_k^t]_+ ( \mathbb{E}_{\calP}[c_k(z(w);\theta^t,w)] - \alpha_k b_k) \right)] -\bar{f} \\
    & \ge \mathbb{E}_{\calP}[\sum_{k \in [K]} [\lambda_k^t]_+ (b_k - \mathbb{E}_{\calP}[c_k(z(w);\theta^t,w)]) + [-\lambda_k^t]_+ ( \mathbb{E}_{\calP}[c_k(z(w);\theta^t,w)] - \alpha_k b_k) ] - 2 \bar{f} \\
    & \ge \lVert \lambda^t \rVert_1 \delta_{\theta^t} - 2 \bar{f},
\end{align*}
where we have used the definition of $D(\lambda^t;\theta^t)$, $\bar{f}$, $\delta_{\theta^t}$, and the fact that $\lVert \lambda^t \rVert_1 = \sum_{k \in [K]}$ $([\lambda_k^t]_+ + [-\lambda_k^t]_+)$. 
\end{proof}

\subsection{Proof of Theorem \ref{Thm:Master}}
\begin{proof}
For any distribution $\calP$ over $\calW$ and for any $t' \in [T]$ we have
    \begin{align*}
        OPT(\calP) & \le \frac{t'}{T} OPT(\calP) + \frac{T-t'}{T} OPT(\calP) \\
        & \le t' D(\bar{\lambda}^{t'};\theta^*) + (T - t')\bar{f},
    \end{align*}
    where we have used Proposition \ref{Pro:WeakDuality} and that a loose upper bound for $OPT(\calP)$ is $T \bar{f}$. Therefore,
    \begin{align*}
        & Regret(A|\calP) \\
        = & OPT(\calP) - R(A|\calP)  \\
        \le & \mathbb{E} \left\lbrack \tau_A D(\bar{\lambda}^{\tau_A};\theta^*) + (T - \tau_A) \bar{f} -\sum_{t=1}^{\tau_A} f(z^t;\theta^*,w^t) \right\rbrack \\
        = & \mathbb{E} \left\lbrack \tau_A D(\bar{\lambda}^{\tau_A};\theta^*)  -\sum_{t=1}^{\tau_A} f(z^t;\theta^*,w^t)  \right\rbrack +    \mathbb{E}[T - \tau_A]\bar{f}  \\
        \le & \frac{ 2 (\bar{C}^2 +\bar{b}^2)}{\sigma_1} \eta \mathbb{E}[\tau_A] + \frac{1}{\eta} V_h(0,\lambda^1) + \frac{\bar{f}}{\underline{b}} \left( \bar{C} + \frac{C_h + \lVert \nabla h(\lambda^1) \rVert_\infty}{\eta }  \right)  \\
        + & \mathbb{E} \left\lbrack \sum_{t=1}^{\tau_A} (c(z^t;\theta^*,w^t) - c(z^t;\theta^t,w^t))^T\lambda^t  \right\rbrack  + \frac{\bar{f}}{\underline{b}} \left\lVert \mathbb{E} \left[  \sum_{t=1}^{\tau_A} c(z^t;\theta^*,w^t) - c(z^t;\theta^t,w^t) \right] \right\rVert_\infty,
    \end{align*}
    where in the first inequality we have used the definition of $R(A|\calP)$ and the fact that Algorithm \ref{Alg:DualMirrorLearning} runs for $\tau_A$ periods. The second inequality is obtained directly from Propositions \ref{Pro:StopTimeBound} and \ref{Pro:CorrectRegretUpToTau_A}. 
\end{proof}

\subsection{Proof of Proposition \ref{Prop:DualQuantityBounded}}
\begin{proof}
A direct application of Proposition \ref{Pro:KeySlater} shows that whenever $\lVert \lambda^t \rVert_1 \ge C^\rhd / \delta$ we have \\ \noindent $\mathbb{E}[V_h(0, \lambda^{t+1}) | (\lambda^t, \theta^t) ] \le V_h(0, \lambda^t)$. Then, for any $(\lambda^t, \theta^t) \in \Lambda \times \Theta$ we have
\begin{align*}
    & \mathbb{E}[V_h(0, \lambda^{t+1}) | (\lambda^t, \theta^t)] \le \max \big\lbrace \max_{\lVert \lambda \rVert_1 \le \delta^{-1} C^\rhd } \, V_h(0, \lambda) + \eta C^\rhd ,\, V_h(0,\lambda^1)  \big\rbrace \\
    \Rightarrow & \mathbb{E}[V_h(0, \lambda^{t+1})]  \le \max \big\lbrace \max_{\lVert \lambda \rVert_1 \le \delta^{-1} C^\rhd } \, V_h(0, \lambda) + \eta C^\rhd ,\, V_h(0,\lambda^1)  \big\rbrace
\end{align*}
Take now $h(\cdot) = \tfrac{1}{2} \lVert \cdot \rVert_2^2$, then for any $\lambda \in \Lambda$ we have $\nabla h(\lambda) = \lambda$ and $V_h(0, \lambda) = \tfrac{1}{2} \lVert \lambda \rVert_2^2$, therefore $ \max_{\lVert \lambda \rVert_1 \le \delta^{-1} C^\rhd } 0.5 \lVert \lambda \rVert_2^2 = 0.5  (C^\rhd/\delta)^2$. Using Jensen inequality and previous results we get 
\begin{align*}
    \mathbb{E}[\lVert \lambda^{t+1}\rVert_2]  \le \max \big\lbrace \sqrt{   (C^\rhd/\delta)^2 + 2 \eta C^{\rhd}}, \lVert \lambda^1 \rVert_2 \big\rbrace
\end{align*}
Finally, since $\lVert \lambda \rVert_\infty \le \lVert \lambda \rVert_2$ for any  $\lambda \in \Lambda$ is immediate that $ \mathbb{E} [\lVert \lambda^t \rVert_\infty]  \le \max \big\lbrace \sqrt{  (C^\rhd/\delta)^2 + 2 \eta C^{\rhd}}, \lVert \lambda^1 \rVert_\infty \big\rbrace$ for all $t \in [T]$ concluding the proof.

\end{proof}

\subsection{Proof of Proposition \ref{Pro:LowerBoundsSatisfied}}

\begin{proof}
Since $\alpha_k \ne -\infty$ by statement,  Proposition \ref{Pro:KKT_for_h} shows  $\dot{h}_k(\lambda^{t+1}) = \dot{h}_k(\lambda^{t}) - \eta \tilde{g}_k^t$ for any $t \in [T]$,  which implies that $\dot{h}_k(\lambda^{\tau_A+1}) - \dot{h}_k(\lambda^1)  = -\eta \sum_{t=1}^{\tau_A} \tilde{g}_k^t$ regardless of the $\tau_A$ value. Then, using the definition of $\tilde{g}^t$ we get
\begin{align*}
     &\sum_{t=1}^{\tau_A}  \left(b_k(\mathds{1}(\lambda_k \ge 0) + \alpha_k \mathds{1}(\lambda_k < 0)) - c_k(z^t;\theta^t,w^t) \right)
    =  \frac{\dot{h}_k(\lambda^1) - \dot{h}_k(\lambda^{\tau_A+1})}{\eta} \\
    \Rightarrow & \sum_{t=1}^{\tau_A}  \left(b_k(\mathds{1}(\lambda_k \ge 0) + \alpha_k \mathds{1}(\lambda_k < 0)) - c_k(z^t;\theta^*,w^t) \right)  =  \frac{\dot{h}_k(\lambda^1) - \dot{h}_k(\lambda^{\tau_A+1})}{\eta} +  \sum_{t=1}^{\tau_A} c_k(z^t;\theta^t,w^t) - c_k(z^t;\theta^*, w^t).
\end{align*}
Now, given that $(\mathds{1}(\lambda' \ge 0) +\alpha_k \mathds{1}(\lambda' <0)) \ge \alpha_k$ for any $\lambda' \in \bbR$ and that $\tau_A \le T$ by definition, we have
\begin{align*}
    \sum_{t=1}^{\tau_A}  \left(b_k(\mathds{1}(\lambda_k \ge 0) + \alpha_k \mathds{1}(\lambda_k < 0)) \right) + (T - \tau_A ) \alpha_k b_k  \ge T \alpha_k b_k.
\end{align*}
Combining the previous results and taking expectation we get
\begin{align*}
T \alpha_k b_k - \mathbb{E}[\sum_{t=1}^{\tau_A}c_k(z^t;\theta^*,w^t)]  \le \frac{\dot{h}_k(\lambda^1) - \mathbb{E}[\dot{h}_k(\lambda^{\tau_A+1})]}{ \eta} + \mathbb{E}[T-\tau_A] \alpha_k b_k  + \mathbb{E}\left\lbrack \sum_{t=1}^{\tau_A} c_k(z^t;\theta^t,w^t) - c_k(z^t;\theta^*, w^t) \right\rbrack.
\end{align*}
Finally, we conclude the proof by using Proposition \ref{Pro:StopTimeBound} and the definition of $C_h$.
\end{proof}

\section{Extra Experimental Details and Results}

\subsection{Bidding Experiment}

This experiment is based on data from Criteo \citet{diemert2017attribution}. Criteo is a Demand-Side Platform (DSP), which are entities which bid on behalf of hundreds or thousands of advertisers which set campaigns with them. The dataset from \citet{diemert2017attribution} contains millions of bidding logs  during one month of Criteo's operation. These bidding logs are all logs in which Criteo successfully acquired ad-space for its advertising clients through real-time second-price auctions (each log represents a different auction and ad-space). Each of these auctions occur when a user arrives to a website, app, etc., and each user is shown one ad few millisecond after its ``arrival''. Each bidding log contains. 1. Nine anonymized categorical columns containing characteristics of the ad-space and (possibly) about the user who has just ``arrived''. 2. The price Criteo paid for the ad-space, which corresponds to the second highest bid submitted to each auction. 3. The day of the auction and the advertiser whose ad was shown in the ad-space (the day is not included directly in the dataset, but appears in a Jupyter Notebook inside the compressed file that contains the dataset). 4. If a conversion occur after the ad was shown, \textit{i.e.}, if the corresponding user performed an action of interest for the advertiser after watching the advertiser's ad. The dataset can be downloaded from \url{https://ailab.criteo.com/criteo-attribution-modeling-bidding-dataset}.

The experiment was performed as follows. We used the first 21 days of data as training, the next two days as validation, and the remaining seven days as test. The training data was used only to train a neural network to predict the probability of a conversion occurring. The  model architecture was taken from \citet{pan2018field} and uses as features the nine anonymized categorical columns, the day of the week, and an advertiser id to make a prediction if a conversion would occur or not. Parameters to be tuned for the neural network were the step-size for the Adam solver, embedding sizes, and other two specific network attributes (in total we tried 120 configurations). Once found the trained model with highest validation AUC (Area Under the Curve), we took this model predictions as if they were the real probabilities of a conversion occurring for unseen data. By having the advertiser id as an input on the model, we can get conversion probability estimates for all advertisers even when Criteo bid on behalf of only one advertiser per bidding log. The advertisers pay the DSP, in our context the bidder, each time the DSP bids on behalf of them. The payment corresponds to the probability of conversion times a known fixed value. The general simulator scheme for this experiment is shown in Algorithm \ref{Alg:2ndSimulator}.

\floatname{algorithm}{Algorithm} 
\begin{algorithm} 
\caption{Simulator Scheme} \label{Alg:2ndSimulator}
\begin{algorithmic}
\STATE {\textbf{Input:}} Trained conversion prediction model $\sigma$, the set of all test bidding logs $X_{test}$, $T$ the number of test bidding logs, $q \in \bbR_+^K$ the vector of payment per conversion values for the advertisers, $\{\mathrm{mp}^t\}_{t=1}^T$ the price Criteo paid for each ad spot in the test set in order. 
\FOR{$t = 1,\dots,T$}
\STATE 1. Read the $t$ bidding test log and $\mathrm{mp}^t$. 
\STATE 2. Use model $\sigma$ to obtain estimated conversion probabilities $\mathrm{conv\_prob}$. Take $r_k^t = \mathrm{conv\_prob}_k \cdot q_k$ for all $k \in K$.
\STATE 3. Using vector $r^t$ and previous history, obtain $(z^t,k^t)$ a pair of submitted bid and advertiser to bid on behalf of.
\STATE 4. If $z^t \ge \mathrm{mp}^t$ then the auction is won, advertiser $k^t$ pays $r_{k^t}^t$ to the bidder (the DSP), the bidder pays $\mathrm{mp}^t$ for the ad spot and obtains $r_{k^t}^t - \mathrm{mp}^t$ as profit.
\ENDFOR
\end{algorithmic}
\end{algorithm}

Algorithm \ref{Alg:DualMirrorLearning} can be naturally incorporated in the simulator scheme by using the online optimization component of it to obtain $(z^t,k^t)$ of Step 3. of the simulator. We only need the online optimization component of Algorithm \ref{Alg:DualMirrorLearning}, as we do not need to learn the distribution of the highest competing ($\mathrm{mp}$) to solve Step 3. of Algorithm \ref{Alg:DualMirrorLearning} (shown in Algorithm \ref{Alg:SecPrice}). We compare the performance of Algorithm \ref{Alg:DualMirrorLearning} to using the Greedy Heuristic \ref{Alg:GreedyHeuristic}. When $\gamma =1$, Algorithm \ref{Alg:GreedyHeuristic} bids `truthfully' on behalf of the advertiser with the highest valuation. This would be the optimal strategy if the advertisers had `infinite' budgets and no lower bound requirements. Then, we can think of $\gamma$ as a way to increase/decrease the bids in order to take the budgets into account. (For this example, we can think of Algorithm \ref{Alg:DualMirrorLearning} as an online algorithm for obtaining $\gamma$ variables per advertiser.)

\floatname{algorithm}{Algorithm} 
\begin{algorithm} 
\caption{Greedy Heuristic($\gamma$)} \label{Alg:GreedyHeuristic}
\begin{algorithmic}
\STATE {\textbf{Input:}} Vector $r \in \bbR_+^K$ and $\gamma > 0$.  
\STATE Let $\calK'$ be the set of advertisers with non depleted budgets. If $\calK' = \emptyset$ do not bid, otherwise bid on behalf of $k^* \in \arg\max_{k \in \calK'}$ $r_k$ the amount $\gamma r_{k^*}$. 
\end{algorithmic}
\end{algorithm}

Our test set contains 21073 iterations and 130 advertisers. (The original dataset had 700 advertisers but we removed all advertisers who appeared in less than 10,000 logs in either the training or validation plus test data.) Each iteration of the simulator scheme uses a batch of 128 test logs.  The total budget of an advertiser is the total amount Criteo spent bidding on behalf of that advertiser in the test logs multiplied by 100. We run Algorithm \ref{Alg:DualMirrorLearning} using traditional subgradient descent trying the fixed step sizes $\{1*10^{-i}\}_{i=0}^3 \cup \{ 0.5 * 10^{-i} \}_{i=0}^3$ and $\{ 0.25 + 0.05 *i\}_{i=0}^{25}$ as $\gamma$ parameters for the Greedy Heuristic \ref{Alg:GreedyHeuristic}. We run 100 simulations for each parameter and method pair. Each simulation is defined by the price advertisers would pay per conversion, which is the $q$ vector in Algorithm \ref{Alg:2ndSimulator}. We sample $q_k$ i.i.d. from $\mathrm{Uniform}(0.5, 1.5)$ for all $k \in [K]$. We relaxed the ending condition of Algorithm \ref{Alg:DualMirrorLearning} by allowing advertisers to overspend at most on one iteration. After that iteration, we consider an advertiser's budget as depleted and do not bid on behalf of it until the simulation's end. The final parameters chosen for Algorithms \ref{Alg:DualMirrorLearning} and \ref{Alg:GreedyHeuristic} were those that achieved the highest average profit.

An advertiser's budget depletion time correlates with its relative total maximum budget, fact that is shown in  Figure \ref{Fig:TimeVsRelativeBudget}. The x-axis is in logarithmic scale and shows the proportion of an advertiser budget w.r.t. the highest budget between all advertisers. Observe that as the relative budget increases, the average depletion time gets closer to the simulation end ($T =21073$).  

\begin{figure}
\centering
    \includegraphics[width=0.6\textwidth]{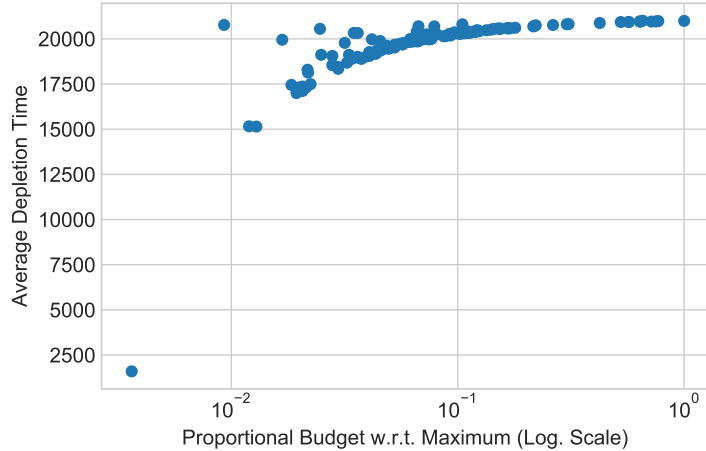}
    \caption{The x-axis in the figure shows the proportion of an advertiser budget w.r.t. the highest budget between all advertisers (shown on a logarithmic scale). \label{Fig:TimeVsRelativeBudget}}
\end{figure}

Finally, we run this experiment using a SLURM managed Linux cluster. We tried 120 parameters combinations for the conversion prediction architecture, running each parameter configuration for 25 epochs. Each parameter configuration took approximately 40 min to run using a Nvidia K80 GPU plus two Intel Xeon 4-core 3.0 Ghz (we used eight GPUs in parallel having a total run time of approximately 12 hours). For the experiment itself,  we tried nine different step-sizes to run the subgradient descent step using Algorithm \ref{Alg:DualMirrorLearning} and 26 $\gamma$ values for \ref{Alg:GreedyHeuristic},  each configuration running 100 simulations. We used several cluster nodes each having 64GB of RAM and two Xeon 12-core Haswell with 2.3 Ghz per core. If we had used just one node it would have taken approximately 160 hours to run all required configurations.

\subsection{Linear Contextual Bandits Experiment}\label{Subsec:ExtraExpResults}

We now describe in detail the methods used to implement Step 1. of Algorithm \ref{Alg:DualMirrorLearning}. First, let $y^t$ be the variable that takes the value of one if an action is taken at period $t$ and zero otherwise. Also, remember that $i(t) \in [d]$ is the action taken at period $t$ (if any), and $r^t$  the revenue observed at period $t$. We implemented Step 1. of Algorithm \ref{Alg:DualMirrorLearning} in the following ways.
\begin{enumerate}
\item Gaussian Thompson Sampling as in \citet{agrawal2013thompson}. Define $B(1) = I_d$ with $I_d$ the identity matrix of size $d$, and $\hat{\theta}^1 = (1/\sqrt{d},\dots,1/\sqrt{d})$. The Thompson Sampling procedure is composed of two steps which are updating a prior and sampling from a Gaussian posterior. We update the prior as follows. If $y^t =1$, make $B(t+1) = I_d + \sum_{s \in [t]: y^s = 1} W_{i(s)}^s (W_{i(s)}^s)^T$ and $\hat{\theta}^{t+1} = B(t+1)^{-1} (\sum_{s \in [t]: y^s = 1}  W_{i(s)}^s r^t)$, otherwise $B(t+1) = B(t)$ and $\hat{\theta}^{t+1} = \hat{\theta}^{t}$.  After the prior update, we sample $\theta^t$ from $\mathcal{N}(\hat{\theta}^{t}, \nu^2 B(t)^{-1})$ where $\mathcal{N}(\cdot,\cdot)$ represents a normal distribution defined by its mean and covariance matrix, and $\nu>0$ a constant chosen as follows. When no randomness was added to the observed revenue term, we used $\nu = 0.1$ (remember that we could add randomness to both the matrices $W^t$ and the observed revenue separately). When randomness was added to the observed revenue, we used $\nu = \tfrac{\mathrm{rev\_err}}{10} * \sqrt{\log{T} * n }$ with $\mathrm{rev\_err} = 0.1$ or $0.5$ depending if a $\mathrm{Uniform}(-0.1,0.1)$ or $\mathrm{Uniform}(-0.5,0.5)$ is added to the observed revenue term respectively. (The latter form of choosing $\nu$ was inspired on \citet{agrawal2013thompson} which uses $\nu = R \sqrt{9 n \log{T}}$ to prove a regret bound for Thompson Sampling for linear contextual bandits without constraints.)
\item Least squares. Same as Thompson Sampling as described above, but Step 1. of Algorithm \ref{Alg:DualMirrorLearning} uses $\theta^{t} = \hat{\theta}^t$. (This update is a core element of many learning approaches for linear contextual bandits \citet{agrawal2013thompson, agrawal2016linear} and can be understood as a Least Squares step). 
\item Ridge regression. We use the Least Squares procedure as defined above for the first $\sqrt{T}/2$ actions, and then solve a ridge regression problem. We solve a ridge regression problem at Step 1. of iteration $t$ using the set $\{ W_{i(s)}^s, r^s \}_{s \in [t-1]: y^s = 1}$ with an $\ell_2$ penalization parameter of $\alpha = 0.001$.
\item Ridge regression plus error. Same method as above but adds noise to the $\theta^t$ obtained from the ridge regression problem. We add an i.i.d. $\mathrm{Uniform}(-0.3,0.3)/\sqrt{\sum_{s=1}^t y^s}$ term to each coordinate of $\theta^t$.
\item Known $\theta^*$. Algorithm \ref{Alg:DualMirrorLearning} using $\theta^t = \theta^*$ for all $t \in [T]$.
\end{enumerate}


Figures \ref{Fig:MvAvg5_10} and \ref{Fig:MvAvg50_50} show how the different methods perform for $(d \times n)$ being $(5,10)$ and $(50,50)$ when $T = 10,000$, respectively. Each element of the x-axis represents a moving window composed of $250$ iterations. The x-axis is composed of $9751$ ticks . The y-axis shows the average relative revenue obtained in a window with respect to the proportional best revenue that could have been obtained ($\mathrm{OPT}(\calP) \cdot \tfrac{250}{10000}$). Importantly, the number of actions a method takes can vary between windows, which explains the following two facts. First, an initial revenue spike as many actions are taken when a simulation starts. The latter occurs as we took $\lambda^1=0$ which makes the cost component in Step 3. of Algorithm \ref{Alg:DualMirrorLearning} zero. Second, a method may obtain a higher average revenue on a window than $ \mathrm{OPT}(\calP) \cdot \tfrac{250}{10000}$ if more than 'average' actions are taken on that window.

Tables \ref{Tab:AllResT1k}, \ref{Tab:AllResT5k}, \ref{Tab:AllRes10k} show the average total relative revenue obtained for the different combinations of $d \times n$ and uncertainty used with respect to $\mathrm{OPT}(\calP)$. In general, as long as the budget is spent properly, the revenue obtained by the `Known $\theta^*$' method when $W^t = W$ for all $t \in [T]$ should match $\mathrm{OPT}(\calP)$. The latter as the best action to take is always the same. In the case when we still have $W^t = W$ for all $t \in [T]$, but the observed revenue has randomness, the `Known $\theta^*$' method may obtain a higher total revenue than $\mathrm{OPT}(\calP)$.


Finally, we run this experiment using a SLURM managed Linux cluster. We used four nodes each having 64 GB of RAM and 20 cores of 2.5Ghz. We parallelized the code to run each combination of experiment setting and simulation number as a different run (the run-time was mostly spent on sampling from a Gaussian distribution for Thompson Sampling and solving Ridge Regression problems with thousands of points). The total running time was 12 hours. Processing the results was done in a local computer (Mac Book Pro 2015 version), spending around 30 minutes to aggregate the results obtained from the cluster.

\begin{figure}
    \includegraphics[width=1.0\textwidth]{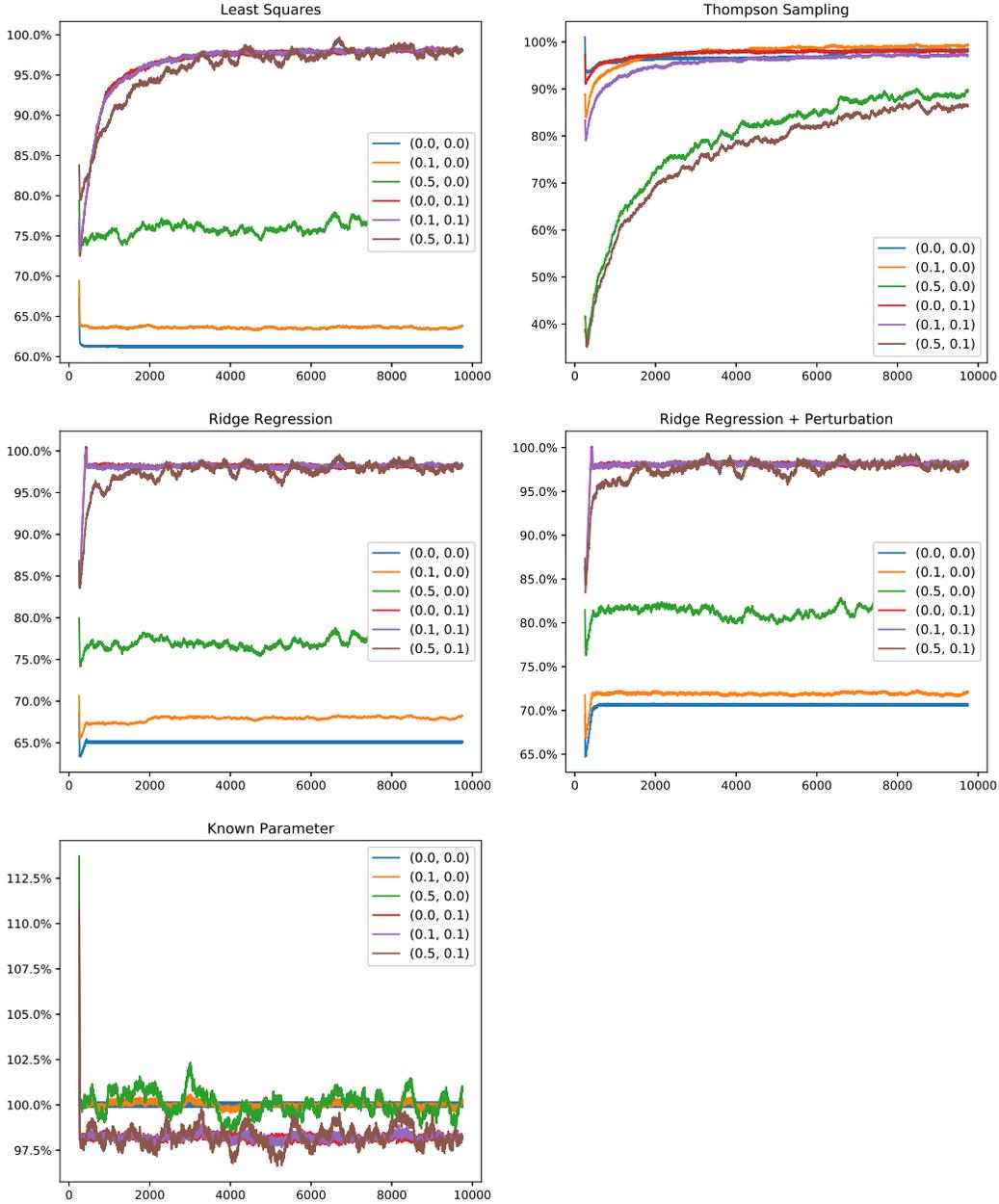}
    \vspace*{-15mm}
    \caption{Moving average revenue for windows of $250$ iterations against the proportional best average revenue possible using  $d =5$, $n = 10$. \label{Fig:MvAvg5_10}}
\end{figure}
\newpage

\begin{figure}
    \includegraphics[width=1.0\textwidth]{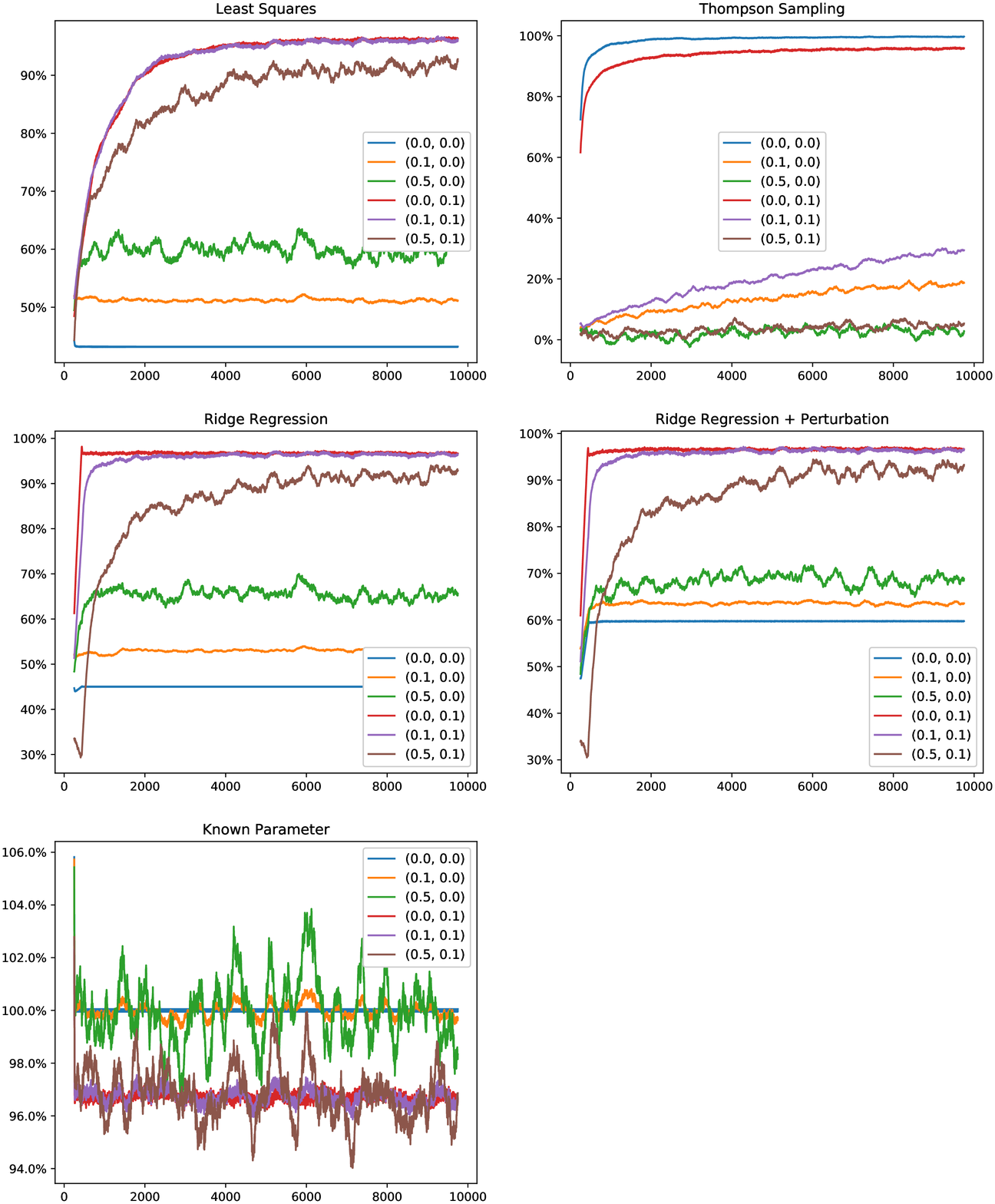}
    \vspace*{-15mm}
    \caption{Moving average revenue for windows of $250$ iterations against the proportional best average revenue possible using  $d =50$, $n = 50$. \label{Fig:MvAvg50_50}}
\end{figure}

\newpage

 \begin{table}[h]
    \begin{tabular}{| p{4.6 cm} | p{1.35 cm} | p{1.35 cm} | p{1.35 cm} | p{1.35 cm} | p{1.35 cm} | p{1.35 cm} | p{1.35 cm} |}
    \hline
    $T = 1,000$ & $d \times n$ &  $(0.0, 0.0)$ & $(0.1, 0.0)$ & $(0.5, 0.0)$ & $(0.0, 0.1)$ & $(0.1, 0.1)$ & $(0.5, 0.1)$ \\ \hline \hline
Least Squares & 5 $\times$ 5 & 76.2\% & 77.6\% & 84.2\% & 78.9\% & 79.4\% & 79.3\% \\ \hline
Thompson Sampling & 5 $\times$ 5 & 95.2\% & 74.2\% & 21.9\% & 85.4\% & 65.4\% & 19.0\% \\ \hline
Ridge Regression & 5 $\times$ 5 & 77.6\% & 79.0\% & 85.4\% & 90.4\% & 89.8\% & 83.5\% \\ \hline
Ridge Reg. + Perturbation & 5 $\times$ 5 & 80.8\% & 80.9\% & 86.0\% & 90.3\% & 89.6\% & 83.5\% \\ \hline
Known Parameter & 5 $\times$ 5 & 99.9\% & 100.1\% & 100.8\% & 92.4\% & 92.4\% & 92.2\% \\ \hline
 \hline
Least Squares & 5 $\times$ 10 & 60.9\% & 63.2\% & 73.4\% & 80.1\% & 80.5\% & 82.6\% \\ \hline
Thompson Sampling & 5 $\times$ 10 & 94.2\% & 90.3\% & 51.2\% & 89.4\% & 85.9\% & 48.3\% \\ \hline
Ridge Regression & 5 $\times$ 10 & 64.5\% & 67.3\% & 76.5\% & 93.0\% & 92.8\% & 90.0\% \\ \hline
Ridge Reg. + Perturbation & 5 $\times$ 10 & 73.9\% & 73.9\% & 81.1\% & 92.8\% & 92.6\% & 90.2\% \\ \hline
Known Parameter & 5 $\times$ 10 & 100.0\% & 100.0\% & 99.9\% & 95.5\% & 95.5\% & 95.4\% \\ \hline
 \hline
Least Squares & 10 $\times$ 5 & 70.9\% & 74.6\% & 78.1\% & 83.7\% & 84.0\% & 82.9\% \\ \hline
Thompson Sampling & 10 $\times$ 5 & 94.5\% & 91.0\% & 50.6\% & 89.0\% & 84.6\% & 47.6\% \\ \hline
Ridge Regression & 10 $\times$ 5 & 71.0\% & 75.2\% & 78.8\% & 92.5\% & 92.5\% & 89.0\% \\ \hline
Ridge Reg. + Perturbation & 10 $\times$ 5 & 82.0\% & 84.3\% & 84.7\% & 92.3\% & 92.3\% & 89.7\% \\ \hline
Known Parameter & 10 $\times$ 5 & 99.9\% & 99.9\% & 99.7\% & 94.5\% & 94.4\% & 94.2\% \\ \hline
 \hline
Least Squares & 10 $\times$ 10 & 58.5\% & 62.5\% & 72.0\% & 75.7\% & 75.3\% & 76.7\% \\ \hline
Thompson Sampling & 10 $\times$ 10 & 92.2\% & 66.6\% & 14.7\% & 86.1\% & 62.4\% & 15.1\% \\ \hline
Ridge Regression & 10 $\times$ 10 & 59.0\% & 63.4\% & 72.4\% & 91.2\% & 90.4\% & 84.0\% \\ \hline
Ridge Reg. + Perturbation & 10 $\times$ 10 & 72.3\% & 73.6\% & 77.2\% & 90.9\% & 90.2\% & 84.1\% \\ \hline
Known Parameter & 10 $\times$ 10 & 100.0\% & 99.9\% & 99.7\% & 93.9\% & 93.9\% & 93.9\% \\ \hline
 \hline
Least Squares & 25 $\times$ 25 & 44.0\% & 49.7\% & 54.0\% & 64.5\% & 66.0\% & 58.9\% \\ \hline
Thompson Sampling & 25 $\times$ 25 & 89.1\% & 5.4\% & 0.3\% & 74.4\% & 6.1\% & 0.7\% \\ \hline
Ridge Regression & 25 $\times$ 25 & 44.1\% & 50.6\% & 56.0\% & 86.1\% & 78.4\% & 46.8\% \\ \hline
Ridge Reg. + Perturbation & 25 $\times$ 25 & 69.5\% & 66.7\% & 61.4\% & 85.4\% & 78.0\% & 46.3\% \\ \hline
Known Parameter & 25 $\times$ 25 & 100.0\% & 100.0\% & 99.7\% & 90.7\% & 90.8\% & 91.3\% \\ \hline
 \hline
Least Squares & 25 $\times$ 50 & 41.4\% & 48.1\% & 56.1\% & 64.7\% & 65.1\% & 68.1\% \\ \hline
Thompson Sampling & 25 $\times$ 50 & 89.0\% & 19.6\% & 3.3\% & 82.4\% & 20.5\% & 3.7\% \\ \hline
Ridge Regression & 25 $\times$ 50 & 43.3\% & 50.3\% & 62.7\% & 90.0\% & 85.8\% & 69.7\% \\ \hline
Ridge Reg. + Perturbation & 25 $\times$ 50 & 62.8\% & 64.0\% & 68.8\% & 89.5\% & 85.5\% & 69.1\% \\ \hline
Known Parameter & 25 $\times$ 50 & 100.0\% & 100.1\% & 100.3\% & 93.7\% & 93.8\% & 94.1\% \\ \hline
 \hline
Least Squares & 50 $\times$ 25 & 49.1\% & 53.7\% & 59.1\% & 67.7\% & 68.1\% & 68.3\% \\ \hline
Thompson Sampling & 50 $\times$ 25 & 92.2\% & 18.3\% & 2.6\% & 82.7\% & 19.5\% & 2.8\% \\ \hline
Ridge Regression & 50 $\times$ 25 & 51.9\% & 55.9\% & 64.6\% & 89.4\% & 85.3\% & 67.7\% \\ \hline
Ridge Reg. + Perturbation & 50 $\times$ 25 & 70.8\% & 69.7\% & 71.8\% & 89.1\% & 85.2\% & 67.6\% \\ \hline
Known Parameter & 50 $\times$ 25 & 100.0\% & 100.0\% & 100.0\% & 92.9\% & 92.9\% & 92.6\% \\ \hline
 \hline
Least Squares & 50 $\times$ 50 & 42.0\% & 52.2\% & 55.7\% & 62.3\% & 63.7\% & 58.7\% \\ \hline
Thompson Sampling & 50 $\times$ 50 & 87.5\% & 5.4\% & 1.5\% & 76.0\% & 6.7\% & 1.5\% \\ \hline
Ridge Regression & 50 $\times$ 50 & 43.6\% & 54.5\% & 62.1\% & 86.8\% & 76.8\% & 47.7\% \\ \hline
Ridge Reg. + Perturbation & 50 $\times$ 50 & 67.2\% & 68.8\% & 66.7\% & 86.0\% & 76.6\% & 47.2\% \\ \hline
Known Parameter & 50 $\times$ 50 & 100.0\% & 100.0\% & 100.0\% & 92.0\% & 91.9\% & 91.6\% \\ \hline
    \end{tabular}
    \caption{All percentages shown are the average revenue over 100 simulations divided by the best average revenue achievable ($\mathrm{OPT}(\calP)$).  \label{Tab:AllResT1k} }
\end{table}
 
\newpage
 
  \begin{table}[h]
    \begin{tabular}{| p{4.6 cm} | p{1.35 cm} | p{1.35 cm} | p{1.35 cm} | p{1.35 cm} | p{1.35 cm} | p{1.35 cm} | p{1.35 cm} |}
    \hline
    $T = 5,000$ & $d \times n$ &  $(0.0, 0.0)$ & $(0.1, 0.0)$ & $(0.5, 0.0)$ & $(0.0, 0.1)$ & $(0.1, 0.1)$ & $(0.5, 0.1)$ \\ \hline \hline
Least Squares & 5 $\times$ 5 & 76.7\% & 79.4\% & 87.1\% & 91.6\% & 91.5\% & 90.5\% \\ \hline
Thompson Sampling & 5 $\times$ 5 & 98.7\% & 88.6\% & 42.6\% & 93.2\% & 80.9\% & 36.7\% \\ \hline
Ridge Regression & 5 $\times$ 5 & 78.1\% & 79.4\% & 86.5\% & 95.1\% & 94.9\% & 92.4\% \\ \hline
Ridge Reg. + Perturbation & 5 $\times$ 5 & 80.0\% & 79.7\% & 87.2\% & 94.9\% & 94.8\% & 92.3\% \\ \hline
Known Parameter & 5 $\times$ 5 & 100.0\% & 100.0\% & 99.9\% & 95.9\% & 95.9\% & 96.0\% \\ \hline
 \hline
Least Squares & 5 $\times$ 10 & 61.2\% & 63.5\% & 75.3\% & 93.1\% & 93.3\% & 92.6\% \\ \hline
Thompson Sampling & 5 $\times$ 10 & 97.3\% & 96.0\% & 71.7\% & 95.8\% & 93.0\% & 68.6\% \\ \hline
Ridge Regression & 5 $\times$ 10 & 64.9\% & 67.9\% & 79.6\% & 96.5\% & 96.5\% & 95.5\% \\ \hline
Ridge Reg. + Perturbation & 5 $\times$ 10 & 71.0\% & 71.9\% & 80.4\% & 96.4\% & 96.4\% & 95.3\% \\ \hline
Known Parameter & 5 $\times$ 10 & 100.0\% & 100.0\% & 100.0\% & 97.5\% & 97.5\% & 97.4\% \\ \hline
 \hline
Least Squares & 10 $\times$ 5 & 71.3\% & 72.3\% & 80.9\% & 93.6\% & 93.4\% & 93.4\% \\ \hline
Thompson Sampling & 10 $\times$ 5 & 96.0\% & 96.4\% & 70.4\% & 95.2\% & 92.1\% & 67.1\% \\ \hline
Ridge Regression & 10 $\times$ 5 & 71.5\% & 73.7\% & 81.5\% & 96.3\% & 96.2\% & 95.5\% \\ \hline
Ridge Reg. + Perturbation & 10 $\times$ 5 & 77.0\% & 80.1\% & 83.0\% & 96.2\% & 96.1\% & 95.3\% \\ \hline
Known Parameter & 10 $\times$ 5 & 100.0\% & 100.0\% & 100.1\% & 97.0\% & 97.0\% & 97.0\% \\ \hline
 \hline
Least Squares & 10 $\times$ 10 & 58.9\% & 63.3\% & 70.0\% & 91.0\% & 90.9\% & 91.3\% \\ \hline
Thompson Sampling & 10 $\times$ 10 & 96.2\% & 83.9\% & 29.5\% & 94.2\% & 80.7\% & 30.8\% \\ \hline
Ridge Regression & 10 $\times$ 10 & 59.4\% & 63.7\% & 70.4\% & 95.6\% & 95.4\% & 93.3\% \\ \hline
Ridge Reg. + Perturbation & 10 $\times$ 10 & 69.2\% & 69.8\% & 74.1\% & 95.5\% & 95.4\% & 93.1\% \\ \hline
Known Parameter & 10 $\times$ 10 & 100.0\% & 100.0\% & 100.1\% & 96.7\% & 96.6\% & 96.5\% \\ \hline
 \hline
Least Squares & 25 $\times$ 25 & 44.6\% & 54.0\% & 58.6\% & 85.6\% & 85.6\% & 78.3\% \\ \hline
Thompson Sampling & 25 $\times$ 25 & 97.2\% & 12.6\% & 1.2\% & 88.6\% & 15.0\% & 1.9\% \\ \hline
Ridge Regression & 25 $\times$ 25 & 44.8\% & 54.7\% & 60.4\% & 93.4\% & 91.1\% & 76.4\% \\ \hline
Ridge Reg. + Perturbation & 25 $\times$ 25 & 64.9\% & 64.0\% & 66.2\% & 93.2\% & 90.9\% & 76.5\% \\ \hline
Known Parameter & 25 $\times$ 25 & 100.0\% & 100.1\% & 100.4\% & 95.0\% & 94.9\% & 94.7\% \\ \hline
 \hline
Least Squares & 25 $\times$ 50 & 41.5\% & 48.1\% & 57.5\% & 87.7\% & 87.4\% & 84.4\% \\ \hline
Thompson Sampling & 25 $\times$ 50 & 94.6\% & 36.2\% & 7.3\% & 93.0\% & 39.9\% & 8.6\% \\ \hline
Ridge Regression & 25 $\times$ 50 & 43.5\% & 49.9\% & 68.0\% & 95.0\% & 94.2\% & 87.8\% \\ \hline
Ridge Reg. + Perturbation & 25 $\times$ 50 & 55.7\% & 58.0\% & 74.1\% & 94.9\% & 94.1\% & 87.0\% \\ \hline
Known Parameter & 25 $\times$ 50 & 100.0\% & 99.9\% & 99.6\% & 96.5\% & 96.5\% & 96.5\% \\ \hline
 \hline
Least Squares & 50 $\times$ 25 & 49.3\% & 53.0\% & 57.8\% & 87.6\% & 87.9\% & 85.3\% \\ \hline
Thompson Sampling & 50 $\times$ 25 & 97.8\% & 34.3\% & 5.5\% & 92.3\% & 38.9\% & 7.1\% \\ \hline
Ridge Regression & 50 $\times$ 25 & 52.2\% & 55.3\% & 58.4\% & 94.6\% & 93.9\% & 86.8\% \\ \hline
Ridge Reg. + Perturbation & 50 $\times$ 25 & 66.0\% & 65.7\% & 67.8\% & 94.4\% & 93.7\% & 87.1\% \\ \hline
Known Parameter & 50 $\times$ 25 & 100.0\% & 100.0\% & 100.1\% & 96.0\% & 96.0\% & 96.0\% \\ \hline
 \hline
Least Squares & 50 $\times$ 50 & 41.9\% & 52.7\% & 60.4\% & 85.8\% & 86.2\% & 79.6\% \\ \hline
Thompson Sampling & 50 $\times$ 50 & 96.4\% & 10.0\% & 1.8\% & 89.7\% & 14.3\% & 2.7\% \\ \hline
Ridge Regression & 50 $\times$ 50 & 43.6\% & 53.2\% & 68.2\% & 94.0\% & 91.5\% & 77.9\% \\ \hline
Ridge Reg. + Perturbation & 50 $\times$ 50 & 59.9\% & 61.3\% & 71.8\% & 93.7\% & 91.4\% & 77.8\% \\ \hline
Known Parameter & 50 $\times$ 50 & 100.0\% & 100.0\% & 100.2\% & 95.5\% & 95.5\% & 95.5\% \\ \hline
    \end{tabular}
    \caption{All percentages shown are the average revenue over 100 simulations divided by the best average revenue achievable ($\mathrm{OPT}(\calP)$).   \label{Tab:AllResT5k} }
\end{table}

\newpage

 \begin{table}[h]
    \begin{tabular}{| p{4.6 cm} | p{1.35 cm} | p{1.35 cm} | p{1.35 cm} | p{1.35 cm} | p{1.35 cm} | p{1.35 cm} | p{1.35 cm} |}
    \hline
    $T = 10,000$ & $d \times n$ &  $(0.0, 0.0)$ & $(0.1, 0.0)$ & $(0.5, 0.0)$ & $(0.0, 0.1)$ & $(0.1, 0.1)$ & $(0.5, 0.1)$ \\ \hline \hline
Least Squares & 5 $\times$ 5 & 76.8\% & 79.7\% & 85.4\% & 94.7\% & 94.6\% & 93.7\% \\ \hline
Thompson Sampling & 5 $\times$ 5 & 98.8\% & 92.4\% & 52.8\% & 95.4\% & 85.8\% & 47.0\% \\ \hline
Ridge Regression & 5 $\times$ 5 & 78.2\% & 79.7\% & 87.0\% & 96.5\% & 96.4\% & 95.0\% \\ \hline
Ridge Reg. + Perturbation & 5 $\times$ 5 & 80.1\% & 80.0\% & 88.6\% & 96.4\% & 96.4\% & 95.0\% \\ \hline
Known Parameter & 5 $\times$ 5 & 100.0\% & 100.0\% & 100.2\% & 97.0\% & 97.0\% & 97.1\% \\ \hline
 \hline
Least Squares & 5 $\times$ 10 & 61.2\% & 63.5\% & 75.8\% & 95.9\% & 95.9\% & 95.4\% \\ \hline
Thompson Sampling & 5 $\times$ 10 & 96.8\% & 97.3\% & 79.0\% & 97.2\% & 95.1\% & 76.1\% \\ \hline
Ridge Regression & 5 $\times$ 10 & 65.0\% & 67.8\% & 76.8\% & 97.5\% & 97.5\% & 97.0\% \\ \hline
Ridge Reg. + Perturbation & 5 $\times$ 10 & 70.4\% & 71.7\% & 81.0\% & 97.5\% & 97.5\% & 97.0\% \\ \hline
Known Parameter & 5 $\times$ 10 & 100.0\% & 100.0\% & 100.1\% & 98.2\% & 98.2\% & 98.2\% \\ \hline
 \hline
Least Squares & 10 $\times$ 5 & 71.4\% & 73.1\% & 81.7\% & 95.9\% & 95.9\% & 95.4\% \\ \hline
Thompson Sampling & 10 $\times$ 5 & 96.7\% & 97.7\% & 77.7\% & 96.8\% & 94.3\% & 74.6\% \\ \hline
Ridge Regression & 10 $\times$ 5 & 71.6\% & 75.0\% & 82.4\% & 97.3\% & 97.3\% & 96.8\% \\ \hline
Ridge Reg. + Perturbation & 10 $\times$ 5 & 76.4\% & 80.2\% & 83.3\% & 97.3\% & 97.3\% & 96.6\% \\ \hline
Known Parameter & 10 $\times$ 5 & 100.0\% & 100.0\% & 100.0\% & 97.8\% & 97.8\% & 97.8\% \\ \hline
 \hline
Least Squares & 10 $\times$ 10 & 59.0\% & 64.5\% & 71.0\% & 94.5\% & 94.2\% & 93.5\% \\ \hline
Thompson Sampling & 10 $\times$ 10 & 96.4\% & 89.0\% & 38.8\% & 96.0\% & 86.3\% & 40.5\% \\ \hline
Ridge Regression & 10 $\times$ 10 & 59.4\% & 65.2\% & 71.8\% & 96.8\% & 96.7\% & 95.2\% \\ \hline
Ridge Reg. + Perturbation & 10 $\times$ 10 & 68.9\% & 70.4\% & 73.0\% & 96.7\% & 96.6\% & 95.0\% \\ \hline
Known Parameter & 10 $\times$ 10 & 100.0\% & 100.0\% & 100.1\% & 97.5\% & 97.5\% & 97.5\% \\ \hline
 \hline
Least Squares & 25 $\times$ 25 & 44.5\% & 53.7\% & 67.1\% & 91.4\% & 91.2\% & 84.7\% \\ \hline
Thompson Sampling & 25 $\times$ 25 & 98.4\% & 18.5\% & 1.8\% & 92.3\% & 21.2\% & 2.7\% \\ \hline
Ridge Regression & 25 $\times$ 25 & 44.7\% & 54.7\% & 65.8\% & 95.3\% & 94.0\% & 83.4\% \\ \hline
Ridge Reg. + Perturbation & 25 $\times$ 25 & 65.8\% & 63.9\% & 69.6\% & 95.1\% & 94.0\% & 83.6\% \\ \hline
Known Parameter & 25 $\times$ 25 & 100.0\% & 100.0\% & 100.0\% & 96.2\% & 96.2\% & 95.9\% \\ \hline
 \hline
Least Squares & 25 $\times$ 50 & 41.6\% & 48.0\% & 58.0\% & 92.7\% & 92.7\% & 90.4\% \\ \hline
Thompson Sampling & 25 $\times$ 50 & 97.8\% & 46.3\% & 10.4\% & 95.4\% & 50.8\% & 11.8\% \\ \hline
Ridge Regression & 25 $\times$ 50 & 43.6\% & 49.5\% & 67.1\% & 96.4\% & 96.0\% & 91.1\% \\ \hline
Ridge Reg. + Perturbation & 25 $\times$ 50 & 57.7\% & 59.2\% & 71.3\% & 96.3\% & 96.0\% & 91.2\% \\ \hline
Known Parameter & 25 $\times$ 50 & 100.0\% & 100.0\% & 100.0\% & 97.4\% & 97.4\% & 97.4\% \\ \hline
 \hline
Least Squares & 50 $\times$ 25 & 49.3\% & 53.6\% & 58.8\% & 92.5\% & 92.8\% & 90.5\% \\ \hline
Thompson Sampling & 50 $\times$ 25 & 98.6\% & 44.8\% & 7.9\% & 94.8\% & 50.2\% & 10.3\% \\ \hline
Ridge Regression & 50 $\times$ 25 & 52.3\% & 55.1\% & 65.1\% & 96.1\% & 95.7\% & 91.3\% \\ \hline
Ridge Reg. + Perturbation & 50 $\times$ 25 & 63.9\% & 62.6\% & 69.9\% & 96.0\% & 95.7\% & 91.1\% \\ \hline
Known Parameter & 50 $\times$ 25 & 100.0\% & 100.0\% & 100.0\% & 97.0\% & 97.0\% & 97.1\% \\ \hline
 \hline
Least Squares & 50 $\times$ 50 & 43.2\% & 51.2\% & 59.5\% & 91.4\% & 91.5\% & 85.8\% \\ \hline
Thompson Sampling & 50 $\times$ 50 & 98.1\% & 13.2\% & 2.3\% & 93.1\% & 19.7\% & 3.5\% \\ \hline
Ridge Regression & 50 $\times$ 50 & 44.9\% & 52.9\% & 65.0\% & 95.6\% & 94.5\% & 84.9\% \\ \hline
Ridge Reg. + Perturbation & 50 $\times$ 50 & 59.3\% & 63.2\% & 67.7\% & 95.5\% & 94.4\% & 85.2\% \\ \hline
Known Parameter & 50 $\times$ 50 & 100.0\% & 100.0\% & 99.9\% & 96.7\% & 96.7\% & 96.8\% \\ \hline
    \end{tabular}
    \caption{All percentages shown are the average revenue over 100 simulations divided by the best average revenue achievable ($\mathrm{OPT}(\calP)$).   \label{Tab:AllRes10k} }
\end{table}

\end{document}